\documentclass{article} 
\usepackage{iclr2026_conference,times}

\usepackage{hyperref}
\usepackage{url}

\usepackage{algorithm}
\usepackage{algorithmic}
\usepackage{amsfonts}
\usepackage{amsmath}
\usepackage{amssymb}
\usepackage{amsthm}
\usepackage{booktabs}
\usepackage{floatflt}
\usepackage{bbm}
\usepackage{bm}
\usepackage[capitalize,noabbrev]{cleveref}
\crefformat{equation}{(#2#1#3)}
\crefrangeformat{equation}{(#3#1#4--#5#2#6)}
\crefmultiformat{equation}{(#2#1#3)}{ and (#2#1#3)}
  {, (#2#1#3)}{ and (#2#1#3)}
\usepackage{caption}
\usepackage{circledsteps}
\usepackage{graphicx}
\hypersetup{colorlinks=true, 
            citecolor=blue!50!black,
            linkcolor=red!50!black,
            urlcolor=green!50!black}  
\usepackage{mathtools}
\usepackage{subcaption}
\usepackage{tabularx}
\usepackage{titletoc} 
\usepackage{wrapfig}
\usepackage{xcolor}
\usepackage{multirow}
\usepackage{multicol}

\newcommand{\cC}{\mathcal{C}}
\newcommand{\cD}{\mathcal{D}}
\newcommand{\cH}{\mathcal{H}}

\newcommand{\cQ}{\mathcal{Q}}
\newcommand{\cX}{\mathcal{X}}
\newcommand{\cY}{\mathcal{Y}}

\newcommand{\vc}{\bm{c}}
\newcommand{\vd}{\bm{d}}
\newcommand{\vx}{\bm{x}}

\newcommand{\ksimplex}{\Delta_K}

\newcommand{\EU}{\operatorname{EU}}

\DeclareMathOperator*{\argmax}{arg\,max}

\usepackage{xcolor} 
\definecolor{editblue}{RGB}{0, 92, 175} 
\definecolor{placeholderorange}{RGB}{230, 120, 0} 
 
\definecolor{mle}{HTML}{ff0d57}
\definecolor{method}{HTML}{1e88e5}
\definecolor{gt}{HTML}{2ca02c}

\newcommand{\dataset}[1]{\textsc{\lowercase{\mbox{#1}}}}

\newcommand{\model}[1]{\texttt{#1}}

\newtheorem{proposition}{Proposition}[section]

\newtheorem{corollary}{Corollary}[section]

\title{Efficient Credal Prediction through\\Decalibration}

\author{\\
\textbf{Paul Hofman\textsuperscript{\rm 1,\rm 2,\rm $\ast$}, 
Timo Löhr\textsuperscript{\rm 1,\rm 2,\rm $\ast$},
Maximilian Muschalik\textsuperscript{\rm 1,\rm 2}, 
Yusuf Sale\textsuperscript{\rm 1,\rm 2},} \\ 
\textbf{Eyke Hüllermeier\textsuperscript{\rm 1,\rm 2,\rm 3}} \\
\textsuperscript{\rm 1}LMU Munich, 
\textsuperscript{\rm 2}Munich Center for Machine Learning (MCML), 
\textsuperscript{\rm 3}DFKI (DSA) \\ 
\textsuperscript{\rm $\ast$}Equal Contribution \\ 
{\scriptsize\texttt{\{paul.hofman, timo.loehr, maximilian.muschalik, yusuf.sale, eyke\}@ifi.lmu.de}}}

\iclrfinalcopy 
\begin{document}

\maketitle

\begin{abstract}
    A reliable representation of uncertainty is essential for the application of modern machine learning methods in safety-critical settings. In this regard, the use of credal sets (i.e., convex sets of probability distributions) has recently been proposed as a suitable approach to representing epistemic uncertainty. 
    However, as with other approaches to epistemic uncertainty, training credal predictors is computationally complex and usually involves (re-)training an ensemble of models.
    The resulting computational complexity prevents their adoption for complex models such as  foundation models and multi-modal systems.
    To address this problem, we propose an efficient method for credal prediction that is grounded in the notion of relative likelihood and inspired by techniques for the calibration of probabilistic classifiers. For each class label, our method predicts a range of plausible probabilities in the form of an interval. To produce the lower and upper bounds of these intervals, we propose a technique that we refer to as decalibration. 
    Extensive experiments show that our method yields credal sets with strong performance across diverse tasks, including coverage–efficiency evaluation, out-of-distribution detection, and in-context learning. Notably, we demonstrate credal prediction on models such as TabPFN and CLIP---architectures for which the construction of credal sets was previously infeasible.
\end{abstract}

\section{Introduction}\label{sec:intro}
Modern machine learning (ML) is increasingly deployed in domains where decisions carry real consequences, from energy systems \citep{miele2023multi} and weather forecasting \citep{bulte2025uncertainty} to healthcare \citep{lohr2024towards}. In such domains, we need models that not only make accurate predictions, but also express what they do \emph{not} know. 
A useful starting point is the distinction between \emph{aleatoric} and \emph{epistemic} uncertainty \citep{hullermeier2021aleatoric}. Aleatoric uncertainty reflects irreducible randomness in the data. Epistemic uncertainty reflects limited knowledge and, in principle, can be reduced with more or better information. While standard probabilistic predictors capture the former, representing the latter typically requires higher-order formalism. 

Credal sets, i.e., (convex) sets of probability distributions, offer such a view. Instead of committing to a single predictive distribution, a credal predictor returns a set of plausible distributions, thereby making epistemic uncertainty explicit \citep{levi1978indeterminate, walley1991statistical}. Credal methods have appealing semantics but can be computationally demanding: many pipelines rely on ensembles or approximate posteriors to explore the space of plausible models, which is difficult to justify for large and complex models such as foundation models, \model{CLIP} \citep{radford2021CLIP} or \model{TabPFN} \citep{hollmann2022tabpfn}.

We take a different route. Building on a likelihood-based notion of plausibility \citep{loehrCredalPrediction2025}, we construct credal predictions \emph{from a single trained model} by \emph{decalibration}: we systematically perturb the model’s logits so that the resulting probabilities move away from the maximum-likelihood fit while staying within a prescribed relative-likelihood budget. For each class, this procedure yields a plausible probability interval; their product forms a credal set that reflects epistemic uncertainty without retraining (cf. \cref{fig:intro_illustration}). Intuitively, calibration adjusts probabilities to be more correct, whereas decalibration explores how far they can be pushed and still remain supported by the data.

In this light, our \textbf{contributions} are as follows. \textbf{\Circled{1}} A model-agnostic, post-hoc method for credal prediction via \emph{decalibration}, logit perturbations that produce class-wise plausible probability intervals under a relative-likelihood budget, yielding credal sets with the clear semantics “reachable without sacrificing more than a chosen fraction of training likelihood.” The procedure requires \emph{no retraining} and only logits, enabling use with large pretrained models. \textbf{\Circled{2}} Theoretically, we show the relative-likelihood feasibility set induced by logit shifts is convex (and compact on an identifiability hyperplane); that upper class-wise bounds arise from a single convex optimization; and that in a one-dimensional, class-specific shift the plausible interval endpoints solve two convex programs with monotone probabilities, implying nested credal sets as the likelihood budget tightens. \textbf{\Circled{3}} Empirically, across benchmarks our credal sets achieve favorable coverage–efficiency trade-offs and competitive out-of-distribution detection while reducing computational cost by orders of magnitude. The method enables credal prediction for previously out-of-reach models such as \model{TabPFN} and \model{CLIP}, and we introduce \emph{credal spider plots} to visualize interval-based sets beyond three classes.

\begin{figure}[t]
    \centering
    \includegraphics[width=\textwidth]{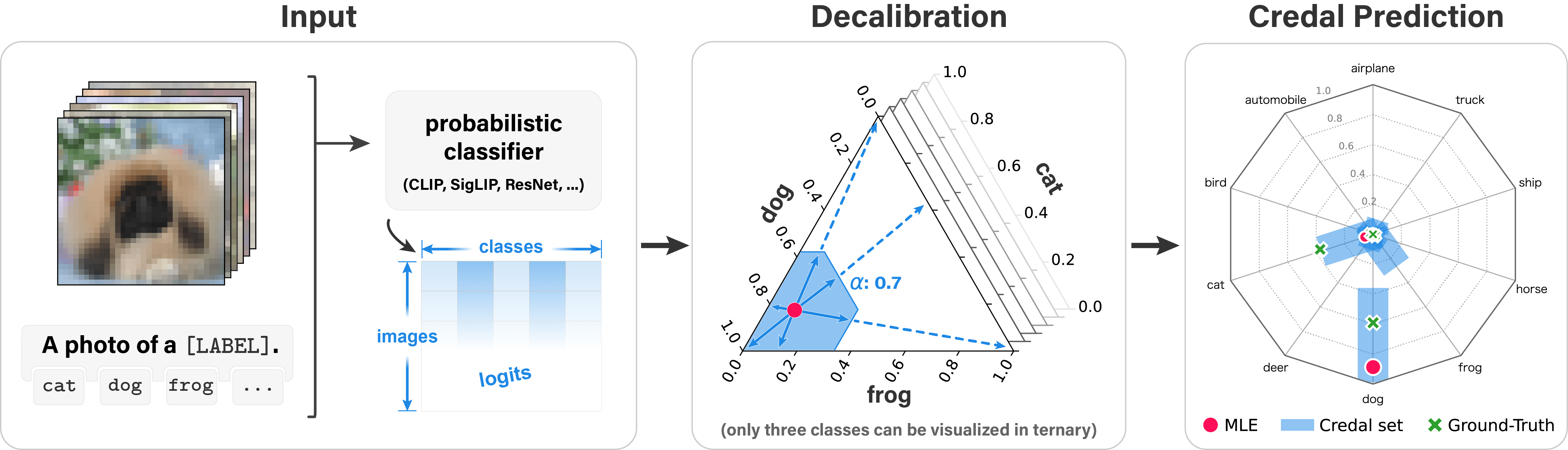}
    \caption{\textbf{Overview of Efficient Credal Prediction through Decalibration.} Given a probabilistic classifier (\textcolor{mle}{maximum likelihood estimate}), our method \textit{decalibrates} the predicted distributions by their logits. The resulting \textcolor{method}{credal set} contains the \textcolor{gt}{ground-truth distribution}, as visualized in the \textit{credal spider plot} (see \cref{app:guide-on-visualization} for an explanation). Note that we only show the decalibration of three classes for visualization purposes---in practice, all classes are decalibrated.}
    \label{fig:intro_illustration}
\end{figure}

\section{Credal Prediction based on Plausible Intervals}\label{sec:rlh-int}
We assume a supervised classification setting, where $\cX$ denotes the instance space, and $\cY = \{1, \dots, K\}$ the finite set of class labels. Further, we assume that the learner has access to (i.i.d.) training data  $\cD_\text{train} = \{(\vx^{(n)}, y^{(n)})\}_{n=1}^N \subset \cX \times \ksimplex$. In this paper, we consider a hypothesis space $\cH$ with hypotheses of the form $h: \cX \to \ksimplex$, mapping instances $\vx \in \cX$ to probability distributions over $\cY$; by $\ksimplex$ we denote the set of all probability distributions on the label space $\cY$.  Our primary concern is \emph{predictive uncertainty}, i.e., the uncertainty about the predicted label $\hat{y}_q$ at a query point $\vx_q \in \cX$. While probabilistic predictors $\cX \to \ksimplex$, $\vx_q \mapsto h(\vx_q) = p(\cdot\mid \vx_q, h)$ do account for aleatoric uncertainty, they do not represent \emph{epistemic} uncertainty about the predicted probability $p(\cdot\mid \vx_q, h)$. To make this uncertainty explicit, we move from point predictions in $\ksimplex$ to sets and consider an uncertainty-aware, set-valued predictor
$H:\cX \to \mathcal{K}(\ksimplex)$, where $\mathcal{K}(\ksimplex)$ denotes a suitable family of subsets of the simplex (e.g., nonempty closed convex sets). In this view, the prediction at $\vx_q$ is no longer a single vector $h(\vx_q)\in\ksimplex$, but a set $H(\vx_q)=\cQ_{\vx_q}\subseteq\ksimplex$, which we refer to as a \emph{credal prediction}.

Credal sets have emerged as a compelling representation of (epistemic) uncertainty in contemporary machine learning research, yet there is no consensus on how to \emph{construct} them in a principled and scalable way.
We aim for a construction that is $(i)$ statistically well-founded, $(ii)$ semantically transparent, and $(iii)$ computationally feasible for modern large models. 
Many existing pipelines either rely on Bayesian posteriors \citep{caprioCredalBNNs2024}, thus inheriting prior sensitivity and computational burden, or on ad-hoc ensembling and heuristics that offer weak interpretability \citep{wangCredalWrapper2025}. In contrast, following \cite{loehrCredalPrediction2025}, we adopt a \emph{likelihood}-based notion of plausibility that is prior-free, data-driven, and well established in statistical inference. Concretely, relative (normalized) likelihood provides a scalar measure of model plausibility: a model is considered plausible at level $\alpha \in (0,1]$ if its likelihood is at least an $\alpha$-fraction of the maximum likelihood in the model class. This yields an $\alpha$-indexed family of \emph{plausibility regions} in parameter space, whose images in prediction space induce (class-wise) \emph{plausible probability intervals}. These intervals serve as the \emph{inputs} that generate a credal set in the simplex. 

With respect to our desiderata $(i)$–$(iii)$, \citet{loehrCredalPrediction2025} already address $(i)$ and $(ii)$: the likelihood ratio supplies a prior-free, data-driven evidential scale that is standard in statistics, and normalizing by the maximum likelihood yields nested, interpretable $\alpha$-cuts \citep{antonucciLikelihoodBased2012}. This viewpoint is well established, likelihood ratios underpin classical confidence regions and tests and admit a clear calibration narrative (e.g., \citealp{wilks1938large,cox1979theoretical, royall2017statistical, edwards2018likelihood}), thereby providing both principledness and transparency. 
What remains largely open is $(iii)$: \emph{computational feasibility}. For modern large models (foundation models, large language models, and multi-modal systems), retraining ensembles or running costly Bayesian pipelines is often prohibitive. This work aims to close this gap by deriving efficient credal predictions while retaining the likelihood-based semantics. We first fix notation and briefly recall the (relative) likelihood-based construction. 

Let $L:\cH \to [0,\infty)$ denote the (empirical) likelihood of a hypothesis on $\cD_{\text{train}}$, and let
\begin{align*}
\gamma(h) \;:=\; \frac{L(h)}{\sup_{h\in\cH} L(h)}\in[0,1].
\end{align*}
Thus $\gamma(h)$ is the \emph{relative likelihood} (likelihood ratio) of $h$ with respect to a maximum-likelihood solution: $\gamma(h)=1$ for any MLE (if it exists) and decreases as the fit worsens; equivalently, $\log\gamma(h)$ is the log-likelihood gap, and $-2\log\gamma(h)$ is the usual likelihood-ratio statistic.

For $\alpha\in(0,1]$, define the \emph{plausible model set} (\emph{viz.} relative-likelihood $\alpha$-cut)
\begin{align*}
\cC_\alpha \;:=\; \{\, h\in\cH : \gamma(h)\ge \alpha \,\}.
\end{align*}
Given $\vx \in \cX$, the predictive image of $\cC_\alpha$ is $\cQ_{\vx,\alpha} \;:=\; \{\, p(\cdot \mid \vx, h) : h\in\cC_\alpha \,\} \;\subseteq\; \ksimplex$. A convenient class-wise summary of $\cQ_{\vx,\alpha}$ is given by the marginal extrema
\begin{align} \label{eq:interval_bounds}
\underline p_k(\vx) \;:=\; \inf_{h\in\cC_\alpha} p_k(\vx, h),
\qquad
\overline p_k(\vx) \;:=\; \sup_{h\in\cC_\alpha} p_k(\vx, h),
\quad k=1,\dots,K.
\end{align}
We then define the \emph{box credal set} at $\vx$ as
\begin{align}
\label{eq:box_credal}
\Box_{\vx,\alpha} \;:=\; \Big\{\, p\in\ksimplex \;:\; \underline p_k(\vx)\le p_k \le \overline p_k(\vx)\ \ \forall k \,\Big\}.
\end{align}
By construction, $\cQ_{\vx,\alpha}\subseteq \Box_{\vx,\alpha}$; thus, the box is a tractable outer approximation that preserves all classwise extrema. We state a simple, yet illustrative monotonicity property: 

\begin{proposition}\label{prop:nestedness}
If $0<\alpha_2\le \alpha_1\le 1$, then
$\cC_{\alpha_1}\subseteq \cC_{\alpha_2}$ and $\cQ_{\vx,\alpha_1}\subseteq \cQ_{\vx,\alpha_2}$. Thus, for all $k$,
\begin{align*}
\underline p_k(\vx;\alpha_1)\ \ge\ \underline p_k(\vx;\alpha_2)
\quad\text{and}\quad
\overline p_k(\vx;\alpha_1)\ \le\ \overline p_k(\vx;\alpha_2).
\end{align*}
If a maximum-likelihood estimator $h^{\mathrm{ML}}\in\cH$ exists, then $\cQ_{\vx,1}=\{p_k(\vx, h^{\mathrm{ML}})\}$ and
$[\underline p_k(\vx;1),\overline p_k(\vx;1)]=\{p_k(\vx, h^{\mathrm{ML}})\}$.
As $\alpha\downarrow 0$, $\cQ_{\vx,\alpha}\to \{\,h(\vx):L(h)>0\,\}$, and the intervals expand accordingly to the coordinatewise infima/suprema over that limit set.
\end{proposition}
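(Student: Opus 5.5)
The nestedness claims follow directly from the definition of the $\alpha$-cut. If $\alpha_2\le\alpha_1$ and $h\in\cC_{\alpha_1}$, then $\gamma(h)\ge\alpha_1\ge\alpha_2$, so $h\in\cC_{\alpha_2}$. Taking predictive images preserves inclusion, which gives $\cQ_{\vx,\alpha_1}\subseteq\cQ_{\vx,\alpha_2}$. For the interval endpoints I would use that an infimum over a larger set can only be smaller and a supremum only larger. Applied to $h\mapsto p_k(\vx,h)$, this yields $\underline p_k(\vx;\alpha_1)\ge\underline p_k(\vx;\alpha_2)$ and $\overline p_k(\vx;\alpha_1)\le\overline p_k(\vx;\alpha_2)$. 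No assumptions on $\cH$ are needed for this part.

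For the case $\alpha=1$, the set $\cC_1$ consists exactly of the hypotheses with $L(h)=\sup L$, i.e.\ the set of all maximizers. If the MLE is unique, then $\cC_1=\{h^{\mathrm{ML}}\}$, and $\cQ_{\vx,1}$ together with each interval collapses to the stated singleton. This is the step I expect to need care. If there are several maximizers, the statement as written requires them all to agree at $\vx$. I would therefore make explicit that "the" MLE means either that the maximizer is unique, or that all maximizers induce the same prediction at $\vx$. The latter holds, for instance, on the identifiability hyperplane used later for logit shifts, where the likelihood is strictly concave. Under either reading, $\inf$ and $\sup$ over $\cC_1$ of $p_k(\vx,\cdot)$ both equal $p_k(\vx,h^{\mathrm{ML}})$.

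For $\alpha\downarrow0$, I would interpret the convergence as a monotone limit of sets. By the nestedness part, $\cC_\alpha$ increases as $\alpha$ decreases, and
\begin{align*}
\bigcup_{\alpha>0}\cC_\alpha=\{h:\gamma(h)>0\}=\{h:L(h)>0\},
\end{align*}
assuming $0<\sup L<\infty$. Taking images commutes with unions, so $\bigcup_{\alpha>0}\cQ_{\vx,\alpha}=\{h(\vx):L(h)>0\}$, which is the set-theoretic limit of an increasing family. For the endpoints, $\overline p_k(\vx;\alpha)$ is monotone in $\alpha$ and bounded by $1$. Its limit is therefore the supremum over the union, since any $h$ with $L(h)>0$ lies in $\cC_\alpha$ once $\alpha\le\gamma(h)$. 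The symmetric argument gives the infima, completing the proof.
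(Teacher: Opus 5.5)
Your proof is correct and follows essentially the same route as the paper's: inclusion of the $\alpha$-cuts straight from the definition of $\gamma$, monotonicity of infima and suprema under set inclusion, the same caveat for $\alpha=1$ (a unique MLE, or all maximizers agreeing at $\vx$), and the increasing-union argument for $\alpha\downarrow 0$. Your explicit assumption $0<\sup L<\infty$ and your short argument that the endpoint limits equal the extrema over the union (since $h\in\cC_\alpha$ once $\alpha\le\gamma(h)$) spell out steps the paper states without justification.
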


Proposition~\ref{prop:nestedness} shows that increasing the plausibility threshold $\alpha$ yields nested prediction sets and monotonically tighter classwise intervals. This monotonicity underpins the so-called \emph{coverage–efficiency} trade-off used in our evaluation: larger $\alpha$ typically lowers coverage but improves efficiency (smaller sets), allowing $\alpha$ to be tuned to the desired operating point. In this vein, it is natural to evaluate set-valued predictions along two axes, \emph{coverage} and \emph{efficiency}.

\textbf{Coverage.} Given a set-valued predictor $H$, 
coverage is the probability that the ground-truth conditional distribution $p^\star(\cdot\mid \vx)$ is contained in the predicted set:
\begin{align}\label{metric:cov}
C(H)\;=\;\mathbb{E}\big[\mathbf{1}\{\,p^\star(\cdot\mid \vx)\in H(\vx)\,\}\big],
\end{align}
where the expectation is over the marginal of $\vx$ on $\cX$.

\textbf{Efficiency.} To reward informative (i.e., small) sets, we use the complement of the average interval width across classes (positively oriented: higher is better):
\begin{align}\label{metric:eff}
E(H)\;=\;1-\mathbb{E}\!\left[\frac{1}{K}\sum_{k=1}^K\big(\overline p_k(\vx)-\underline p_k(\vx)\big)\right].
\end{align}

Pragmatically, constructing relative-likelihood $\alpha$-cuts by training ensembles to hit prescribed likelihood ratios is principled, but computationally heavy, and hypotheses tend to cluster near the MLE unless $\alpha\approx 1$, which is a poor fit for modern large models. To overcome this limitation, we propose an efficient method for credal prediction that is grounded in the notion of relative likelihood and inspired by techniques for the calibration of probabilistic classifiers, which we will call \emph{decalibration}.

\section{Efficient Credal Prediction through Decalibration}\label{sec:decalib}

We propose \emph{decalibration} as a single-model route to plausibility: starting from the maximum-likelihood predictor $h^{\mathrm{ML}}$, we deliberately distort predicted probabilities, thereby moving from better predictions (high likelihood) to worse ones (lower likelihood). However, to keep predictions plausible, we make sure to remain within a prescribed relative-likelihood budget $\alpha\in(0,1]$. The probabilities reachable under this budget induce, for any query $\vx$, classwise plausible intervals and hence a box credal set, without retraining or ensembling. Broadly speaking, we answer the following question: to what extent can we decrease or increase the predicted probabilities for a specific class before reaching a state where predictions become unplausible (have relative likelihood $< \alpha$)?

Thus, the idea is to keep the likelihood semantics but make the search cheap: rather than training many “plausible” models, we start from the MLE and deliberately push its probabilities toward less likely configurations, while enforcing a global relative-likelihood budget $\alpha$, as illustrated in \cref{fig:intro_illustration}. This turns the classical likelihood-ratio view (“models within an LR ball are plausible”) into a post-hoc exploration of the model’s output space. As the budget is imposed on the training likelihood, any probability vector we produce is still supported by the data up to the chosen evidence level.

Operationally, we implement the exploration with simple, low-dimensional transforms of the MLE’s logits, expressive enough to traverse a wide range of alternative class probabilities yet requiring neither retraining nor access to the backbone’s gradients. 
Compared to ensembles that approximate the same plausibility region by re-optimizing parameters, decalibration is orders of magnitude faster and model-agnostic, i.e., it works on top of any pretrained classifier. It is particularly suited to inference-only or API-gated systems, foundation models, LLMs, and multimodal encoders, where parameters are frozen or proprietary and retraining, fine-tuning, or ensembling is impractical or disallowed. 
At the same time, the outputs inherit clear interpretation, “probabilities reachable without losing more than an $\alpha$-fraction of likelihood”, and the resulting intervals are nested as $\alpha$ increases.

Among many possible post–hoc maps on probabilities, we instantiate decalibration with a simple yet expressive family that operates on \emph{logits}: we add a global bias vector $\vc\in\mathbb R^K$ to every example’s logits (both train and test) and then apply softmax. This choice is model–agnostic (no retraining or gradients), preserves the learned representation, and induces a concave change in training log–likelihood, which in turn makes the $\alpha$–plausible set convex. Intuitively, $\vc$ effects controlled \emph{odds tilts} between classes; its softmax invariance under $\vc\mapsto \vc+t\mathbf 1$ yields a natural identifiability hyperplane and keeps optimization well posed as we shall demonstrate now. 
Formally, consider a predictor that produces logits $z^{(n)}\in\mathbb R^K$ on the training set and logits $z(\vx)\in\mathbb R^K$ for any test point $\vx$. Let $\vc=(c_1,\dots,c_K)^\top\in\mathbb R^K$ and define, for each training point $n$ and each class $j$,
\begin{align*}
p^{(n)}_j(\vc)\;=\;\frac{\exp\!\big(z^{(n)}_j+c_j\big)}{\sum_{k=1}^K \exp\!\big(z^{(n)}_k+c_k\big)}\,,
\qquad
p_j(\vx;\vc)\;=\;\frac{\exp\!\big(z_j(\vx)+c_j\big)}{\sum_{k=1}^K \exp\!\big(z_k(\vx)+c_k\big)}\,.
\end{align*}
Set $\Delta\ell(\vc)\;=\;\sum_{n=1}^N\Big(\log p^{(n)}_{\,y^{(n)}}(\vc)-\log p^{(n)}_{\,y^{(n)}}(0)\Big)$ and $F(\alpha)\;=\;\big\{\,\vc\in\mathbb R^K:\ \Delta\ell(\vc)\ge \log\alpha\,\big\}$. Further, note that $\Delta\ell(\vc+t\mathbf 1)=\Delta\ell(\vc)$ and $p_j(\vx;\vc+t\mathbf{1})=p_j(\vx;\vc)$ for all $t\in\mathbb{R}$.

Concretely, this decalibration procedure fixes the family and its feasibility region $F(\alpha)$; what follows establishes the key structural properties that make the method tractable: smooth concavity of $\Delta\ell$, convexity/compactness of the feasible set, and a convex–optimization characterization of the upper credal bound (Proposition \ref{prop:credal-endpoints-multi}). We then specialize to the one–dimensional slice $\vc=t\,\mathbf e_k$, yielding endpoint formulas and simple convex programs for the scalar case (Corollary \ref{cor:credal-endpoints-uni}).

\begin{proposition}\label{prop:credal-endpoints-multi}
Let $S:=\{\vc\in\mathbb R^K:\ \mathbf 1^\top \vc=0\}$ be the identifiability hyperplane. Then:
\begin{itemize}
\item[\textnormal{(a)}]  $\Delta\ell$ is $C^\infty$ and concave on $\mathbb R^K$, with $\nabla \Delta\ell(\vc)\;=\;\sum_{n=1}^N\!\big(\mathbf{e}_{y^{(n)}}-p^{(n)}(\vc)\big)$ and
\begin{align*}
\nabla^2\Delta\ell(\vc)\;=\;-\sum_{n=1}^N\!\Big(\mathrm{Diag}\big(p^{(n)}(\vc)\big)-p^{(n)}(\vc)\,p^{(n)}(\vc)^\top\Big)\,\preceq\,0.
\end{align*}
Moreover, $\Delta\ell$ is strictly concave on $S$ provided at least two classes appear in $\mathcal D$, namely, $N_j>0$ for at least two $j$, where $N_j=\#\{n:y^{(n)}=j\}$. Consequently, $F(\alpha)$ is convex and translation-invariant along $\mathrm{span}\{\mathbf 1\}$. The section $F_S(\alpha):=F(\alpha)\cap S$ is nonempty and compact whenever at least two classes appear.
\item[\textnormal{(b)}] For each fixed $\vx$ and $k$, the map $\vc\mapsto \log p_k(\vx;\vc)=(z_k(\vx)+c_k)-\log\!\sum_{l=1}^K e^{z_l(\vx)+c_l}$ is $C^\infty$ and concave on $\mathbb R^K$ with $\nabla \log p_k(\vx;\vc)\;=\; \mathbf{e}_k-p(\vx;\vc)$ and
\begin{align*}
\nabla^2 \log p_k(\vx;\vc)\;=\;-\Big(\mathrm{Diag}\big(p(\vx;\vc)\big)-p(\vx;\vc)\,p(\vx;\vc)^\top\Big)\,\preceq\,0.
\end{align*}
In particular, $c_k\mapsto p_k(\vx;\vc)$ is strictly increasing (holding $c_j$, $j\neq k$, fixed), and $c_j\mapsto p_k(\vx;\vc)$ is strictly decreasing for $j\neq k$.
\item[\textnormal{(c)}] The upper credal bound is the value of the convex optimization problem
\begin{align*}
\overline p_k(\vx)\;=\;\sup_{\vc\in F(\alpha)} p_k(\vx;\vc)\;=\;\sup_{\vc\in F(\alpha)} \exp\big(\log p_k(\vx;\vc)\big)
\;=\;\exp\!\Big(\ \sup_{\vc\in F(\alpha)} \log p_k(\vx;\vc)\ \Big),
\end{align*}
and the inner problem $\sup_{\vc\in F(\alpha)} \log p_k(\vx;\vc)$ is a concave maximization, i.e.,  a convex optimization. An optimizer always exists on $F_S(\alpha)$, and is unique modulo addition of constants along $\mathrm{span}\{\mathbf 1\}$.
\item[\textnormal{(d)}] The lower credal bound $\underline p_k(\vx)=\inf_{\vc\in F(\alpha)} p_k(\vx;\vc)$ is, in general, not a convex optimization problem. Nevertheless, when $F_S(\alpha)$ is compact, a minimizer exists and is attained at an extreme point of the convex set $F_S(\alpha)$. 
\end{itemize}
\end{proposition}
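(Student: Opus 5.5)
The plan is to prove the four parts in order, reducing everything to standard facts about the log-sum-exp function. Let $\mathrm{lse}(u)=\log\sum_l e^{u_l}$. Then $\log p^{(n)}_{y}(\vc)=z^{(n)}_y+c_y-\mathrm{lse}(z^{(n)}+\vc)$ and $\log p_k(\vx;\vc)=z_k(\vx)+c_k-\mathrm{lse}(z(\vx)+\vc)$. So the log-likelihood gap $\Delta\ell$ and the map in (b) are, up to constants, a linear function of $\vc$ minus a sum of translated log-sum-exps.

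\textbf{Parts (a) and (b): derivatives and concavity.} Smoothness is immediate. Differentiating gives $\nabla\,\mathrm{lse}(u)=\mathrm{softmax}(u)$ and $\nabla^2\mathrm{lse}(u)=\mathrm{Diag}(p)-pp^\top$, which yields the stated gradients and Hessians. Positive semidefiniteness of $\mathrm{Diag}(p)-pp^\top$ follows because $v^\top(\mathrm{Diag}(p)-pp^\top)v=\mathrm{Var}_{p}(v)\ge 0$, the variance of $v$ under $p$. This gives concavity in both (a) and (b).

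For strict concavity on $S$, I will use that softmax outputs have full support. Hence $\mathrm{Var}_p(v)=0$ exactly when $v\in\mathrm{span}\{\mathbf 1\}$, so each summand of the Hessian is negative definite on $S$. This already holds for $N\ge1$, so the two-class hypothesis is more than is needed here.

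Monotonicity in (b) follows from $\partial p_k/\partial c_k=p_k(1-p_k)>0$ and $\partial p_k/\partial c_j=-p_kp_j<0$ for $j\neq k$. Convexity of $F(\alpha)$ holds because it is a superlevel set of a concave function. Translation invariance along $\mathrm{span}\{\mathbf 1\}$ follows from the stated invariance of $\Delta\ell$. Nonemptiness holds because $0\in F_S(\alpha)$, since $\Delta\ell(0)=0\ge\log\alpha$. Closedness of $F_S(\alpha)$ follows from continuity.

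\textbf{Boundedness of $F_S(\alpha)$: the main obstacle.} I will argue via recession directions. Fix a unit vector $d\in S$. Along the ray $t\mapsto t d$, for large $t$ we have $\log p^{(n)}_{y}(td)=t\,(d_{y}-\max_l d_l)+O(1)$. So $\Delta\ell(td)\to-\infty$ unless $d_{y^{(n)}}=\max_l d_l$ for every observed label.

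If every class appears, then a nonzero $d\in S$ cannot have all coordinates equal to its maximum, so $\Delta\ell$ diverges along every direction. For a closed convex superlevel set of a concave function, having no recession direction gives boundedness. A compactness argument on the unit sphere, using concavity, makes this uniform.

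The delicate point is the stated hypothesis that only two classes appear. If a class $j$ never appears, the direction $d\propto -\mathbf e_j+\mathbf 1/K$ does not decrease $\Delta\ell$. So the proof as planned needs every class to appear, or $K=2$. I will state the argument under that condition and note where the hypothesis enters.

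\textbf{Part (c): the upper bound as a convex program.} Since $\exp$ is strictly increasing, the supremum commutes with $\exp$. The inner problem maximizes the concave function $\log p_k(\vx;\cdot)$ over the convex set $F(\alpha)$, so it is a convex program. By translation invariance it suffices to optimize over $F_S(\alpha)$. There a maximizer exists by compactness and continuity.

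For uniqueness modulo $\mathrm{span}\{\mathbf 1\}$ I will use strict concavity of $\log p_k(\vx;\cdot)$ on $S$, from (b) with full-support $p$. Two distinct maximizers in $S$ would have a strictly better midpoint, and that midpoint lies in $F_S(\alpha)$ by convexity, which is a contradiction.

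\textbf{Part (d): the lower bound.} Existence of a minimizer follows from compactness and continuity. For attainment at an extreme point, note that $p_k(\vx;\cdot)$ is log-concave, hence quasiconcave. By Krein--Milman in finite dimensions (Minkowski--Carathéodory), any $\vc\in F_S(\alpha)$ is a convex combination of finitely many extreme points $e_i$. Quasiconcavity then gives $p_k(\vc)\ge\min_i p_k(e_i)$, so the minimum over $F_S(\alpha)$ equals the minimum over extreme points and is attained there.

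Finally, non-convexity in general is shown by noting that one is minimizing a log-concave, non-convex function. A two-class example, where $p_k$ is a sigmoid in $c_1-c_2$ restricted to an interval, exhibits this.
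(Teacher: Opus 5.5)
Your argument has the same skeleton as the paper's proof. Both use the log-sum-exp gradient and Hessian, the ray expansion $\Delta\ell(t\vd)=t\bigl(\sum_j N_j d_j-N\max_l d_l\bigr)+O(1)$ for compactness, compactness for existence in (c), and an extreme-point principle in (d). Where you depart from the paper, you are right and the paper is not.

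\textbf{The hypothesis for compactness.} The paper's compactness step claims that $\vd\in S$ with two differing coordinates forces $\sum_j N_j d_j<N\max_l d_l$. This fails whenever every observed label sits at a maximal coordinate of $\vd$. Your counterexample is valid. Take $K=3$, only classes $1,2$ observed, and $\vd=(1,1,-2)$. Then
\[
\frac{d}{dt}\Delta\ell(t\vd)=3\sum_n p^{(n)}_3(t\vd)>0,
\]
so the ray $\{t\vd:t\ge 0\}$ lies in $F_S(\alpha)$.

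You can strengthen your remark: this is not only a gap in the proof, because the existence claim in (c) is false under the stated hypothesis. Here $\vd$ is a recession direction of the closed convex set $F_S(\alpha)$, so $\vc+s\vd$ stays feasible for every feasible $\vc$ and every $s\ge 0$. At the same time,
\[
\frac{d}{ds}\log p_1(\vx;\vc+s\vd)=3\,p_3(\vx;\vc+s\vd)>0,
\]
so $\overline p_1(\vx)$ is never attained. The correct condition for compactness in (a) and existence in (c) is therefore ``every class appears''. This coincides with the paper's ``at least two classes appear'' only when $K=2$. Part (d) is unaffected, since it is stated conditionally on compactness.

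\textbf{Your other deviations.} These also improve on the paper's proof.
\begin{itemize}
\item Strict concavity on $S$ needs only $N\ge 1$, exactly as you argue from the full support of softmax vectors.
\item For uniqueness in (c), the paper only observes that invariance along $\mathrm{span}\{\mathbf 1\}$ limits uniqueness to that quotient. Your midpoint argument, using strict concavity of $\log p_k(\vx;\cdot)$ on $S$, is what actually establishes uniqueness.
\item In (d), the paper cites the minimum principle for the concave function $\log p_k(\vx;\cdot)$ on a compact convex set. Your quasiconcavity plus Minkowski--Carath\'eodory argument is a self-contained version of the same fact. To get attainment cleanly, apply the decomposition to a minimizer itself, since the set of extreme points need not be closed.
\end{itemize}
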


We prove Proposition \ref{prop:credal-endpoints-multi} in Appendix \ref{app:proofs}. Proposition \ref{prop:credal-endpoints-multi} (a) guarantees that the likelihood-based feasibility region is a well-posed convex set (compact on the hyperplane $S$), so optimization over it is stable. Moreover, (b) shows the test objective inherits the same curvature structure as the training likelihood. Together these yield (c): the \emph{upper} credal bound is the value of a single convex program with a unique optimizer on $S$, while (d) clarifies that the \emph{lower} bound is generally nonconvex and lives on the boundary/extreme points of $F_S(\alpha)$. 
Practically, the upper credal bounds can be computed reliably using convex solvers, while the lower bounds require exploration of the feasibility set’s boundary. This task, however, is not trivial: the lower-bound optimization may feature multiple global extrema. Thoroughly exploring these extrema can become computationally expensive, undermining the main goal of our approach, which is to efficiently compute credal predictions.

To address this challenge we restrict to class-specific biases $\vc = t\,\mathbf e_k$. This reduces the problem to a tractable one-dimensional slice, where the feasible set becomes a simple interval, and the class-$k$ probability varies monotonically with $t$. Consequently, exact lower and upper bounds can be obtained simply by evaluating the interval endpoints, as we formalize in the following corollary.

\begin{corollary}\label{cor:credal-endpoints-uni}
Now restrict to shifts of the form $\vc=t\,\mathbf e_k$, $t\in\mathbb R$ and define
\begin{align*}
\Delta\ell_k(t):=\Delta\ell(t\,\mathbf e_k),\qquad
F_k(\alpha):=\{\,t\in\mathbb R:\ \Delta\ell_k(t)\ge \log\alpha\,\}
=\{\,t:\ t\,\mathbf e_k\in F(\alpha)\,\}. 
\end{align*}
Then the following hold:
\begin{itemize}
\item[\textnormal{(a)}] $\Delta\ell_k$ is $C^2$ and strictly concave on $\mathbb R$. Consequently $F_k(\alpha)$ is a nonempty interval; if $0<N_k<N$, it is compact $[t_k^-,t_k^+]$, otherwise it is a closed (possibly half-infinite) interval.
\item[\textnormal{(b)}] For every fixed $\vx$, the map $t\mapsto p_k(\vx;t\,\mathbf e_k)$ is strictly increasing on $\mathbb R$.
\item[\textnormal{(c)}] With $t_k^-=\inf F_k(\alpha)$ and $t_k^+=\sup F_k(\alpha)$,
\begin{align*}
\underline p_k(\vx)=p_k\big(\vx;t_k^-\,\mathbf e_k\big),\qquad
\overline p_k(\vx)=p_k\big(\vx;t_k^+\,\mathbf e_k\big).
\end{align*}
\item[\textnormal{(d)}]  The endpoints $t_k^{-}, t_k^{+}$ solve the convex programs
\begin{align*}
\min_{t\in\mathbb R}\ t\ \ \text{s.t.}\ \ -\Delta\ell_k(t)\le -\log\alpha,
\qquad
\min_{t\in\mathbb R}\ (-t)\ \ \text{s.t.}\ \ -\Delta\ell_k(t)\le -\log\alpha.
\end{align*}
\end{itemize}
\end{corollary}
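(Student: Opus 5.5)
The plan is to reduce everything to explicit one-variable calculus on the slice $\vc=t\,\mathbf e_k$, using Proposition~\ref{prop:credal-endpoints-multi}(a),(b) via the chain rule. For (a), smoothness is inherited from $\Delta\ell$. Restricting the Hessian in Proposition~\ref{prop:credal-endpoints-multi}(a) to the direction $\mathbf e_k$ gives
\begin{align*}
\Delta\ell_k''(t)=\mathbf e_k^\top\nabla^2\Delta\ell(t\mathbf e_k)\mathbf e_k=-\sum_{n=1}^N p^{(n)}_k(t\mathbf e_k)\big(1-p^{(n)}_k(t\mathbf e_k)\big),
\end{align*}
which is strictly negative because softmax probabilities lie in $(0,1)$. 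So $\Delta\ell_k$ is strictly concave on all of $\mathbb R$, and no class-count assumption is needed for this part. Its superlevel set $F_k(\alpha)$ is therefore a convex subset of $\mathbb R$, i.e.\ an interval. It is closed by continuity, and nonempty because $\Delta\ell_k(0)=0\ge\log\alpha$.

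For compactness, we use the first derivative $\Delta\ell_k'(t)=N_k-\sum_n p^{(n)}_k(t\mathbf e_k)$. As $t\to+\infty$, every $p^{(n)}_k\to1$, so $\Delta\ell_k'(t)\to N_k-N<0$ whenever $N_k<N$. As $t\to-\infty$, every $p^{(n)}_k\to0$, so $\Delta\ell_k'(t)\to N_k>0$. Since $\Delta\ell_k$ is concave, its derivative is eventually bounded away from zero with the correct sign in each direction. Hence $\Delta\ell_k(t)\to-\infty$ as $|t|\to\infty$, and the superlevel set at the finite level $\log\alpha$ is bounded, giving $[t_k^-,t_k^+]$. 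If $N_k=0$, then $\Delta\ell_k$ is decreasing and tends to $0$ as $t\to-\infty$, so the set is $(-\infty,t_k^+]$. If $N_k=N$, the situation is symmetric and the set is $[t_k^-,\infty)$. In both cases the interval is closed but half-infinite.

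For (b), we write $p_k(\vx;t\mathbf e_k)=\sigma\big(z_k(\vx)+t-\log\sum_{l\ne k}e^{z_l(\vx)}\big)$ with $\sigma$ the logistic function. Alternatively, we can use $\frac{d}{dt}\log p_k=1-p_k>0$ from Proposition~\ref{prop:credal-endpoints-multi}(b). Either way, strict monotonicity is immediate.

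For (c), we read the bounds in \eqref{eq:interval_bounds} for the restricted family as the infimum and supremum over $t\in F_k(\alpha)$. Monotonicity then sends the infimum to $t_k^-$ and the supremum to $t_k^+$. When an endpoint is infinite, $p_k$ at that endpoint is interpreted as the limit, $0$ or $1$. For (d), the objectives $t$ and $-t$ are linear. The constraint function $-\Delta\ell_k$ is convex by (a), so its sublevel set is exactly $F_k(\alpha)$, and the two programs are convex. Their optimal values are $\inf F_k(\alpha)$ and $-\sup F_k(\alpha)$, and these are attained when the interval is compact.

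The main obstacle is the compactness claim in (a), specifically the asymptotic sign argument for $\Delta\ell_k'$. The key point is that strict concavity alone does not force $\Delta\ell_k\to-\infty$; the argument genuinely needs the limits of the derivative, which is why the case split on $0<N_k<N$ appears.
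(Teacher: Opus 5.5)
Your proposal is correct and follows the paper's proof essentially step for step: concavity of the restriction, tail behaviour of $\Delta\ell_k$ for compactness (you via the limits $N_k-N$ and $N_k$ of $\Delta\ell_k'$, the paper via term-wise linear decay), $\frac{d}{dt}\log p_k=1-p_k$, monotonicity sending the extrema to the endpoints, and a linear objective over a convex sublevel set. Your explicit formula $\Delta\ell_k''(t)=-\sum_{n}p^{(n)}_k\bigl(1-p^{(n)}_k\bigr)<0$ supplies the \emph{strict} concavity that the paper's proof never derives, since restricting a concave function only gives concavity.

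One harmless slip: for $N_k=0$, $\Delta\ell_k(t)$ does not tend to $0$ as $t\to-\infty$; it increases to $-\sum_n\log\bigl(1-p^{(n)}_k(0)\bigr)>0$. The conclusion $F_k(\alpha)=(-\infty,t_k^+]$ is unaffected, since monotonicity and $\Delta\ell_k(0)=0\ge\log\alpha$ already give $(-\infty,0]\subseteq F_k(\alpha)$.
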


We prove Corollary \ref{cor:credal-endpoints-uni} in Appendix \ref{app:proofs}.  Algorithmically, the scalar case reduces computing $(\underline p_k(\vx),\overline p_k(\vx))$ to finding the two endpoints $t_k^{-}$ and $t_k^{+}$ of the feasible interval, e.g., by bisection on $\Delta\ell_k(t)=\log\alpha$; the bounds are then $p_k(\vx;t_k^-\mathbf e_k)$ and $p_k(\vx;t_k^+\mathbf e_k)$. 
Throughout the empirical evaluation, we focus on the one-dimensional setting, which admits convexity of the lower and upper probability bounds, resulting in the box credal set $\Box_{\vx,\alpha}$.
We defer details about the practical computation of the bounds to \cref{app:implementation details}.

\section{Empirical Results}\label{sec:empirics}
In this section, we empirically evaluate our proposed method with the following research objectives in mind. First, we assess the quality of the uncertainty representation by standard metrics and show its strong performance compared to baselines in \cref{sec:coveff}. Second, we evaluate the method on common downstream tasks and emphasize the competitive performance---while far more efficient---when compared to baselines in \cref{sec:ood}. Third, we highlight the distinctive advantage of our method that it can construct uncertainty representations for large architectures such as \model{TabPFN} or \model{CLIP}, where retraining is infeasible in \cref{sec:tabpfn,sec:clip_main}.

Thus, we present scenarios where our method (EffCre, see \cref{app:implementation details} for implementation details) newly enables the construction of uncertainty representations and, where possible, we compare it to the following suitable baselines, which represent the current state-of-the-art in credal prediction: Credal Wrapper (CreWra) \citep{wangCredalWrapper2025}, Credal Ensembling (CreEns) \citep{nguyen2025credal}, Credal Bayesian Neural Networks (CreBNN) \citep{caprioCredalBNNs2024}, Credal Interval Net (CreNet) \citep{wangCredalIntervalNet2025}, and Credal Relative Likelihood (CreRL) \citep{loehrCredalPrediction2025}. 
The code for all experiments is published in a Github repository\footnote{ \url{https://github.com/pwhofman/efficient-credal-prediction}.} and the detailed experimental setup can be found in \cref{app:exp-details}.

\subsection{Coverage versus Efficiency}\label{sec:coveff}
We compare our method to the baselines in terms of coverage \cref{metric:cov} and efficiency \cref{metric:eff}. Ideally, a credal predictor generates sets of a small size (high efficiency) that cover the ground-truth conditional distribution (high coverage). Moreover, because the relative importance of coverage and efficiency may vary across applications, methods that allow to trade-off one against the other, depending on the setting, are favored.

\textbf{Setup.}
We train a multilayer perceptron (\model{MLP}) on the embeddings of \dataset{ChaosNLI} and a \model{ResNet18} \citep{heResNet2016} on \dataset{CIFAR-10} \citep{krizhevskyCIFAR102009} \citep{schmarjeOneAnnotation2022}.
The models are trained with regular labels and evaluated against ground-truth distributions. Such ground-truth distributions are derived from multiple annotator labels, available through \dataset{CIFAR-10H} \citep{petersonCIFAR10H2019} for the \dataset{CIFAR-10} test set, while \dataset{ChaosNLI} provides them directly. 

\begin{figure}[t]
\begin{minipage}[t]{0.64\textwidth}
    \centering  
    \begin{subfigure}{0.44\textwidth}
        \includegraphics[width=\linewidth]{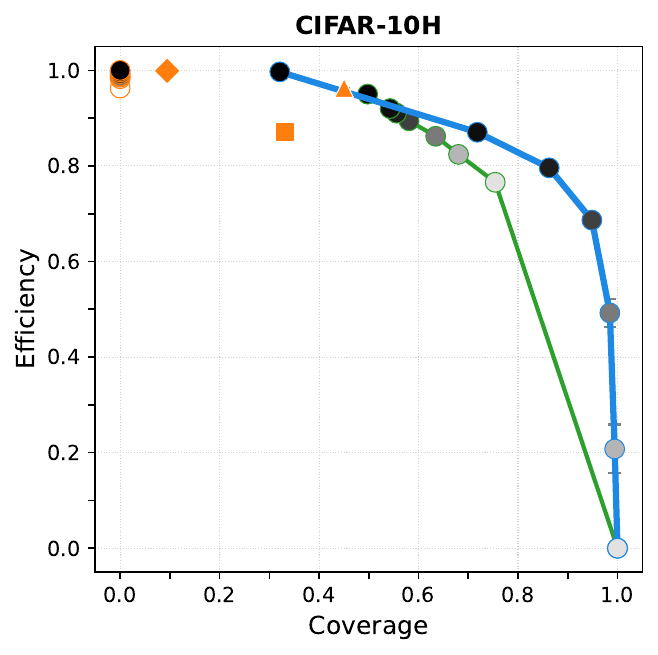}
    \end{subfigure}
    \hfill
    \begin{subfigure}{0.44\textwidth}
        \includegraphics[width=\linewidth]{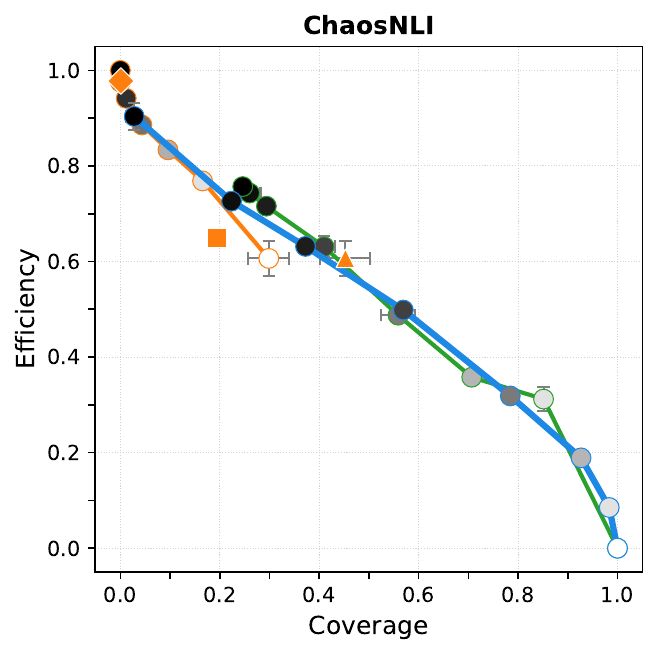}
    \end{subfigure}
    \hfill
    \begin{subfigure}{0.1\textwidth}
        \raisebox{1.2em}{\includegraphics[height=9.3em]{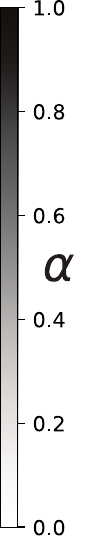}}
    \end{subfigure}
    \\[0.5em]
    \includegraphics[width=0.92\textwidth]{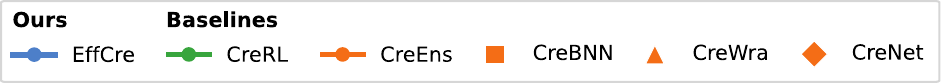}
    \hfill
    \vspace{-4.6pt}
    \caption{\textbf{Coverage versus Efficiency.} Comparison on \dataset{CIFAR-10} and \dataset{ChaosNLI}. The plot highlights the Pareto trade-off: higher coverage often requires lower efficiency. EffCre consistently advances the Pareto front over baselines.}
    \label{fig:cov-eff}
\end{minipage}
\hfill
\begin{minipage}[t]{0.34\textwidth}
    \includegraphics[width=\textwidth]{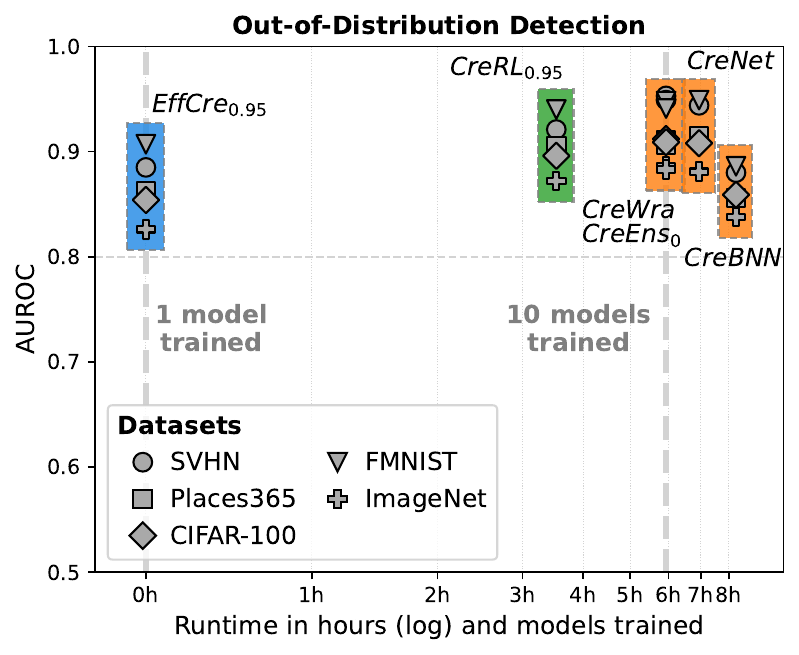}
    \caption{\textbf{Out-of-Distribution Detection.} Performance (AUROC, based on epistemic uncertainty) as a function of required number of models and training time (in hours).}
    \label{fig:time}
\end{minipage}
\end{figure}
\textbf{Results.}
We present results for coverage versus the efficiency in \cref{fig:cov-eff}. The \dataset{CIFAR-10} dataset shows that our method Pareto dominates CreRL in the high coverage region, while performing similarly in the medium coverage region. In addition, EffCre Pareto dominates the CreBNN, CreWra, CreNet baselines. 
For the \dataset{ChaosNLI} dataset our method performs similarly to CreRL in the high coverage region and similar to CreEns in the low coverage region. Whereas the aforementioned baselines can only traverse the low coverage \emph{or} high coverage regions, our method can traverse both regions, allowing a user to specify almost any coverage or efficiency value. Furthermore, our method dominates CreBNN with $\alpha = 0.95$, whereas it performs similarly to CreNet and CreWra, again with the caveat that these methods are restricted to the low coverage region. For results on an additional dataset, we refer to \cref{app:cov-eff}.
Lastly, note that, while in \cref{sec:rlh-int}, we assume $\alpha > 0$, in the coverage and efficiency experiments we explicitly use $\alpha=0$ as a verification that our method produces sufficiently diverse sets. Broadly speaking, if our method is not able to generate dense credal sets for $\alpha=0$, we cannot expect it to reliably reach the edges of the plausible probability intervals.

\subsection{Out-of-Distribution Detection}\label{sec:ood}

Besides coverage and efficiency, out-of-distribution detection is a commonly used proxy to evaluate the quality of credal predictions. Modern machine learning systems should be able to detect whether the data they receive is in-distribution (ID) or out-of-distribution (OOD) as strong performance on this task is an indication of a good epistemic uncertainty representation. So far, many approaches have been unable to provide epistemic uncertainty representation due to prohibitive computational costs; our method directly addresses this. To demonstrate this, we analyze the trade-off between the training time and performance, in terms of AUROC, for our method and compare it to baselines. 

\textbf{Setup.}
We train a \model{ResNet18} on \dataset{CIFAR-10}, which serves as the ID data and introduce it to several other datasets that serve as the OOD data. 
Epistemic uncertainty is quantified based on a commonly-used measure from \cite{abellanDisaggregatedMeasure2006},
\begin{equation}\label{eq:eu-entropy}
\EU(\cQ_{\vx}) \coloneq \overline{\operatorname{S}}(\cQ_{\vx}) - \underline{\operatorname{S}}(\cQ_{\vx}),
\end{equation}
where 
$
\overline{\operatorname{S}}(\cQ_{\vx}) = {\sup_{p(\cdot \mid \vx) \in \cQ_{\vx}}} \operatorname{S}(p(\cdot \mid \vx))
$
, and
$
\underline{\operatorname{S}}(\cQ_{\vx})
$ defined analogously,
are the upper and lower Shannon entropy\footnote{Shannon entropy: $\operatorname{S}(p(\cdot \mid \vx)) = - \sum_{k=1}^K p_k(\vx) \log p_k(\vx)$ with $0 \log 0 = 0$ by definition.}, respectively. 

\textbf{Results.}
We report the OOD detection results alongside training time in \cref{fig:time} and provide additional results and hyperparameter ablations in \cref{app:ablation_ood_members}. Since any approach requires at least one trained model (e.g., an MLE predictor), our method EffCre comes with no extra training cost, as it can be applied post-hoc in a highly efficient manner. Although the baselines achieve slightly higher AUROC scores on the OOD task, they rely on ensembles of models, which demand a substantial number of members (e.g., 10 models) and therefore significantly increase training time. While CreWra, CreEns, CreNet, and CreBNN require full training of each ensemble member, CreRL is slightly more efficient due to its early-stopping criterion. However, our approach EffCre is substantially more efficient compared to all baselines, enabling the application even to large-scale models.

\subsection{In-Context Learning with TabPFN}\label{sec:tabpfn}
To highlight the ability of our method to be applied in a \emph{post-hoc} manner, requiring only logits, we apply it to a foundation model for tabular data.
\begin{wrapfigure}{r}{0.45\textwidth}
    \centering
    \vspace{-.02cm}
    \includegraphics[width=\linewidth]{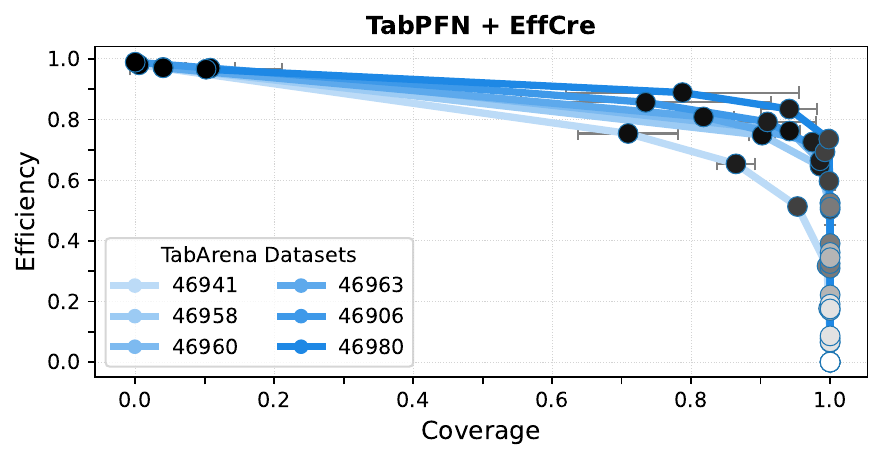}
    \includegraphics[width=\linewidth]{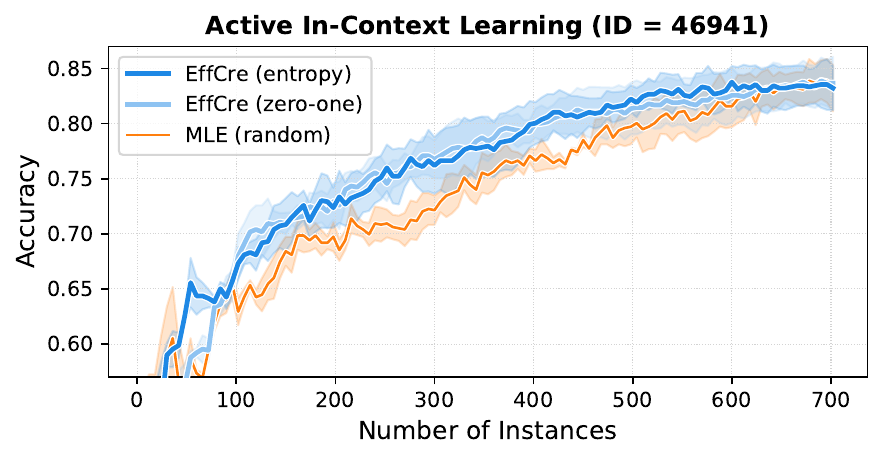}
    \caption{\textbf{EffCre used with \model{TabPFN}.} \textbf{Top:} Coverage versus efficiency performance all multi-class \dataset{TabArena} datasets. \textbf{Bottom:} Active In-Context Learning performance versus the random baseline.}
    \label{fig:tabpfn-results}
    \vspace{-0.55cm}
\end{wrapfigure}
\model{TabPFN} \citep{hollmannAccuratePredictions2025} is a prior-fitted transformer, trained on a large number of synthetic datasets. It uses in-context learning, based on all training data and additional exemplary instances, to make predictions, while not requiring any gradient-based changing of its weights. 
Therefore, the baselines used in the experiments in \cref{sec:coveff,sec:ood} cannot be applied as they require training (an ensemble), which, besides being challenging due to computational cost, also requires the \emph{original} training data, which we do not have access to. 

\textbf{Setup Coverage Versus Efficiency.}
To illustrate the proper uncertainty representation generated by using our method on top of \model{TabPFN}, we compute the coverage and efficiency of the predicted credal sets by applying it to all multi-class datasets\footnote{The datasets included in TabArena v0.1. This collection may be subject to change.} from the \dataset{TabArena} benchmark \citep{ericksonTabArena2025}. 
Since these datasets do not provide ground-truth conditional distributions, we propose a simple way to create \emph{semi-synthetic ground-truth distributions} to allow evaluation by coverage and efficiency. Details about this experiment and the process of creating such distributions can be found in \cref{app:tabpfn} and \cref{app:semi-synthetic}, respectively. 

\textbf{Results Coverage Versus Efficiency.}
We show the coverage and efficiency results in \cref{fig:tabpfn-results}, confirming that uncertainty representations obtained by applying our method to \model{TabPFN} provide small sets that often include the ground-truth distribution.

In addition, we perform active in-context learning, which has become an important task in the context of foundation models since labeling represents the limiting factor to leveraging pre-trained models effectively. Ideally, a model---equipped with a reliable (epistemic) uncertainty representation---is able to sample informative instances, which, when used for in-context learning, improve the performance more than a random sample of instances would. 

\textbf{Setup Active Learning.}
Specifically, we quantify epistemic uncertainty using \cref{eq:eu-entropy} and additionally use a measure based on zero-one-loss, which has been show to perform well on similar tasks \citep{hofmanCredalApproach2024}. Concretely, 
\begin{equation}\label{eq:eu-zero-one}
    \EU(\cQ_{\vx}) = \underset{p(\cdot \mid \vx),\ p'(\cdot \mid \vx) \in \cQ_{\vx}}{\max} \underset{k}{\max}\ p_k(\vx) - p_{\underset{k}{\argmax}\ p'_k(\vx)}(\vx).
\end{equation}
We perform this task for a number of \dataset{TabArena} datasets using $\alpha=0.8$. For more details regarding the experimental setup and additional results, we refer to \cref{app:tabpfn}.

\textbf{Results Active Learning.}
We present the results in \cref{fig:tabpfn-results}. This highlights the ability of our method to represent its epistemic uncertainty well, in order to sample the most informative instances accordingly. An ablation on $\alpha$ in the active in-context learning setting can be found in \cref{app:ablation-tabpfn}.

\begin{figure}[t]
    \centering
    \includegraphics[width=\linewidth]{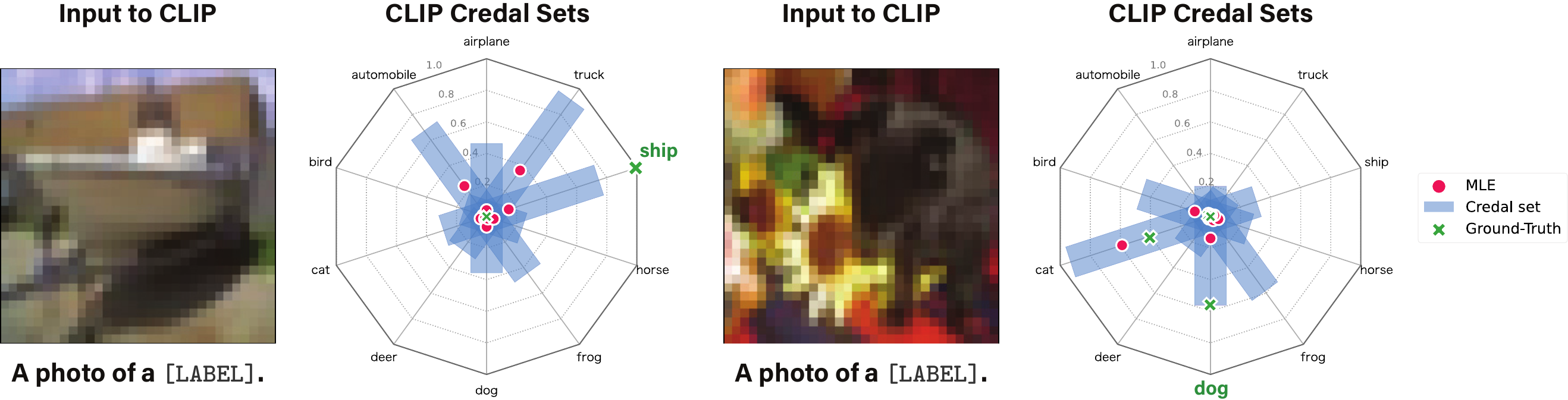}
    \caption{\textbf{Credal Prediction with \model{CLIP}.} Examples from \dataset{CIFAR-10H} with high epistemic uncertainty (\textbf{left}) and high aleatoric uncertainty (\textbf{right}) as predicted by applying our method on \model{CLIP}.}
    \label{fig:clip_exp}
\end{figure}

\subsection{Zero-Shot Classification with CLIP-Based Models}\label{sec:clip_main}
We demonstrate the flexibility of our method by creating credal sets for vision–language models (VLMs), including \model{CLIP} \citep{radford2021CLIP}, \model{SigLIP} \citep{zhai2023siglip}, \model{SigLIP-2} \citep{tschannen2025siglip2}, and \model{BiomedCLIP} \citep{zhang2024biomedclip}. 

\textbf{Setup Coverage Versus Efficiency.}
To demonstrate the proper uncertainty representation generated by using our method on top of \model{CLIP}-based models---something that is computationally prohibitive for the baselines---we compute the coverage and efficiency of the predicted credal sets by applying it to \dataset{CIFAR-10}, using \dataset{CIFAR-10H} as ground-truth distributions. Therefore, we turn the models into zero-shot classifiers by reformulating the label set into natural-language templates and comparing the resulting text embeddings with image embeddings (see Appendix~\ref{appx:clip_as_zero_shot} for details and performance results). 

\textbf{Results Coverage Versus Efficiency.}
\cref{fig:clip-coveff} shows performance on the coverage-efficiency trade-off, with our method performing well, being able to reach regions with high coverage and high efficiency for \model{CLIP}, \model{SigLIP}, and \model{SigLIP-2}.

\begin{wrapfigure}{r}{0.45\textwidth}
    \vspace{-.05cm}
    \centering
    \begin{minipage}[c]{0.415\textwidth}
        \includegraphics[width=\textwidth]{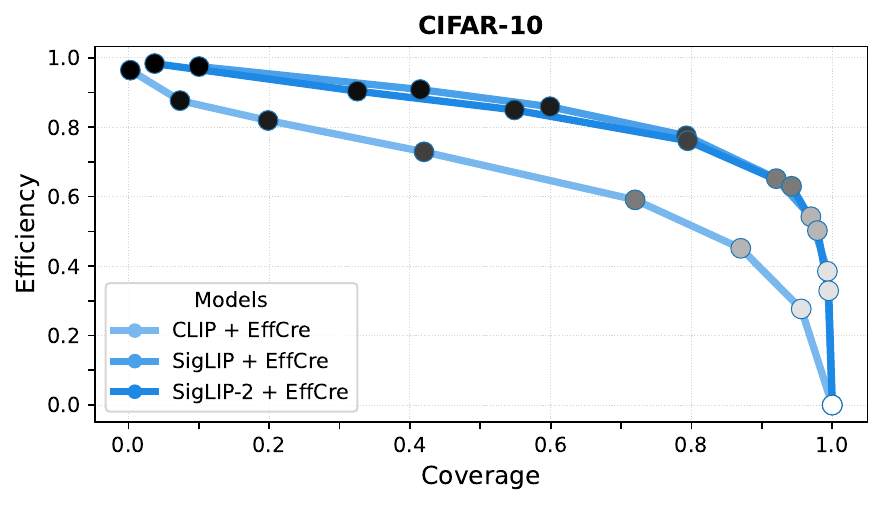}
    \end{minipage}
    \begin{minipage}[c]{0.0288\textwidth}
    \vspace{-0.7em}
        \includegraphics[width=\textwidth]{figures/alpha_bar.pdf}
    \end{minipage}
    \caption{\textbf{EffCre used with \model{CLIP}-based models.} We demonstrate the performance of EffCre by applying it to \model{CLIP}, \model{SigLIP}, and \model{SigLIP-2} without retraining---something that can't be done by the baselines.}
    \label{fig:clip-coveff}
    \vspace{-0.4cm}
\end{wrapfigure}
To further highlight the usefulness of our method, we compare the predicted credal sets to human uncertainty patterns. To visualize credal sets beyond three classes, we propose \textit{credal spider plots} where each axis corresponds to a class and intervals mark upper and lower probabilities (see Appendix~\ref{app:guide-on-visualization} for a detailed guide). Profiles such as MLE predictions or ground-truth distributions can be overlaid for direct comparison.

\textbf{Setup Qualitative Evaluation.}
For the visual evaluation, we apply our method on top of \model{CLIP}-based models and predict credal sets for \dataset{CIFAR-10} while using \dataset{CIFAR-10H} to reference ground-truth distributions. We sort and present instances from \dataset{CIFAR-10-H}’s test split based on the aleatoric and epistemic uncertainties represented by the predicted credal sets of the \model{CLIP}-based models.

\textbf{Results Qualitative Evaluation.}
Figure~\ref{fig:clip_exp} presents two instances from \dataset{CIFAR-10-H}’s test split: the left image is misclassified by the MLE due to the unusual context of the ship being out of water in a dock. Our method reflects the resulting epistemic uncertainty with plausible intervals across all classes and higher probability intervals for class ship, truck, and automobile. The right image shows an animal in an ambiguous pose: the ground-truth distribution splits mass between dog and cat, representing aleatoric uncertainty. Our method represents this uncertainty well, covering the true distribution even though the MLE misclassifies the image as cat. Additional multilingual examples and further results are shown in Appendix~\ref{appx:additional-clip-results}.

\section{Related Work}\label{sec:rel}

Credal sets originate in imprecise-probability literature \citep{levi1978indeterminate,walley1991statistical}.
In machine learning, credal sets offer an appealing way to represent epistemic uncertainty, motivating work on (credal) uncertainty quantification \citep{huellermeierCredalUncertainty2022, sale2023volume, hofmanCredalApproach2024, chau2025integral}, calibration \citep{juergensCredalCalibration2025,chauCredalTests2025}, self-supervision \citep{lienenCredalSSL2021}, and learning theory \citep{caprioCredalLearningTheory2024}.
A consensus on construction has not emerged, current practice spans a variety of designs that trade theoretical guarantees against practicality in different ways. 
Some methods use conformal prediction to obtain credal sets with finite-sample validity guarantees \citep{javanmardiConformalizedCredal2024}. Others form sets by aggregating multiple predictors, whether standard deep ensembles \citep{wangCredalWrapper2025,nguyen2025credal}, interval-head networks trained with tailored losses \citep{wangCredalIntervalNet2025}, or Bayesian ensembles built from posterior samples \citep{caprioCredalBNNs2024}.
Recently, \citet{loehrCredalPrediction2025} adopted a relative-likelihood criterion \citep{birnbaumFoundationsStatistical1962,antonucciLikelihoodBased2012,sengeReliableClassification2014} to form credal sets in machine learning settings.

Ensemble training underpins much of the prior work, but for large model architectures, retraining---even once---is rarely feasible.
This, in turn, has led to lightweight alternatives for representing uncertainty.
One line of work focuses on single forward pass methods to estimate uncertainty \citep{amersfoortDeterministicUncertainty2020,mukhotiDeterministicUncertainty2023}, e.g., via distance-based features \citep{Liu2020SimpleAP} or evidential Dirichlet heads \citep{sensoyEvidentialLearning2018,aminiEvidentialRegression2020}, though recently evidential variants were criticized \citep{bengsSecondOrder2023,juergensEpistemicUncertainty2024}.
In contrast, approximate Bayesian inference techniques such as Laplace approximations \citep{daxbergerLaplaceRedux2021, weberLaplaceJAX2025} and variational last-layer or sub-ensemble approaches \citep{valdenegrotoroSubEnsembles2019, kristiadiBitBayesian2020, harrisonVariationalLast2024} reduce cost by training only limited parts of the network. Similarly, others compress ensembles by sharing parameters \citep{durasovMasksembles2021, laurentPackedEnsembles2023} or by distilling ensemble knowledge into a single model \citep{malininEnsembleDistillation, pensoEnsembleDistillation2022}.
Another line of work focuses on large models such as diffusion or language models and explores low-rank adaptation to efficiently build ensembles for uncertainty quantification \citep{Berry2024SheddingLO,Yang2023BayesianLA,Wang2024BLoBBL}. Yet, computationally efficient methods for credal prediction remain absent from the literature.

\section{Discussion}

We presented a post-hoc, model-agnostic method for credal prediction that captures epistemic uncertainty as class-wise \emph{plausible probability intervals} derived from relative likelihood. The key idea, \emph{decalibration}, perturbs a trained model’s logits under a global likelihood-ratio budget, thereby exploring less-likely yet still plausible predictions without retraining.
We formally analyze decalibration and show that the logit-shift feasibility set is convex (compact on an identifiability hyperplane). In the one-logit (class-specific) case, each interval endpoint is obtained by solving a small convex program, readily handled by off-the-shelf optimizers.
Empirically, our method matches or surpasses baselines on coverage–efficiency and is competitive for OOD detection, while cutting computation by orders of magnitude. Because it is post-hoc and needs only logits, we apply it to large pretrained models—including \model{TabPFN} and \model{CLIP}—for which ensemble retraining is impractical.

\textbf{Limitations and Future Work.}
We primarily deploy the one-logit (class-specific) variant of our logit-shift family. The fully coupled case remains open; upper bounds still reduce to a convex program, whereas lower bounds are non-convex. Developing reliable relaxations, certificates, or approximation schemes, and clarifying their statistical trade-offs, is a promising direction.
Open-vocabulary, multimodal models such as \model{CLIP} raise additional challenges. Because the label set is chosen at inference time, uncertainty should reflect not only prediction but also label selection and prompt choice. Designing credal formalisms and evaluation protocols for this setting is an important avenue for future work.

\paragraph{Reproducibility Statement.}
We are committed to ensuring the reproducibility of our results. To this end, we provide our \textbf{code} in the following Github repository \url{https://github.com/pwhofman/efficient-credal-prediction}. The \textbf{theoretical results} in \cref{sec:decalib} are accompanied by proofs in \cref{app:proofs} and, where necessary, the assumptions have been discussed. The full \textbf{experimental setup}, used to produce the results presented in \cref{sec:empirics,app:additional-results,app:ablations}, is provided in \cref{app:exp-details}. In particular, we discuss details about \textbf{datasets}, including the transformation performed on the input to models and the creation of (semi-synthetic) ground truth distributions in \cref{app:datasets,app:semi-synthetic}. The \textbf{models} we use, and our implementation of them, are discussed in detail in \cref{app:models}. We elaborate on the implementation of all \textbf{baselines} in \cref{app:baselines} and details regarding the practical implementation of \textbf{our method}, that are not discussed in the main paper, are provided in \cref{app:implementation details}.

\clearpage
\subsubsection*{Acknowledgments}
Timo Löhr and Maximilian Muschalik gratefully acknowledge funding by the Deutsche Forschungsgemeinschaft (DFG, German Research Foundation): TRR 318/3 2026 – 438445824. Yusuf Sale is supported by the DAAD program Konrad Zuse Schools of Excellence in Artificial Intelligence, sponsored by the Federal Ministry of Education and Research.

\bibliography{references}

@article{bulte2025uncertainty,
  title={Uncertainty quantification for data-driven weather models},
  author={B{\"u}lte, Christopher and Horat, Nina and Quinting, Julian and Lerch, Sebastian},
  journal={Artificial Intelligence for the Earth Systems},
  year={2025},
  publisher={American Meteorological Society}
}

@article{liOOD2025,
  author       = {Yucen Lily Li and
                  Daohan Lu and
                  Polina Kirichenko and
                  Shikai Qiu and
                  Tim G. J. Rudner and
                  C. Bayan Bruss and
                  Andrew Gordon Wilson},
  title        = {Out-of-Distribution Detection Methods Answer the Wrong Questions},
  journal      = {CoRR},
  volume       = {abs/2507.01831},
  year         = {2025},
  eprinttype    = {arXiv},
  eprint       = {2507.01831},
}

@inproceedings{laurentPackedEnsembles2023,
  author       = {Olivier Laurent and
                  Adrien Lafage and
                  Enzo Tartaglione and
                  Geoffrey Daniel and
                  Jean{-}Marc Martinez and
                  Andrei Bursuc and
                  Gianni Franchi},
  title        = {Packed Ensembles for efficient uncertainty estimation},
  booktitle    = {The Eleventh International Conference on Learning Representations,
                  {ICLR} 2023, Kigali, Rwanda, May 1-5, 2023},
  publisher    = {OpenReview.net},
  year         = {2023},
}

@inproceedings{kristiadiBitBayesian2020,
  author       = {Agustinus Kristiadi and
                  Matthias Hein and
                  Philipp Hennig},
  title        = {Being Bayesian, Even Just a Bit, Fixes Overconfidence in ReLU Networks},
  booktitle    = {Proceedings of the 37th International Conference on Machine Learning,
                  {ICML} 2020, 13-18 July 2020, Virtual Event},
  series       = {Proceedings of Machine Learning Research},
  volume       = {119},
  pages        = {5436--5446},
  publisher    = {{PMLR}},
  year         = {2020},
}

@inproceedings{harrisonVariationalLast2024,
  author       = {James Harrison and
                  John Willes and
                  Jasper Snoek},
  title        = {Variational Bayesian Last Layers},
  booktitle    = {The Twelfth International Conference on Learning Representations,
                  {ICLR} 2024, Vienna, Austria, May 7-11, 2024},
  publisher    = {OpenReview.net},
  year         = {2024},
}

@inproceedings{pensoEnsembleDistillation2022,
  author       = {Coby Penso and
                  Idan Achituve and
                  Ethan Fetaya},
  editor       = {Sanmi Koyejo and
                  S. Mohamed and
                  A. Agarwal and
                  Danielle Belgrave and
                  K. Cho and
                  A. Oh},
  title        = {Functional Ensemble Distillation},
  booktitle    = {Advances in Neural Information Processing Systems 35: Annual Conference
                  on Neural Information Processing Systems 2022, NeurIPS 2022, New Orleans,
                  LA, USA, November 28 - December 9, 2022},
  year         = {2022},
}

@inproceedings{malininEnsembleDistillation,
  author       = {Andrey Malinin and
                  Bruno Mlodozeniec and
                  Mark J. F. Gales},
  title        = {Ensemble Distribution Distillation},
  booktitle    = {8th International Conference on Learning Representations, {ICLR} 2020,
                  Addis Ababa, Ethiopia, April 26-30, 2020},
  publisher    = {OpenReview.net},
  year         = {2020},
}

@inproceedings{bengsSecondOrder2023,
  author       = {Viktor Bengs and
                  Eyke H{\"{u}}llermeier and
                  Willem Waegeman},
  editor       = {Andreas Krause and
                  Emma Brunskill and
                  Kyunghyun Cho and
                  Barbara Engelhardt and
                  Sivan Sabato and
                  Jonathan Scarlett},
  title        = {On Second-Order Scoring Rules for Epistemic Uncertainty Quantification},
  booktitle    = {International Conference on Machine Learning, {ICML} 2023, 23-29 July
                  2023, Honolulu, Hawaii, {USA}},
  series       = {Proceedings of Machine Learning Research},
  volume       = {202},
  pages        = {2078--2091},
  publisher    = {{PMLR}},
  year         = {2023},
}

@inproceedings{juergensEpistemicUncertainty2024,
  author       = {Mira J{\"{u}}rgens and
                  Nis Meinert and
                  Viktor Bengs and
                  Eyke H{\"{u}}llermeier and
                  Willem Waegeman},
  title        = {Is Epistemic Uncertainty Faithfully Represented by Evidential Deep
                  Learning Methods?},
  booktitle    = {Forty-first International Conference on Machine Learning, {ICML} 2024,
                  Vienna, Austria, July 21-27, 2024},
  publisher    = {OpenReview.net},
  year         = {2024},
}

@article{weberLaplaceJAX2025,
  author       = {Tobias Weber and
                  B{\'{a}}lint Mucs{\'{a}}nyi and
                  Lenard Rommel and
                  Thomas Christie and
                  Lars Kas{\"{u}}schke and
                  Marvin Pf{\"{o}}rtner and
                  Philipp Hennig},
  title        = {laplax - Laplace Approximations with {JAX}},
  journal      = {CoRR},
  volume       = {abs/2507.17013},
  year         = {2025},
  eprinttype    = {arXiv},
  eprint       = {2507.17013},
}

@inproceedings{daxbergerLaplaceRedux2021,
  author       = {Erik A. Daxberger and
                  Agustinus Kristiadi and
                  Alexander Immer and
                  Runa Eschenhagen and
                  Matthias Bauer and
                  Philipp Hennig},
  editor       = {Marc'Aurelio Ranzato and
                  Alina Beygelzimer and
                  Yann N. Dauphin and
                  Percy Liang and
                  Jennifer Wortman Vaughan},
  title        = {Laplace Redux - Effortless Bayesian Deep Learning},
  booktitle    = {Advances in Neural Information Processing Systems 34: Annual Conference
                  on Neural Information Processing Systems 2021, NeurIPS 2021, December
                  6-14, 2021, virtual},
  pages        = {20089--20103},
  year         = {2021},
}

@article{hollmann2022tabpfn,
  title={Tabpfn: A transformer that solves small tabular classification problems in a second},
  author={Hollmann, Noah and M{\"u}ller, Samuel and Eggensperger, Katharina and Hutter, Frank},
  journal={arXiv preprint arXiv:2207.01848},
  year={2022}
}

@article{hullermeier2021aleatoric,
  title={Aleatoric and epistemic uncertainty in machine learning: An introduction to concepts and methods},
  author={H{\"u}llermeier, Eyke and Waegeman, Willem},
  journal={Machine learning},
  volume={110},
  number={3},
  pages={457--506},
  year={2021},
  publisher={Springer}
}

@article{ericksonTabArena2025,
  author       = {Nick Erickson and
                  Lennart Purucker and
                  Andrej Tschalzev and
                  David Holzm{\"{u}}ller and
                  Prateek Mutalik Desai and
                  David Salinas and
                  Frank Hutter},
  title        = {TabArena: {A} Living Benchmark for Machine Learning on Tabular Data},
  journal      = {CoRR},
  volume       = {abs/2506.16791},
  year         = {2025},
  eprinttype    = {arXiv},
  eprint       = {2506.16791},
}

@inproceedings{javanmardiConformalizedCredal2024,
  author       = {Alireza Javanmardi and
                  David Stutz and
                  Eyke H{\"{u}}llermeier},
  editor       = {Amir Globersons and
                  Lester Mackey and
                  Danielle Belgrave and
                  Angela Fan and
                  Ulrich Paquet and
                  Jakub M. Tomczak and
                  Cheng Zhang},
  title        = {Conformalized Credal Set Predictors},
  booktitle    = {Advances in Neural Information Processing Systems 38: Annual Conference
                  on Neural Information Processing Systems 2024, NeurIPS 2024, Vancouver,
                  BC, Canada, December 10 - 15, 2024},
  year         = {2024},
}

@article{sengeReliableClassification2014,
  author       = {Robin Senge and
                  Stefan B{\"{o}}sner and
                  Krzysztof Dembczynski and
                  J{\"{o}}rg Haasenritter and
                  Oliver Hirsch and
                  Norbert Donner{-}Banzhoff and
                  Eyke H{\"{u}}llermeier},
  title        = {Reliable classification: Learning classifiers that distinguish aleatoric
                  and epistemic uncertainty},
  journal      = {Inf. Sci.},
  volume       = {255},
  pages        = {16--29},
  year         = {2014}
}

@inproceedings{dengImageNet2009,
  author       = {Jia Deng and
                  Wei Dong and
                  Richard Socher and
                  Li{-}Jia Li and
                  Kai Li and
                  Li Fei{-}Fei},
  title        = {ImageNet: {A} large-scale hierarchical image database},
  booktitle    = {2009 {IEEE} Computer Society Conference on Computer Vision and Pattern
                  Recognition {(CVPR} 2009), 20-25 June 2009, Miami, Florida, {USA}},
  pages        = {248--255},
  publisher    = {{IEEE} Computer Society},
  year         = {2009}
}

@article{krizhevskyCIFAR102009,
  title={Learning multiple layers of features from tiny images},
  author={Krizhevsky, Alex and Hinton, Geoffrey and others},
  year={2009},
  publisher={Toronto, ON, Canada}
}

@inproceedings{heResNet2016,
  author       = {Kaiming He and
                  Xiangyu Zhang and
                  Shaoqing Ren and
                  Jian Sun},
  title        = {Deep Residual Learning for Image Recognition},
  booktitle    = {2016 {IEEE} Conference on Computer Vision and Pattern Recognition,
                  {CVPR} 2016, Las Vegas, NV, USA, June 27-30, 2016},
  pages        = {770--778},
  publisher    = {{IEEE} Computer Society},
  year         = {2016}
}

@article{birnbaumFoundationsStatistical1962,
  title={On the foundations of statistical inference},
  author={Birnbaum, Allan},
  journal={Journal of the American Statistical Association},
  volume={57},
  number={298},
  pages={269--306},
  year={1962},
  publisher={Taylor \& Francis}
}

@inproceedings{antonucciLikelihoodBased2012,
  author       = {Alessandro Antonucci and
                  Marco E. G. V. Cattaneo and
                  Giorgio Corani},
  editor       = {Salvatore Greco and
                  Bernadette Bouchon{-}Meunier and
                  Giulianella Coletti and
                  Mario Fedrizzi and
                  Benedetto Matarazzo and
                  Ronald R. Yager},
  title        = {Likelihood-Based Robust Classification with Bayesian Networks},
  booktitle    = {Advances in Computational Intelligence - 14th International Conference
                  on Information Processing and Management of Uncertainty in Knowledge-Based
                  Systems, {IPMU} 2012, Catania, Italy, July 9-13, 2012, Proceedings,
                  Part {III}},
  series       = {Communications in Computer and Information Science},
  volume       = {299},
  pages        = {491--500},
  publisher    = {Springer},
  year         = {2012}
}

@article{valdenegrotoroSubEnsembles2019,
  author       = {Matias Valdenegro{-}Toro},
  title        = {Deep Sub-Ensembles for Fast Uncertainty Estimation in Image Classification},
  journal      = {CoRR},
  volume       = {abs/1910.08168},
  year         = {2019},
  eprinttype    = {arXiv},
  eprint       = {1910.08168},
}

@inproceedings{durasovMasksembles2021,
  author       = {Nikita Durasov and
                  Timur M. Bagautdinov and
                  Pierre Baqu{\'{e}} and
                  Pascal Fua},
  title        = {Masksembles for Uncertainty Estimation},
  booktitle    = {{IEEE} Conference on Computer Vision and Pattern Recognition, {CVPR}
                  2021, virtual, June 19-25, 2021},
  pages        = {13539--13548},
  publisher    = {Computer Vision Foundation / {IEEE}},
  year         = {2021},
}

@incollection{levi1978indeterminate,
  title={On indeterminate probabilities},
  author={Levi, Isaac},
  booktitle={Foundations and Applications of Decision Theory: Volume I Theoretical Foundations},
  pages={233--261},
  year={1978},
  publisher={Springer}
}

@inproceedings{hofmanCredalApproach2024,
  title={Quantifying aleatoric and epistemic uncertainty: A credal approach},
  author={Hofman, Paul and Sale, Yusuf and H{\"u}llermeier, Eyke},
  booktitle={ICML 2024 Workshop on Structured Probabilistic Inference \& Generative Modeling},
  year={2024}
}

@inproceedings{petersonCIFAR10H2019,
  author       = {Joshua C. Peterson and
                  Ruairidh M. Battleday and
                  Thomas L. Griffiths and
                  Olga Russakovsky},
  title        = {Human Uncertainty Makes Classification More Robust},
  booktitle    = {2019 {IEEE/CVF} International Conference on Computer Vision, {ICCV}
                  2019, Seoul, Korea (South), October 27 - November 2, 2019},
  pages        = {9616--9625},
  publisher    = {{IEEE}},
  year         = {2019},
}

@inproceedings{lienenCredalSSL2021,
  author       = {Julian Lienen and
                  Eyke H{\"{u}}llermeier},
  editor       = {Marc'Aurelio Ranzato and
                  Alina Beygelzimer and
                  Yann N. Dauphin and
                  Percy Liang and
                  Jennifer Wortman Vaughan},
  title        = {Credal Self-Supervised Learning},
  booktitle    = {Advances in Neural Information Processing Systems 34: Annual Conference
                  on Neural Information Processing Systems 2021, NeurIPS 2021, December
                  6-14, 2021, virtual},
  pages        = {14370--14382},
  year         = {2021},
}

@inproceedings{caprioCredalLearningTheory2024,
  author       = {Michele Caprio and
                  Maryam Sultana and
                  Eleni Elia and
                  Fabio Cuzzolin},
  editor       = {Amir Globersons and
                  Lester Mackey and
                  Danielle Belgrave and
                  Angela Fan and
                  Ulrich Paquet and
                  Jakub M. Tomczak and
                  Cheng Zhang},
  title        = {Credal Learning Theory},
  booktitle    = {Advances in Neural Information Processing Systems 38: Annual Conference
                  on Neural Information Processing Systems 2024, NeurIPS 2024, Vancouver,
                  BC, Canada, December 10 - 15, 2024},
  year         = {2024},
}

@inproceedings{huellermeierCredalUncertainty2022,
  author       = {Eyke H{\"{u}}llermeier and
                  S{\'{e}}bastien Destercke and
                  Mohammad Hossein Shaker},
  editor       = {James Cussens and
                  Kun Zhang},
  title        = {Quantification of Credal Uncertainty in Machine Learning: {A} Critical
                  Analysis and Empirical Comparison},
  booktitle    = {Uncertainty in Artificial Intelligence, Proceedings of the Thirty-Eighth
                  Conference on Uncertainty in Artificial Intelligence, {UAI} 2022,
                  1-5 August 2022, Eindhoven, The Netherlands},
  series       = {Proceedings of Machine Learning Research},
  volume       = {180},
  pages        = {548--557},
  publisher    = {{PMLR}},
  year         = {2022},
}

@inproceedings{chauCredalTests2025,
  author       = {Siu Lun Chau and
                  Antonin Schrab and
                  Arthur Gretton and
                  Dino Sejdinovic and
                  Krikamol Muandet},
  editor       = {Yingzhen Li and
                  Stephan Mandt and
                  Shipra Agrawal and
                  Mohammad Emtiyaz Khan},
  title        = {Credal Two-Sample Tests of Epistemic Uncertainty},
  booktitle    = {International Conference on Artificial Intelligence and Statistics,
                  {AISTATS} 2025, Mai Khao, Thailand, 3-5 May 2025},
  series       = {Proceedings of Machine Learning Research},
  volume       = {258},
  pages        = {127--135},
  publisher    = {{PMLR}},
  year         = {2025},
}

@article{juergensCredalCalibration2025,
  author       = {Mira J{\"{u}}rgens and
                  Thomas Mortier and
                  Eyke H{\"{u}}llermeier and
                  Viktor Bengs and
                  Willem Waegeman},
  title        = {A calibration test for evaluating set-based epistemic uncertainty
                  representations},
  journal      = {CoRR},
  volume       = {abs/2502.16299},
  year         = {2025},
  eprinttype    = {arXiv},
  eprint       = {2502.16299},
}

@inproceedings{schmarjeOneAnnotation2022,
  author       = {Lars Schmarje and
                  Vasco Grossmann and
                  Claudius Zelenka and
                  Sabine Dippel and
                  Rainer Kiko and
                  Mariusz Oszust and
                  Matti Pastell and
                  Jenny Stracke and
                  Anna Valros and
                  Nina Volkmann and
                  Reinhard Koch},
  editor       = {Sanmi Koyejo and
                  S. Mohamed and
                  A. Agarwal and
                  Danielle Belgrave and
                  K. Cho and
                  A. Oh},
  title        = {Is one annotation enough? - {A} data-centric image classification
                  benchmark for noisy and ambiguous label estimation},
  booktitle    = {Advances in Neural Information Processing Systems 35: Annual Conference
                  on Neural Information Processing Systems 2022, NeurIPS 2022, New Orleans,
                  LA, USA, November 28 - December 9, 2022},
  year         = {2022},
}

@inproceedings{lohr2024towards,
  title={Towards aleatoric and epistemic uncertainty in medical image classification},
  author={L{\"o}hr, Timo and Ingrisch, Michael and H{\"u}llermeier, Eyke},
  booktitle={International Conference on Artificial Intelligence in Medicine},
  pages={145--155},
  year={2024},
  organization={Springer}
}

@article{miele2023multi,
  title={Multi-horizon wind power forecasting using multi-modal spatio-temporal neural networks},
  author={Miele, Eric Stefan and Ludwig, Nicole and Corsini, Alessandro},
  journal={Energies},
  volume={16},
  number={8},
  pages={3522},
  year={2023},
  publisher={MDPI}
}

@article{nguyen2025credal,
  title={Credal ensembling in multi-class classification},
  author={Nguyen, Vu-Linh and Zhang, Haifei and Destercke, S{\'e}bastien},
  journal={Machine Learning},
  volume={114},
  number={1},
  pages={19},
  year={2025},
  publisher={Springer}
}

@article{loehrCredalPrediction2025,
  author       = {Timo L{\"{o}}hr and
                  Paul Hofman and
                  Felix Mohr and
                  Eyke H{\"{u}}llermeier},
  title        = {Credal Prediction based on Relative Likelihood},
  journal      = {CoRR},
  volume       = {abs/2505.22332},
  year         = {2025},
  eprinttype    = {arXiv},
  eprint       = {2505.22332},
}

@article{wangCredalIntervalNet2025,
  author       = {Kaizheng Wang and
                  Keivan Shariatmadar and
                  Shireen Kudukkil Manchingal and
                  Fabio Cuzzolin and
                  David Moens and
                  Hans Hallez},
  title        = {CreINNs: Credal-Set Interval Neural Networks for Uncertainty Estimation
                  in Classification Tasks},
  journal      = {Neural Networks},
  volume       = {185},
  pages        = {107198},
  year         = {2025},
}

@article{caprioCredalBNNs2024,
  author       = {Michele Caprio and
                  Souradeep Dutta and
                  Kuk Jin Jang and
                  Vivian Lin and
                  Radoslav Ivanov and
                  Oleg Sokolsky and
                  Insup Lee},
  title        = {Credal Bayesian Deep Learning},
  journal      = {Trans. Mach. Learn. Res.},
  volume       = {2024},
  year         = {2024},
}

@article{hollmannAccuratePredictions2025,
  author       = {Noah Hollmann and
                  Samuel M{\"{u}}ller and
                  Lennart Purucker and
                  Arjun Krishnakumar and
                  Max K{\"{o}}rfer and
                  Shi Bin Hoo and
                  Robin Tibor Schirrmeister and
                  Frank Hutter},
  title        = {Accurate predictions on small data with a tabular foundation model},
  journal      = {Nat.},
  volume       = {637},
  number       = {8044},
  pages        = {319--326},
  year         = {2025},
}

@article{walley1991statistical,
  title={Statistical reasoning with imprecise probabilities},
  author={Walley, Peter},
  year={1991}
}

@article{abellanDisaggregatedMeasure2006,
  author       = {Joaqu{\'{\i}}n Abell{\'{a}}n and
                  George J. Klir and
                  Seraf{\'{\i}}n Moral},
  title        = {Disaggregated total uncertainty measure for credal sets},
  journal      = {Int. J. Gen. Syst.},
  volume       = {35},
  number       = {1},
  pages        = {29--44},
  year         = {2006},
}

@article{margrafALPBench2024,
  author       = {Valentin Margraf and
                  Marcel Wever and
                  Sandra Gilhuber and
                  Gabriel Marques Tavares and
                  Thomas Seidl and
                  Eyke H{\"{u}}llermeier},
  title        = {ALPBench: {A} Benchmark for Active Learning Pipelines on Tabular Data},
  journal      = {CoRR},
  volume       = {abs/2406.17322},
  year         = {2024},
  eprinttype    = {arXiv},
  eprint       = {2406.17322},
}

@inproceedings{aminiEvidentialRegression2020,
  author       = {Alexander Amini and
                  Wilko Schwarting and
                  Ava Soleimany and
                  Daniela Rus},
  editor       = {Hugo Larochelle and
                  Marc'Aurelio Ranzato and
                  Raia Hadsell and
                  Maria{-}Florina Balcan and
                  Hsuan{-}Tien Lin},
  title        = {Deep Evidential Regression},
  booktitle    = {Advances in Neural Information Processing Systems 33: Annual Conference
                  on Neural Information Processing Systems 2020, NeurIPS 2020, December
                  6-12, 2020, virtual},
  year         = {2020},
}

@inproceedings{wangCredalWrapper2025,
  author       = {Kaizheng Wang and
                  Fabio Cuzzolin and
                  Keivan Shariatmadar and
                  David Moens and
                  Hans Hallez},
  title        = {Credal Wrapper of Model Averaging for Uncertainty Estimation in Classification},
  booktitle    = {The Thirteenth International Conference on Learning Representations,
                  {ICLR} 2025, Singapore, April 24-28, 2025},
  publisher    = {OpenReview.net},
  year         = {2025},
}

@inproceedings{sensoyEvidentialLearning2018,
  author       = {Murat Sensoy and
                  Lance M. Kaplan and
                  Melih Kandemir},
  editor       = {Samy Bengio and
                  Hanna M. Wallach and
                  Hugo Larochelle and
                  Kristen Grauman and
                  Nicol{\`{o}} Cesa{-}Bianchi and
                  Roman Garnett},
  title        = {Evidential Deep Learning to Quantify Classification Uncertainty},
  booktitle    = {Advances in Neural Information Processing Systems 31: Annual Conference
                  on Neural Information Processing Systems 2018, NeurIPS 2018, December
                  3-8, 2018, Montr{\'{e}}al, Canada},
  pages        = {3183--3193},
  year         = {2018},
}

@inproceedings{amersfoortDeterministicUncertainty2020,
  author       = {Joost van Amersfoort and
                  Lewis Smith and
                  Yee Whye Teh and
                  Yarin Gal},
  title        = {Uncertainty Estimation Using a Single Deep Deterministic Neural Network},
  booktitle    = {Proceedings of the 37th International Conference on Machine Learning,
                  {ICML} 2020, 13-18 July 2020, Virtual Event},
  series       = {Proceedings of Machine Learning Research},
  volume       = {119},
  pages        = {9690--9700},
  publisher    = {{PMLR}},
  year         = {2020},
}

@inproceedings{mukhotiDeterministicUncertainty2023,
  author       = {Jishnu Mukhoti and
                  Andreas Kirsch and
                  Joost van Amersfoort and
                  Philip H. S. Torr and
                  Yarin Gal},
  title        = {Deep Deterministic Uncertainty: {A} New Simple Baseline},
  booktitle    = {{IEEE/CVF} Conference on Computer Vision and Pattern Recognition,
                  {CVPR} 2023, Vancouver, BC, Canada, June 17-24, 2023},
  pages        = {24384--24394},
  publisher    = {{IEEE}},
  year         = {2023},
}

@inproceedings{Liu2020SimpleAP,
  author       = {Jeremiah Z. Liu and
                  Zi Lin and
                  Shreyas Padhy and
                  Dustin Tran and
                  Tania Bedrax{-}Weiss and
                  Balaji Lakshminarayanan},
  editor       = {Hugo Larochelle and
                  Marc'Aurelio Ranzato and
                  Raia Hadsell and
                  Maria{-}Florina Balcan and
                  Hsuan{-}Tien Lin},
  title        = {Simple and Principled Uncertainty Estimation with Deterministic Deep
                  Learning via Distance Awareness},
  booktitle    = {Advances in Neural Information Processing Systems 33: Annual Conference
                  on Neural Information Processing Systems 2020, NeurIPS 2020, December
                  6-12, 2020, virtual},
  year         = {2020}
}

@inproceedings{nguyenEpistemicUncertaintySampling2019,
  author       = {Vu{-}Linh Nguyen and
                  S{\'{e}}bastien Destercke and
                  Eyke H{\"{u}}llermeier},
  editor       = {Petra Kralj Novak and
                  Tomislav Smuc and
                  Saso Dzeroski},
  title        = {Epistemic Uncertainty Sampling},
  booktitle    = {Discovery Science - 22nd International Conference, {DS} 2019, Split,
                  Croatia, October 28-30, 2019, Proceedings},
  series       = {Lecture Notes in Computer Science},
  volume       = {11828},
  pages        = {72--86},
  publisher    = {Springer},
  year         = {2019}
}

@inproceedings{Berry2024SheddingLO,
  title={Shedding Light on Large Generative Networks: Estimating Epistemic Uncertainty in Diffusion Models},
  author={Lucas Berry and Axel Brando and David Meger},
  booktitle={Conference on Uncertainty in Artificial Intelligence},
  year={2024}
}

@article{scikit-learn,
  title={Scikit-learn: Machine Learning in {P}ython},
  author={Pedregosa, F. and Varoquaux, G. and Gramfort, A. and Michel, V.
          and Thirion, B. and Grisel, O. and Blondel, M. and Prettenhofer, P.
          and Weiss, R. and Dubourg, V. and Vanderplas, J. and Passos, A. and
          Cournapeau, D. and Brucher, M. and Perrot, M. and Duchesnay, E.},
  journal={Journal of Machine Learning Research},
  volume={12},
  pages={2825--2830},
  year={2011}
}

@article{OpenML2025,
  author = {Bernd Bischl and Giuseppe Casalicchio and Taniya Das and Matthias Feurer and Sebastian Fischer and Pieter Gijsbers and Subhaditya Mukherjee and Andreas C Müller and László Németh and Luis Oala and Lennart Purucker and Sahithya Ravi and Jan N van Rijn and Prabhant Singh and Joaquin Vanschoren and Jos van der Velde and Marcel Wever},
  title = {OpenML: Insights from 10 years and more than a thousand papers},
  journal = {Patterns},
  volume = {6},
  number = {7},
  year = {2025},
  pages = {101317},
  publisher = {Cell Press}
}

@inproceedings{Yang2023BayesianLA,
  author       = {Adam X. Yang and
                  Maxime Robeyns and
                  Xi Wang and
                  Laurence Aitchison},
  title        = {Bayesian Low-rank Adaptation for Large Language Models},
  booktitle    = {The Twelfth International Conference on Learning Representations,
                  {ICLR} 2024, Vienna, Austria, May 7-11, 2024},
  publisher    = {OpenReview.net},
  year         = {2024}
}

@inproceedings{Wang2024BLoBBL,
  author       = {Yibin Wang and
                  Haizhou Shi and
                  Ligong Han and
                  Dimitris N. Metaxas and
                  Hao Wang},
  editor       = {Amir Globersons and
                  Lester Mackey and
                  Danielle Belgrave and
                  Angela Fan and
                  Ulrich Paquet and
                  Jakub M. Tomczak and
                  Cheng Zhang},
  title        = {BLoB: Bayesian Low-Rank Adaptation by Backpropagation for Large Language
                  Models},
  booktitle    = {Advances in Neural Information Processing Systems 38: Annual Conference
                  on Neural Information Processing Systems 2024, NeurIPS 2024, Vancouver,
                  BC, Canada, December 10 - 15, 2024},
  year         = {2024}
}

@inproceedings{radford2021CLIP,
  author       = {Alec Radford and
                  Jong Wook Kim and
                  Chris Hallacy and
                  Aditya Ramesh and
                  Gabriel Goh and
                  Sandhini Agarwal and
                  Girish Sastry and
                  Amanda Askell and
                  Pamela Mishkin and
                  Jack Clark and
                  Gretchen Krueger and
                  Ilya Sutskever},
  title        = {Learning Transferable Visual Models From Natural Language Supervision},
  booktitle    = {Proceedings of the 38th International Conference on Machine Learning,
                  {ICML} 2021, 18-24 July 2021, Virtual Event},
  series       = {Proceedings of Machine Learning Research},
  volume       = {139},
  pages        = {8748--8763},
  publisher    = {{PMLR}},
  year         = {2021},
}

@inproceedings{zhai2023siglip,
  author       = {Xiaohua Zhai and
                  Basil Mustafa and
                  Alexander Kolesnikov and
                  Lucas Beyer},
  title        = {Sigmoid Loss for Language Image Pre-Training},
  booktitle    = {{IEEE/CVF} International Conference on Computer Vision, {ICCV} 2023,
                  Paris, France, October 1-6, 2023},
  pages        = {11941--11952},
  publisher    = {{IEEE}},
  year         = {2023},
}

@article{zhang2024biomedclip,
    title={A Multimodal Biomedical Foundation Model Trained from Fifteen Million Image–Text Pairs},
    author={Sheng Zhang and Yanbo Xu and Naoto Usuyama and Hanwen Xu and Jaspreet Bagga and Robert Tinn and Sam Preston and Rajesh Rao and Mu Wei and Naveen Valluri and Cliff Wong and Andrea Tupini and Yu Wang and Matt Mazzola and Swadheen Shukla and Lars Liden and Jianfeng Gao and Angela Crabtree and Brian Piening and Carlo Bifulco and Matthew P. Lungren and Tristan Naumann and Sheng Wang and Hoifung Poon},
    journal={NEJM AI},
    year={2024},
    volume={2},
    number={1},
    @doi={10.1056/AIoa2400640},
}

@article{tschannen2025siglip2,
    title={{SigLIP 2: M}ultilingual Vision-Language Encoders with Improved Semantic Understanding, Localization, and Dense Features}, 
    author={Michael Tschannen and Alexey Gritsenko and Xiao Wang and Muhammad Ferjad Naeem and Ibrahim Alabdulmohsin and Nikhil Parthasarathy and Talfan Evans and Lucas Beyer and Ye Xia and Basil Mustafa and Olivier Hénaff and Jeremiah Harmsen and Andreas Steiner and Xiaohua Zhai},
    journal={preprint arXiv:2502.14786},
    year={2025},
    @doi={10.48550/arXiv.2502.14786}
}

@inproceedings{netzerReading2011,title	= {Reading Digits in Natural Images with Unsupervised Feature Learning},author	= {Yuval Netzer and Tao Wang and Adam Coates and Alessandro Bissacco and Bo Wu and Andrew Y. Ng},year	= {2011},booktitle	= {NIPS Workshop on Deep Learning and Unsupervised Feature Learning 2011}}

@article{zhouPlaces2018,
  author       = {Bolei Zhou and
                  {\`{A}}gata Lapedriza and
                  Aditya Khosla and
                  Aude Oliva and
                  Antonio Torralba},
  title        = {Places: {A} 10 Million Image Database for Scene Recognition},
  journal      = {{IEEE} Trans. Pattern Anal. Mach. Intell.},
  volume       = {40},
  number       = {6},
  pages        = {1452--1464},
  year         = {2018},
  bibsource    = {dblp computer science bibliography, https://dblp.org}
}

@article{xiaoFashionMNIST2017,
  author       = {Han Xiao and
                  Kashif Rasul and
                  Roland Vollgraf},
  title        = {Fashion-MNIST: a Novel Image Dataset for Benchmarking Machine Learning
                  Algorithms},
  journal      = {CoRR},
  volume       = {abs/1708.07747},
  year         = {2017},
  eprinttype    = {arXiv},
  eprint       = {1708.07747},
  bibsource    = {dblp computer science bibliography, https://dblp.org}
}

@article{wilks1938large,
  title={The large-sample distribution of the likelihood ratio for testing composite hypotheses},
  author={Wilks, Samuel S},
  journal={The annals of mathematical statistics},
  volume={9},
  number={1},
  pages={60--62},
  year={1938},
  publisher={JSTOR}
}

@book{cox1979theoretical,
  title={Theoretical statistics},
  author={Cox, David Roxbee and Hinkley, David Victor},
  year={1979},
  publisher={CRC Press}
}

@incollection{edwards2018likelihood,
  title={Likelihood},
  author={Edwards, Anthony William Fairbank},
  booktitle={The New Palgrave Dictionary of Economics},
  pages={7857--7860},
  year={2018},
  publisher={Springer}
}

@book{royall2017statistical,
  title={Statistical evidence: a likelihood paradigm},
  author={Royall, Richard},
  year={2017},
  publisher={Routledge}
}

@inproceedings{nieChaosNLI2020,
  author       = {Yixin Nie and
                  Xiang Zhou and
                  Mohit Bansal},
  editor       = {Bonnie Webber and
                  Trevor Cohn and
                  Yulan He and
                  Yang Liu},
  title        = {What Can We Learn from Collective Human Opinions on Natural Language
                  Inference Data?},
  booktitle    = {Proceedings of the 2020 Conference on Empirical Methods in Natural
                  Language Processing, {EMNLP} 2020, Online, November 16-20, 2020},
  pages        = {9131--9143},
  publisher    = {Association for Computational Linguistics},
  year         = {2020},
  bibsource    = {dblp computer science bibliography, https://dblp.org}
}

@inproceedings{kingmaAdam2015,
  author       = {Diederik P. Kingma and
                  Jimmy Ba},
  editor       = {Yoshua Bengio and
                  Yann LeCun},
  title        = {Adam: {A} Method for Stochastic Optimization},
  booktitle    = {3rd International Conference on Learning Representations, {ICLR} 2015,
                  San Diego, CA, USA, May 7-9, 2015, Conference Track Proceedings},
  year         = {2015},
  bibsource    = {dblp computer science bibliography, https://dblp.org}
}

@inproceedings{loshchilovSGDR2017,
  author       = {Ilya Loshchilov and
                  Frank Hutter},
  title        = {{SGDR:} Stochastic Gradient Descent with Warm Restarts},
  booktitle    = {5th International Conference on Learning Representations, {ICLR} 2017,
                  Toulon, France, April 24-26, 2017, Conference Track Proceedings},
  publisher    = {OpenReview.net},
  year         = {2017},
  bibsource    = {dblp computer science bibliography, https://dblp.org}
}

@article{qualityMRI,
  author       = {Rafal Obuchowicz and
                  Mariusz Oszust and
                  Adam Pi{\'{o}}rkowski},
  title        = {Interobserver variability in quality assessment of magnetic resonance
                  images},
  journal      = {{BMC} Medical Imaging},
  volume       = {20},
  number       = {1},
  pages        = {109},
  year         = {2020}
}

@inproceedings{sale2023volume,
  title={Is the volume of a credal set a good measure for epistemic uncertainty?},
  author={Sale, Yusuf and Caprio, Michele and H{\"u}llermeier, Eyke},
  booktitle={Uncertainty in Artificial Intelligence},
  pages={1795--1804},
  year={2023},
  organization={PMLR}
}

@article{chau2025integral,
  title={Integral Imprecise Probability Metrics},
  author={Chau, Siu Lun and Caprio, Michele and Muandet, Krikamol},
  journal={arXiv preprint arXiv:2505.16156},
  year={2025}
}

@ARTICLE{2020SciPy-NMeth,
  author  = {Virtanen, Pauli and Gommers, Ralf and Oliphant, Travis E. and
            Haberland, Matt and Reddy, Tyler and Cournapeau, David and
            Burovski, Evgeni and Peterson, Pearu and Weckesser, Warren and
            Bright, Jonathan and {van der Walt}, St{\'e}fan J. and
            Brett, Matthew and Wilson, Joshua and Millman, K. Jarrod and
            Mayorov, Nikolay and Nelson, Andrew R. J. and Jones, Eric and
            Kern, Robert and Larson, Eric and Carey, C J and
            Polat, {\.I}lhan and Feng, Yu and Moore, Eric W. and
            {VanderPlas}, Jake and Laxalde, Denis and Perktold, Josef and
            Cimrman, Robert and Henriksen, Ian and Quintero, E. A. and
            Harris, Charles R. and Archibald, Anne M. and
            Ribeiro, Ant{\^o}nio H. and Pedregosa, Fabian and
            {van Mulbregt}, Paul and {SciPy 1.0 Contributors}},
  title   = {{{SciPy} 1.0: Fundamental Algorithms for Scientific
            Computing in Python}},
  journal = {Nature Methods},
  year    = {2020},
  volume  = {17},
  pages   = {261--272},
  adsurl  = {https://rdcu.be/b08Wh},
  doi     = {10.1038/s41592-019-0686-2},
}
\bibliographystyle{iclr2026_conference}

\clearpage 
\appendix
\section*{Organization of the Appendix}

We structure the appendix as follows: \cref{app:proofs} provides a proof for \cref{prop:nestedness} and \cref{prop:credal-endpoints-multi}, followed by a detailed description of our implementation in \cref{app:implementation details}. \cref{app:guide-on-visualization} describes the newly introduced \textit{credal spider plots} in detail, before we give details about the different setups of our experiments in \cref{app:exp-details}. We finish with additional results in \cref{app:additional-results} and two ablation studies in \cref{app:ablations}.

\startcontents[sections]
\printcontents[sections]{l}{1}{\setcounter{tocdepth}{3}}

\clearpage
\section{Proofs}\label{app:proofs}

\begin{proof}[Proof of Proposition \ref{prop:nestedness}]
Write $\gamma(h)=L(h)/\sup_{g\in\mathcal H}L(g)\in[0,1]$, $\cC_\alpha=\{h\in\cH:\gamma(h)\ge \alpha\}$, and $\cQ_{\vx,\alpha}=\{p(\cdot\mid \vx,h):h\in\cC_\alpha\}$. If $0<\alpha_2\le \alpha_1\le 1$ and $h\in\cC_{\alpha_1}$, then $\gamma(h)\ge \alpha_1\ge \alpha_2$, hence $h\in\cC_{\alpha_2}$. Thus $\cC_{\alpha_1}\subseteq \cC_{\alpha_2}$ and, by applying the prediction map, $\cQ_{\vx,\alpha_1}\subseteq \cQ_{\vx,\alpha_2}$. Consequently, for each class $k$, $\underline p_k(\vx;\alpha_1)=\inf_{h\in\cC_{\alpha_1}}p_k(\vx,h)\ \ge\ \inf_{h\in\cC_{\alpha_2}}p_k(\vx,h)
=\underline p_k(\vx;\alpha_2)$, and similarly $\overline p_k(\vx;\alpha_1)=\sup_{h\in\cC_{\alpha_1}}p_k(\vx,h)\ \le\ \sup_{h\in\cC_{\alpha_2}}p_k(\vx,h) = \overline p_k(\vx;\alpha_2).$ 

If an MLE $h^{\mathrm{ML}}$ exist, then $\gamma(h^{\mathrm{ML}})=1$ and $\cC_{1}$ is the set of MLEs.  In particular, if the (predictive) MLE is unique at $\vx$ (e.g., the MLE is unique, or all MLEs agree at $\vx$), then
\begin{align*}
\cQ_{\vx,1}=\{\,p(\cdot\mid \vx,h^{\mathrm{ML}})\,\}
\quad\text{and}\quad
[\underline p_k(\vx;1),\overline p_k(\vx;1)]=\{\,p_k(\vx,h^{\mathrm{ML}})\,\}.
\end{align*}

As $\alpha\downarrow 0$, the sets $\cC_\alpha$ increase to $\{h\in\cH:L(h)>0\}$, hence $\cQ_{\vx,\alpha}\uparrow \{\,p(\cdot\mid \vx,h):L(h)>0\,\}$. For an increasing family of sets, coordinate-wise infima over $\cQ_{\vx,\alpha}$ decrease to the infimum over the union, and suprema increase to the supremum. Therefore,
\begin{align*}
\underline p_k(\vx;\alpha)\downarrow \inf_{\{h:L(h)>0\}}p_k(\vx,h),
\qquad
\overline p_k(\vx;\alpha)\uparrow \sup_{\{h:L(h)>0\}}p_k(\vx,h),
\end{align*}
as claimed. This completes the proof. 
\end{proof}

\begin{proof}[Proof of Proposition \ref{prop:credal-endpoints-multi}]
\textnormal{(a)} For each $n$, define $\phi_n(\vc):=\log p^{(n)}_{\,y^{(n)}}(\vc)$. Then
\begin{align*}
\phi_n(\vc)=\langle \mathbf{e}_{y^{(n)}},c\rangle - \log\sum_{l=1}^K \exp\!\big(z^{(n)}_l+c_l\big) + \text{const}(z^{(n)}),
\end{align*}
hence $\phi_n$ is $C^\infty$. Direct differentiation yields 
\begin{align*}
\nabla \phi_n(\vc)= \mathbf{e}_{y^{(n)}}-p^{(n)}(\vc),\qquad
\nabla^2 \phi_n(\vc)=-\Big(\mathrm{Diag}\big(p^{(n)}(\vc)\big)-p^{(n)}(\vc)\,p^{(n)}(\vc)^\top\Big)\,\preceq\,0,
\end{align*}
so each $\phi_n$ is concave, and thus $\Delta\ell(\vc)=\sum_n\phi_n(\vc)-\sum_n\phi_n(0)$ is $C^\infty$ and concave. The Hessian matrices in the sum are positive semi-definite with nullspace containing $\mathrm{span}\{\mathbf 1\}$, since $\big(\mathrm{Diag}(p)-pp^\top\big)\mathbf 1=0$; therefore $\nabla^2\Delta\ell(\vc)\prec 0$ on $S$ provided at least two classes appear. Concavity implies that every level-set $\{\vc:\Delta\ell(\vc)\ge\tau\}$ is convex. Non-emptiness follows from $\Delta\ell(0)=0\ge\log\alpha$. 

To see compactness of $F_S(\alpha)$ when at least two classes appear, fix $\vd\in S\setminus\{0\}$ and consider $\vc=t\vd$ with $t\to\infty$. Then
\begin{align*}
\Delta\ell(t\vd)
=\sum_{n=1}^N \Big(\langle \mathbf{e}_{y^{(n)}}, t\vd\rangle - \log\sum_{l=1}^K e^{z^{(n)}_l + t d_l}\Big)+\text{const}
= t\sum_{j=1}^K N_j d_j - \sum_{n=1}^N \log\sum_{l=1}^K e^{z^{(n)}_l + t d_l} + \text{const}.
\end{align*}
As $t\to\infty$, $\log\sum_{l} e^{z^{(n)}_l + t d_l} = t\max_{l} d_l + O(1)$, hence
\begin{align*}
\Delta\ell(t\vd)= t\Big(\sum_{j=1}^K N_j d_j - N\max_l d_l\Big) + O(1).
\end{align*}
Because $\vd\in S$ and at least two $d_l$ differ, we have $\sum_j N_j d_j < N\max_l d_l$. Thus $\Delta\ell(t\vd)\to -\infty$ along every ray in $S$, proving coercivity on $S$, and hence compactness of $F_S(\alpha)$.

\noindent\textnormal{(b)} Follows by differentiating
\begin{align*}
\log p_k(\vx;\vc)=(z_k(\vx)+c_k)-\log\sum_{l=1}^K e^{z_l(\vx)+c_l}.
\end{align*}
The gradient and Hessian are as stated, and negative semi-definiteness of the Hessian shows concavity. The coordinate-wise monotonicity is immediate from $\partial \log p_k/\partial c_i=\delta_{ik}-p_i(\vx;\vc)$ together with $p_i(\vx;\vc)\in(0,1)$.

\noindent\textnormal{(c)} Since $\log p_k(\vx;\cdot)$ is concave and $F(\alpha)$ is convex, $\sup_{\vc\in F(\alpha)} \log p_k(\vx;\vc)$ is a concave maximization, i.e., a convex optimization problem. Existence of an optimizer on $F_S(\alpha)$ follows from compactness; invariance along $\mathrm{span}\{\mathbf 1\}$ yields uniqueness only modulo translations by $\mathbf 1$. The equality between $\sup p_k$ and $\exp(\sup \log p_k)$ follows from strict monotonicity of the exponential.

\noindent\textnormal{(d)} Because $\log p_k(\vx;\cdot)$ is concave, minimizing $p_k$ is equivalent to minimizing a concave function, which is not a convex optimization problem in general. Nevertheless, on the compact convex set $F_S(\alpha)$, a minimizer exists and is attained at an extreme point by standard results on concave functions over convex compact sets. 

This completes the proof. 
\end{proof}

\begin{proof}[Proof of Corollary \ref{cor:credal-endpoints-uni}]
Fix $k\in\{1,\dots,K\}$ and restrict the multivariate objects of Proposition~\ref{prop:credal-endpoints-multi} to the affine line $\vc=t\,\mathbf e_k$, $t\in\mathbb R$.

\noindent\textnormal{(a)} Since $\Delta\ell(\cdot)$ is concave on $\mathbb R^K$ by Proposition~\ref{prop:credal-endpoints-multi}(a), its restriction
$t\mapsto\Delta\ell_k(t)$ is concave on $\mathbb R$; non-emptiness follows from $\Delta\ell_k(0)=0\ge\log\alpha$. Moreover, if $0<N_k<N$, then $\Delta\ell_k(t)\to -\infty$ as $t\to+\infty$ (terms with $y^{(n)}\neq k$ decay like $-t$) and as $t\to-\infty$ (terms with $y^{(n)}=k$ decay like $t$), so $F_k(\alpha)$ is a compact interval $[t_k^-,t_k^+]$. If $N_k\in\{0,N\}$, the same tail check shows $F_k(\alpha)$ is a closed (possibly half-infinite) interval.

\noindent\textnormal{(b)} From Proposition~\ref{prop:credal-endpoints-multi}(b),
$\nabla \log p_k(\vx;\vc)=\mathbf e_k-p(\vx;\vc)$. Along the line $\vc=t\,\mathbf e_k$,
\begin{align*}
\frac{d}{dt}\log p_k\big(\vx;t\big)
= \big(\mathbf e_k-p(\vx;t\,\mathbf e_k)\big)^\top \mathbf e_k
= 1 - p_k\big(\vx;t\big),
\end{align*}
hence $\frac{d}{dt} p_k(\vx;t)=p_k(\vx;t)\big(1-p_k(\vx;t)\big)>0$. Thus $t\mapsto p_k(\vx;t)$ is strictly increasing.

\noindent\textnormal{(c)} Since $F_k(\alpha)$ is an interval and $t\mapsto p_k(\vx;t)$ is strictly increasing, the infimum/supremum over
$F_k(\alpha)$ are attained at the endpoints:
\begin{align*}
\underline p_k(\vx)=p_k\big(\vx;t_k^-\big),\qquad
\overline p_k(\vx)=p_k\big(\vx;t_k^+\big).
\end{align*}

\noindent\textnormal{(d)} The feasible set can be written as $\{t\in\mathbb R:\ -\Delta\ell_k(t)\le -\log\alpha\}$, where $-\Delta\ell_k$ is convex. Minimizing $t$ (resp.\ $-t$) over this convex set is a convex program whose optimizer is exactly the left (resp.\ right) endpoint $t_k^-$ (resp.\ $t_k^+$). In the half-infinite cases $N_k\in\{0,N\}$ the same conclusions hold with the appropriate limits $t_k^-=-\infty, t_k^+=+\infty $ (so $p_k(\vx;t_k^-)=0$ or $p_k(\vx;t_k^+)=1$).

This completes the proof. 
\end{proof}

\clearpage
\section{Implementation Details}\label{app:implementation details}
In this section, we provide a detailed description of how our method EffCre is practically implemented, including the optimization procedure, the evaluation of probability intervals, and the steps taken to ensure computational efficiency.

As discussed in \cref{sec:decalib}, given the logits of the MLE, our method essentially solves two convex optimization problems per class to determine the boundaries---namely, the lower and upper probabilities---of a plausible interval according to the relative likelihood constraint. Specifically, for each class logit of the MLE, we add a value to the logit to perturb the resulting probability, thereby deriving a bound for the plausible probability interval as a result of the optimization. In practice, we use the \texttt{minimize} function from SciPy \citep{2020SciPy-NMeth} to optimize this value, with the relative likelihood threshold as a constraint, an initial solution of 0, and bounds set to $(-10000, 10000)$. Roughly speaking, each optimization produces a constant that is added to a single class logit, giving the lower (or upper) bound of the plausible probability interval for that class, which is used to construct the box credal set $\Box_{\vx,\alpha}$. As a result, applying our method to a dataset requires solving $2K$ convex optimization problems for each value of $\alpha$.

The constants obtained by our method, EffCre, can then be used to evaluate our method on test data instances, thus, constructing probability intervals, and thereby credal sets. Each interval bound is directly associated with a specific relative likelihood, which served as the constraint during the optimization. For models we trained ourselves, we use the training dataset to evaluate the log-likelihood and compare it with that of the maximum likelihood estimator (MLE) predictor to compute the relative likelihood, as described in \cref{sec:rlh-int}. When the \emph{original} training data is unavailable, we instead use a subset of the target dataset to compute the relative likelihood budget. For example, in the case of \model{CLIP}, we do not have access to the \emph{original} training data, which spans many benchmark dataset in addition to a large sample of images from the internet. Since we want to make credal predictions for \dataset{CIFAR-10}, we use the respective train split of \dataset{CIFAR-10} to compute the (relative) log-likelihoods in order to solve the optimization problem described above. Credal predictions are then made using the respective test split of the dataset.

In general, our setup allows for straightforward computation of alpha-cuts once results for multiple alpha values have been obtained, a task made feasible by the efficiency of our method. The implementation is simple and intuitive; for further clarity, we refer to the function \texttt{classwise\_adding\_optim\_logit} in the code \url{https://github.com/pwhofman/efficient-credal-prediction}.

\clearpage
\section{Guide on Interpreting Credal Spider Plots}\label{app:guide-on-visualization}
So far, the quantitative evaluation of credal sets has mainly been restricted to the three class setting, due to the inability to visualize credal sets in a $(K-1)$-simplex for $K>3$. 
As many machine learning problems involve more then three classes and as a visual representation of the output of models can give useful insight, it is important to be able to have such a visual representation. To enable this, we propose \emph{credal spider plots}---these plots offer an intuitive way to evaluate the interval-based credal sets. Given an instance that we want to evaluate, we plot the different classes as variables in the spider diagram and generate bars, starting at the lower probability and ending at the upper probability, for each class, which represent the (plausible) probability intervals. The ground-truth distributions is then plotted as multiple dots (depending on the number of classes with non-zero probability mass) on the radii corresponding to the given probability mass of a class. As our method relies on the maximum likelihood estimate (MLE), we additionally plot the MLE in a similar way. 

In \cref{sec:clip_main}, we sort instances in descending order by the aleatoric and epistemic uncertainty associated with the predicted credal set using measures by \cite{abellanDisaggregatedMeasure2006} that have been proposed on the basis of a number of suitable axioms. Specifically,
\begin{equation}\label{eq:measures}
    \underbrace{\overline{\operatorname{S}}(\cQ_{\vx})}_{\operatorname{TU}(\cQ_{\vx}))} = \underbrace{\underline{\operatorname{S}}(\cQ_{\vx})}_{\operatorname{AU}(\cQ_{\vx}))} + \underbrace{(\overline{\operatorname{S}}(\cQ_{\vx}) - \underline{\operatorname{S}}(\cQ_{\vx}))}_{\operatorname{EU}(\cQ_{\vx}))}
\end{equation}
Therefore, maximum aleatoric uncertainty (lower entropy) will manifest itself in the credal spider plot for $K$ classes as having intervals that include $1/K$ for all $K$ classes. The maximum epistemic uncertainty (difference upper and lower entropy) is obtained by having similar plausible intervals as for aleatoric uncertainty, but additionally, the plausible interval for (at least) one class should admit a probability of 1. 
Besides, this instance-wise coverage and efficiency can also easily be observed from the credal spider plot by evaluating whether the ground-truth point fall into the plausible intervals and by considering the average length of the aforementioned intervals, respectively.

\begin{figure}[h]
    \centering
    \hfill
    \begin{minipage}[c]{0.20\textwidth}
        \includegraphics[width=\linewidth]{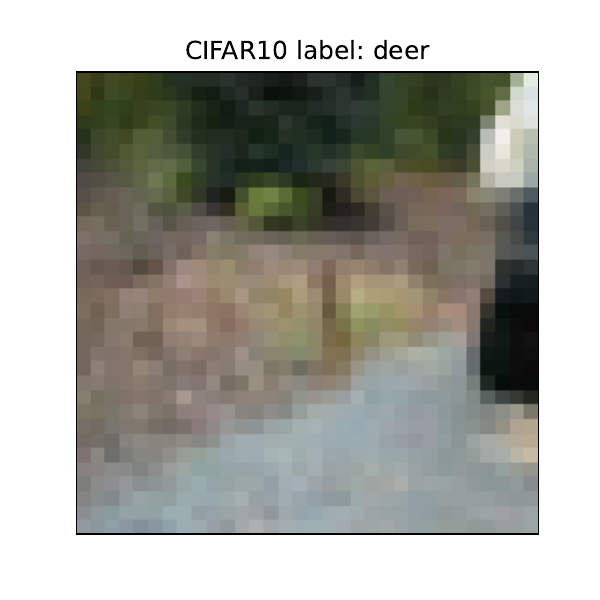}
    \end{minipage}
    \hfill
    \begin{minipage}[c]{0.26\textwidth}
        \includegraphics[width=\linewidth]{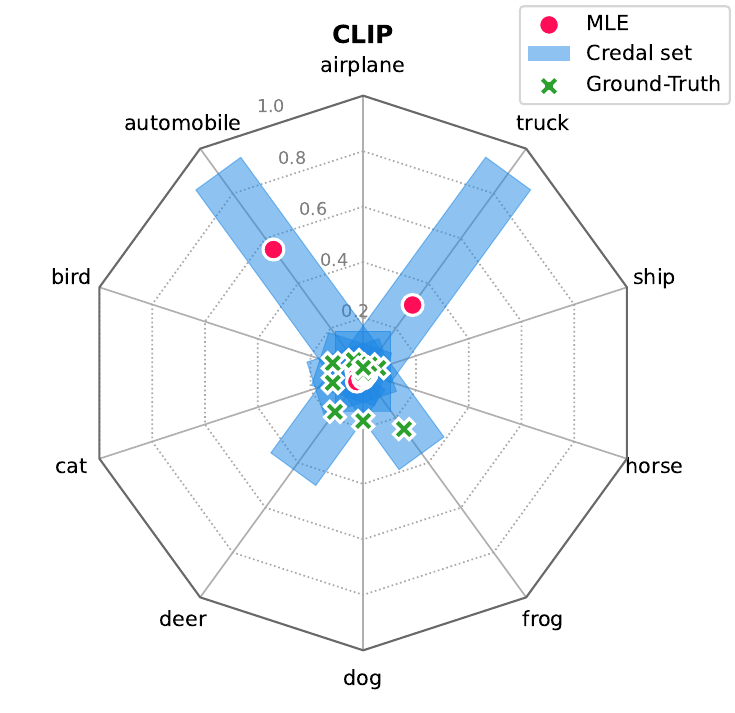}
    \end{minipage}
    \hfill
    \begin{minipage}[c]{0.26\textwidth}
        \includegraphics[width=\linewidth]{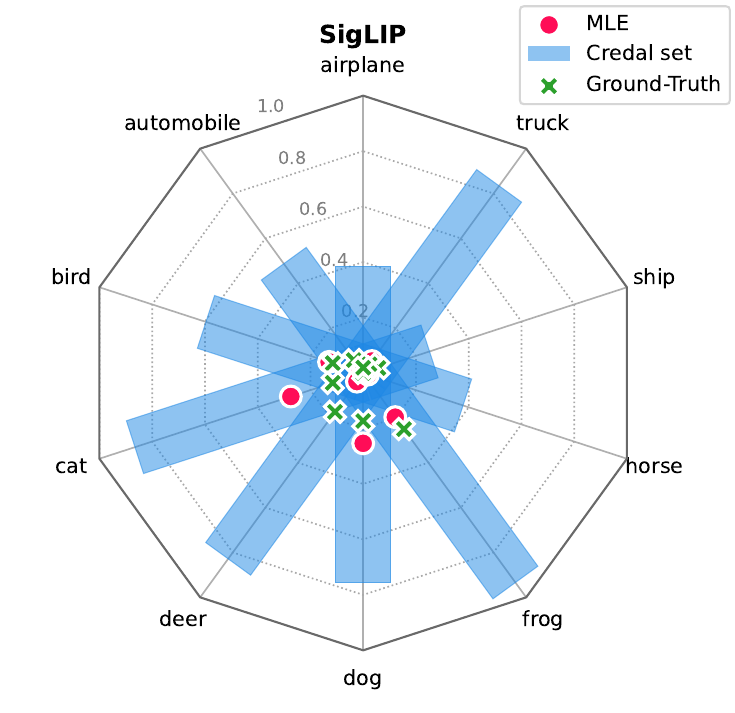}
    \end{minipage}
    \hfill
    \begin{minipage}[c]{0.26\textwidth}
        \includegraphics[width=\linewidth]{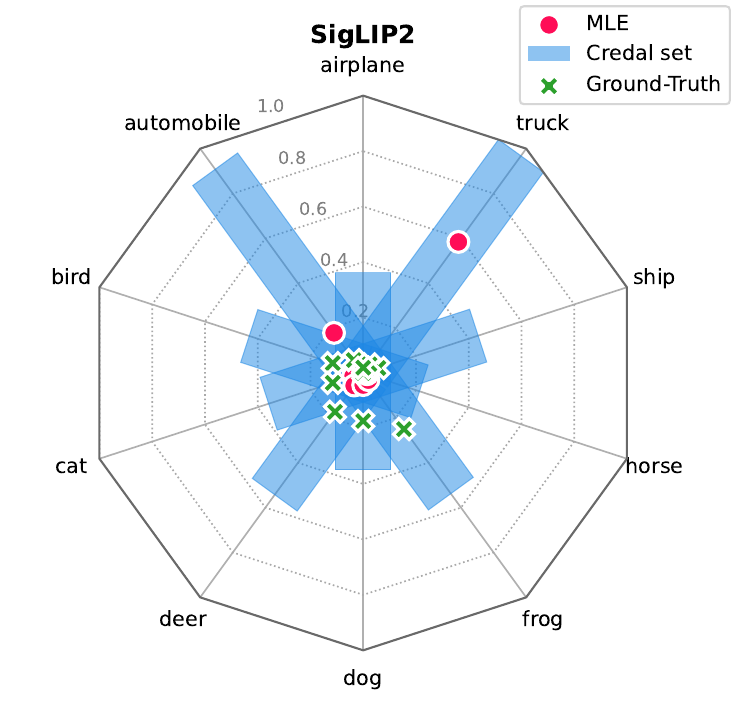}
    \end{minipage}
    \caption{\textbf{Example credal spider plots for \model{CLIP}, \model{SigLIP}, and \model{SigLIP-2}.} used for illustration purposes. The credal spider plots includes the \textcolor{mle}{maximum likelihood estimate}, the \textcolor{method}{plausible intervals}, and the \textcolor{gt}{ground-truth} distribution.}
    \label{fig:guidance}
\end{figure}

\clearpage
\section{Experimental Setup}\label{app:exp-details}

\subsection{Models}\label{app:models}

\paragraph{Multilayer Perceptron}
We train multilayer perceptron on the ChaosNLI dataset. The model consists of four linear layers with dimensions $[768 - 256 - 64 - 16 - 3]$, employing ReLU activations for all hidden layers, while the output layer uses a Softmax function to produce probability distributions from the logits. For training, we adopt hyperparameters similar to those identified as optimal in \citet{javanmardiConformalizedCredal2024} (see \cref{tab:hyperparams}).

\paragraph{ResNet18}
For experiments on CIFAR-10, we employ the ResNet-18 implementation and training configuration from \url{https://github.com/kuangliu/pytorch-cifar}. This variant is tailored to CIFAR-10 and is trained entirely from scratch, without ImageNet pretraining.

\paragraph{TabPFN}
\model{TabPFN} (Tabular Prior-data Fitted Network) \citep{hollmann2022tabpfn} is a transformer-based foundation model developed for supervised classification and regression tasks on small to medium-sized tabular datasets, typically up to $10,000$ samples and $500$ features. Pre-trained on approximately $130$ million synthetic datasets generated via structural causal models, \model{TabPFN} learns to approximate Bayesian inference through a single forward pass, eliminating the need for task-specific tuning. It adeptly handles numerical and categorical features, missing values, and outliers. We use \model{TabPFN} for all experiments with tabular data as presented in \cref{sec:tabpfn}. The model is publicly accessible under a \textit{custom license based on Apache 2.0}, which includes an enhanced attribution requirement.

\paragraph{CLIP}
\model{CLIP} (Contrastive Language–Image Pretraining) \citep{radford2021CLIP} is a multimodal neural network that learns visual concepts from natural language supervision. Trained on $400$ million image-text pairs sourced from the internet, \model{CLIP} can understand images in the context of natural language prompts, enabling zero-shot classification across various tasks without task-specific tuning. It employs a vision transformer architecture to process images and a causal language model to process text, aligning both modalities in a shared embedding space. This design allows \model{CLIP} to generalize to a wide range of visual tasks by interpreting textual descriptions directly. We employ \model{CLIP} to assess our method’s performance in zero-shot classification tasks, demonstrating its applicability to large-scale models without the need for task-specific training. The model is publicly available under the MIT License, permitting both academic and commercial use.

\paragraph{SigLIP}
\model{SigLIP} (Sigmoid Loss for Language-Image Pretraining) \citep{zhai2023siglip} is a multimodal vision-language model that enhances the \model{CLIP} framework by employing a pairwise sigmoid loss function instead of the traditional softmax loss. This modification allows for more efficient scaling to larger batch sizes while maintaining or improving performance at smaller batch sizes. \model{SigLIP} utilizes separate image and text encoders to generate representations for both modalities, aligning them in a shared embedding space. The model has demonstrated superior performance in zero-shot image classification tasks compared to \model{CLIP}, achieving an ImageNet zero-shot accuracy of $84.5\%$ with a batch size of $32,000$. Same as for \model{CLIP}, we demonstrate with \model{SigLIP} the ability to construct credal sets based on large-scale models. \model{SigLIP} is publicly available under the Apache 2.0 license, facilitating research and application in various domains.

\paragraph{SigLIP-2}
\model{SigLIP-2} (Sigmoid Loss for Language-Image Pretraining 2) \citep{tschannen2025siglip2} is a multilingual vision-language encoder. Building upon the original \model{SigLIP}, \model{SigLIP-2} integrates advanced pretraining techniques---including captioning-based pretraining, self-supervised losses (self-distillation and masked prediction), and online data curation---to enhance semantic understanding, localization, and dense feature extraction. The model demonstrates improved performance in zero-shot classification, image-text retrieval, and transfer tasks, particularly when extracting visual representations for Vision-Language Models. Notably, \model{SigLIP-2} introduces a dynamic resolution variant, NaFlex, which supports multiple resolutions and preserves the native aspect ratio, making it suitable for applications sensitive to image dimensions. We use \model{SigLIP-2} in a similar fashion as \model{SigLIP} and \model{CLIP} for large-scale experiments. The model is publicly available under the Apache 2.0 license, facilitating research and application across various domains.

\paragraph{BiomedCLIP}
\model{BiomedCLIP} \citep{zhang2024biomedclip} is a multimodal biomedical foundation model, pretrained on the PMC-15M dataset---a collection of 15 million figure-caption pairs extracted from over 4.4 million scientific articles in PubMed Central. Utilizing PubMedBERT as the text encoder and Vision Transformer as the image encoder, \model{BiomedCLIP} is tailored for biomedical vision-language processing through domain-specific adaptations. It has demonstrated state-of-the-art performance across various biomedical tasks, including cross-modal retrieval, image classification, and visual question answering, outperforming previous models such as BioViL in radiology-specific tasks like RSNA pneumonia detection. The model is publicly available under the Apache 2.0 license, facilitating research and application in the biomedical domain.

\paragraph{Hyperparameters}
For certain experiments in our empirical evaluation, we use pre-trained (foundation) models, which do not require training and thus do not need hyperparameter specifications. In contrast, for the coverage-efficiency experiments on CIFAR-10, ChaosNLI, and QualityMRI, we train models from scratch using a dataset-specific set of hyperparameters, summarized in \cref{tab:hyperparams}. Multiple configurations were evaluated, and the best-performing setup was selected individually for each dataset. To ensure comparability, all methods---our approach as well as the baselines---share the same hyperparameter settings within a given dataset. The only exception is CreBNN, which, when trained with the Adam optimizer \citep{kingmaAdam2015}, requires a KL-divergence penalty of $1e-7$ and weight decay set to zero. When instead using SGD combined with a cosine annealing learning rate schedule \citep{loshchilovSGDR2017}, CreBNN additionally needs a momentum of $0.9$ to achieve stable training.

\begin{table}[ht]
\centering
\caption{Hyperparameters used for each dataset.}
\begin{tabular}{lccc}
\toprule
\textbf{Hyperparameter} & \textbf{ChaosNLI} & \textbf{CIFAR-10} & \textbf{QualityMRI} \\
\midrule
Model                   &FCNet & ResNet18 & ResNet18 \\
Epochs                 & 300 &200&200\\
Learning rate          & 0.01 & 0.1 & 0.01 \\
Weight decay           & 0.0 &0.0005&0.0005\\
Optimizer              & Adam &SGD&SGD\\
Ensemble members       &20&20&20\\
LR scheduler           & - &CosineAnnealing&CosineAnnealing\\
\bottomrule
\end{tabular}
\label{tab:hyperparams}
\end{table}

\subsection{Datasets}\label{app:datasets}

\paragraph{ChaosNLI}
ChaosNLI, introduced by \citet{nieChaosNLI2020}, is a large-scale dataset created to investigate human disagreement in natural language inference (NLI). It includes $100$ annotations per example for $3,113$ instances from SNLI and MNLI, as well as $1,532$ examples from the $\alpha$NLI dataset, totaling around $464,500$ annotations. In line with \citet{javanmardiConformalizedCredal2024}, we focus only on the SNLI and MNLI portions, which we refer to simply as ChaosNLI for convenience. Each entry provides rich metadata, including a unique identifier, the count of labels assigned by annotators, the majority label, the full label distribution, the distribution’s entropy, the original text, and the original label from the source dataset. ChaosNLI facilitates detailed study of variability in human judgments, highlighting examples where disagreement is high and illustrating the limitations of treating the majority label as definitive ground truth. The dataset is publicly accessible under the \textit{CC BY-NC 4.0 License}. For our experiments, we use the precomputed $768$-dimensional embeddings, available at \url{https://github.com/alireza-javanmardi/conformal-credal-sets}, with further details on their generation provided by \citet{javanmardiConformalizedCredal2024}.

\paragraph{QualityMRI}
Introduced by \citet{qualityMRI}, the QualityMRI dataset is part of the Data-Centric Image Classification (DCIC) Benchmark, which studies the role of dataset quality in shaping model performance. It consists of $310$ magnetic resonance images that cover different quality levels, providing a resource for assessing MRI image quality. The dataset is distributed under the \textit{Creative Commons BY-SA 4.0 License}.

\paragraph{CIFAR-10}
CIFAR-10, introduced by \citet{krizhevskyCIFAR102009} and Geoffrey Hinton in 2009, is a widely adopted benchmark in machine learning and computer vision. It consists of $60,000$ color images with a resolution of $32\times32$ pixels, evenly divided among $10$ classes: airplane, automobile, bird, cat, deer, dog, frog, horse, ship, and truck. The dataset is split into $50,000$ training images and $10,000$ test images, organized into five training batches and a single test batch, each containing $10,000$ images. CIFAR-10 is publicly available and has been extensively used for training and evaluating machine learning models. While the original dataset does not explicitly define a license, versions distributed through platforms such as TensorFlow datasets are provided under the \textit{Creative Commons Attribution 4.0 License}.

\paragraph{CIFAR-10H}
CIFAR-10H provides human-generated soft labels for the $10,000$ images in the CIFAR-10 test set, reflecting the variability in human judgments for image classification. Introduced by \citet{petersonCIFAR10H2019}, the dataset contains $511,400$ annotations from $2,571$ workers on Amazon Mechanical Turk, with each image receiving around $51$ labels. Each annotation assigns an image to one of the ten CIFAR-10 classes, allowing the creation of a probability distribution over labels for every image. CIFAR-10H is publicly available under the \textit{Creative Commons BY-NC-SA 4.0 License}.

\paragraph{CIFAR-100}
CIFAR-100, introduced by \citet{krizhevskyCIFAR102009}, consists of $60,000$ color images at a resolution of $32\times32$ pixels, organized into $100$ classes with $600$ images per class. Each image carries both a “fine” label, indicating its specific class, and a “coarse” label corresponding to one of $20$ broader superclasses. The dataset is divided into $50,000$ training images and $10,000$ test images. CIFAR-100 is derived from the Tiny Images dataset and is widely used for benchmarking image classification models. While the original dataset does not define a license, versions distributed through platforms such as TensorFlow Datasets are available under the \textit{Creative Commons Attribution 4.0 License}.

\paragraph{SVHN}
The SVHN dataset, introduced by \citet{netzerReading2011}, contains over $600,000$ $32\times32$ RGB images of digits ($0–9$) extracted from real-world house numbers in Google Street View. It is organized into three subsets: $73,257$ images for training, $26,032$ for testing, and an additional set of $531,131$ images for extended training. SVHN is intended for digit recognition tasks and requires minimal preprocessing. Although the original dataset does not specify a license, versions distributed through platforms like TensorFlow Datasets are available under the \textit{Creative Commons Attribution 4.0 License}.

\paragraph{Places365}
Places365, introduced by \citet{zhouPlaces2018}, is a large-scale dataset for scene recognition, comprising $1.8$ million training images spanning $365$ scene categories. The validation set contains $50$ images per category, while the test set includes $900$ images per category. An expanded variant, Places365-Challenge-2016, incorporates an additional $6.2$ million images and $69$ new scene categories, bringing the total to $8$ million images across $434$ categories. Although the original dataset does not specify a license, versions distributed through platforms such as TensorFlow Datasets are available under the \textit{Creative Commons Attribution 4.0 License.}

\paragraph{FMNIST}
Fashion-MNIST (FMNIST), introduced by \citet{xiaoFashionMNIST2017}, contains $70,000$ grayscale images of Zalando products, each sized $28\times28$ pixels and categorized into $10$ classes, including T-shirt/top, Trouser, and Sneaker. The dataset is divided into $60,000$ training images and $10,000$ test images, and it is commonly used as a modern replacement for the original MNIST dataset in machine learning benchmarks. \textit{FMNIST is publicly released under the MIT License}.

\paragraph{ImageNet}
ImageNet, introduced by \citet{dengImageNet2009}, is a large-scale image dataset structured according to the WordNet hierarchy, comprising over $14$ million images spanning more than $20,000$ categories. Its ILSVRC subset, commonly referred to as ImageNet-1K, contains $1,281,167$ training images, $50,000$ validation images, and $100,000$ test images across $1,000$ classes. \textit{The dataset is freely accessible to researchers for non-commercial purposes}.

\paragraph{TabArena Benchmark Data}
TabArena \citep{ericksonTabArena2025} is a continuously maintained benchmarking system designed for evaluating tabular machine learning models. It comprises $51$ manually curated datasets representing real-world tabular tasks, including both classification and regression problems. Each dataset has been evaluated across $9$ to $30$ different splits, ensuring robust performance assessments. The datasets encompass a diverse range of domains, such as finance, healthcare, and e-commerce, providing a comprehensive foundation for benchmarking various machine learning models. This diversity ensures that evaluations reflect the complexities and nuances found in real-world tabular data scenarios. We use 7 different datasets with IDs $46906$, $46930$, $46941$, $46958$, $46960$, $46963$, $46980$ from the benchmark to validate our method in different experiments. Each of the datasets contains a classification task with $2$ or more classes and a number of instances between $898$ and $12,684$. The datasets are publicly accessible and released under the \textit{Apache 2.0 license}, ensuring permissive use and redistribution for research purposes.

\subsection{Baselines}\label{app:baselines}

Below, we provide detailed descriptions of the baseline implementations used in this paper, relying on the implementations provided by \citet{loehrCredalPrediction2025}.

\paragraph{Credal Prediction based on Relative Likelihood (CreRL)}
The implementation of Credal Prediction based on Relative Likelihood \citep{loehrCredalPrediction2025} is provided in \url{https://github.com/timoverse/credal-prediction-relative-likelihood}. We use the provided code to perform all experiments involving CreRL. Similar to our method, the CreRL defines plausibility in terms of the relative likelihood of a model. One significant difference is that, while CreRL tries to find sufficiently diverse hypotheses that satisfy the relative likelihood criterion, therefore having to train an ensemble of model, we directly obtain the plausible probability intervals by varying the logits of the maximum likelihood estimate.

\paragraph{Credal Wrapper (CreWra)}
The Credal Wrapper \citep{wangCredalWrapper2025} was initially implemented in TensorFlow, but then reimplemented in PyTorch to ensure compatibility with other baselines. It follows a standard ensemble learning approach, training multiple models independently. Like our method, the Credal Wrapper constructs credal sets using class-wise upper and lower probability bounds, making it well-aligned with our implementation.

\paragraph{Credal Ensembling ($\text{CreEns}_{\alpha}$)}
Our implementation adheres closely to the specifications outlined in \citet{nguyen2025credal}. The method extends standard ensemble training, adapting the inference stage by ranking predictions according to a distance metric and including only the top $\alpha\%$ of closest predictions when forming the credal sets. In our experiments, we employ the Euclidean distance and test multiple $\alpha$ values.

\paragraph{Credal Deep Ensembles (CreNet)}
Since the official Credal Deep Ensembles implementation was provided only in TensorFlow, it was reimplemented by \citet{loehrCredalPrediction2025} in PyTorch to integrate seamlessly with other baselines. The version maintains the key design choices of the original, especially regarding the architecture and loss function. In particular, the models’ final linear layers are replaced with a head that outputs $2 \times \text{classes}$ values corresponding to upper and lower probability bounds, followed by batch normalization and the custom IntSoftmax activation. The loss function applies standard cross-entropy to the upper bounds, while for the lower bounds, gradients are propagated only for the $\delta\%$ of samples exhibiting the largest errors, in line with \cite{wangCredalIntervalNet2025}. For our experiments, we adopt $\delta = 0.5$ as recommended in the original work.

\paragraph{Credal Bayesian Deep Learning (CreBNN)}
The method proposed by \citet{caprioCredalBNNs2024} was reimplemented by \citet{loehrCredalPrediction2025} using only the high-level description from the original work. The ensemble consists of Bayesian neural networks (BNNs), each trained via variational inference with distinct priors: the prior means $\mu$ are drawn from $[-1, 1]$ and the standard deviations $\sigma$ from $[0.1, 2]$, ensuring a diverse prior set. At inference time, we sample once from each BNN to generate a finite collection of probability distributions, and the credal set is defined as the convex hull of these samples.

\paragraph{Evidential Deep Learning} Our implementation of Evidential Deep Learning follows \citep{sensoyEvidentialLearning2018} as closely as possible. We use a single model and include a SoftPlus activation function after the last layer to ensure the output is non-negative. We use the Type II Maximum Likelihood as a loss function and the KL-divergence as a regularization term as described in \citep{sensoyEvidentialLearning2018}. The regularization term is scaled by $\lambda_i = \min(1,i/10)$ at epoch $i$ as also done in the original work. At inference time, the model predicts the evidence for each class, which can then be used to compute the parameters of the corresponding Dirichlet distribution.

\paragraph{Deep Deterministic Uncertainty} For the Deep Deterministic Uncertainty method \citep{mukhotiDeterministicUncertainty2023}, we used the original implementation provided in \url{https://github.com/omegafragger/DDU}. This approach uses a single model to reason about uncertainty. In the original paper, the authors apply additional techniques---including spectral normalization and residual connections---to encourage more regularized embeddings in feature-space. For our comparison, we omit these techniques to ensure a fair comparison, as integrating such modifications into a pre-trained model would require re-training the model. Thus, we rely on the identical, trained \model{ResNet18}, which is also used for the other experiments. At inference time, epistemic uncertainty can be quantified through density estimation in feature-space: a normal distribution is fit to the embeddings of training data for each class, and epistemic uncertainty is computed on the basis of the likelihood of new embeddings under this distribution.

\subsection{Compute Resources}

All experiments in this work were conducted using the computing resources listed in \cref{tab:compute_specs}, with an estimated total GPU usage of approximately 820 hours.
\begin{table}[ht]
\centering
\caption{Specifications of Computing Resources}
\begin{tabular}{ll}
\toprule
\textbf{Component} & \textbf{Specification} \\
\midrule
CPU   & AMD EPYC MILAN 7413 Processor, 24C/48T 2.65GHz 128MB L3 Cache\\
GPU   & 2 × NVIDIA A40 (48 GB GDDR each) \\
RAM   & 128 GB (4x 32GB) DDR4-3200MHz ECC DIMM \\
Storage & 2 × 480GB Samsung Datacenter SSD PM893 \\
\bottomrule
\end{tabular}
\label{tab:compute_specs}
\end{table}

\subsection{Generating Semi-Synthetic Ground-Truth Distributions}\label{app:semi-synthetic}
Due to a lack of ground-truth distributions, the evaluation of credal predictors remains non-trivial. While a number of datasets have a (test) set that includes multiple human annotations---such as the ones used in this work---most of the commonly-used benchmarking datasets do not provide these. Therefore, we use a simple method to generate semi-synthetic datasets that include (conditional) ground-truth distributions. The general idea is as follows: given a training set $\cD_\text{train}$, we either train a model or retrieve a strong model from a model hub. The trained or retrieved model is then considered to be the ground-truth model $h^*$ and ground-truth distributions may be generated by collecting the predicted distributions $p(\cdot \mid \vx, h^*)$ based on instances from the $\cD_\text{train}$ or $\cD_\text{test}$. The model that is to be evaluated (in terms of coverage and efficiency) is then trained on the same instances $\vx \in \cD_\text{train}$, but the labels are sampled from $p(\cdot \mid \vx, h^*)$. Thereafter, the model can be evaluated using the test set.

For example, in \cref{sec:tabpfn}, we train a \model{RandomForest} with the default parameters from scikit-learn \citep{scikit-learn} with the exception of maximum depth; this is set to 5 to prevent the predicted distributions too "peaked". The \model{RandomForest} is assumed to be the ground-truth model $h^*$ and it's prediction for an instance $\vx$ is taken to be the \emph{ground-truth} conditional distribution $p(\cdot \mid \vx, h^*)$. The \model{TabPFN} model is then trained on the same instances $\vx$, but the labels $y$ are realizations sampled from the distribution $p(\cdot \mid \vx, h^*)$. The model is then evaluated on the test set, for which the ground-truth distributions are also generated by the \model{RandomForest} $h^*$.

We refer to this as \emph{semi-synthetic}, because, while the generated distribution is not (necessarily) the ground-truth, under the assumption that the used model is sufficiently well-trained, they should be "close" to the ground-truth. 

\subsection{Turning CLIP-based Models into Zero-Shot Classifiers}\label{appx:clip_as_zero_shot}

To demonstrate the usefulness and flexibility of our method for producing credal sets for any black-box model structure without the need for retraining, we apply it to multi-modal \model{CLIP}-based models. Contrastive Language–Image Pretraining (\model{CLIP}) \citep{radford2021CLIP} introduced a mechanism to pre-train models that share embeddings across two modalities. The training data consists of a large corpus of images and their corresponding descriptions (e.g., captions or alternative text from websites). The central idea is to align each image with its textual description: images and their captions should be close in the embedding space, while mismatched pairs should be far apart. To achieve this, two modality-specific encoders are trained to produce embeddings of equal dimension, from which a similarity score (e.g., cosine similarity) is computed. Captions that accurately describe an image receive high similarity scores, whereas unrelated captions receive low scores. This training paradigm and model architecture have since been refined by subsequent works, yielding better-performing or more specialized models. For example, \model{BiomedCLIP} \citep{zhang2024biomedclip}, trained on biomedical data from \texttt{PubMed}, achieves superior performance on medical tasks. Similarly, the \model{SigLIP} \citep{zhai2023siglip} and \model{SigLIP-2} \citep{tschannen2025siglip2} families adapt the training procedure and extend the datasets to include multilingual text sources, resulting in improved performance on general tasks \citep{zhai2023siglip,tschannen2025siglip2}.

\paragraph{Zero-Shot Prediction.}
Zero-shot image classification with \model{CLIP}-based models proceeds by reformulating the label set into natural-language \emph{templates}. For each candidate class, a short descriptive text is created (e.g., the template “a photo of a [label]” yields “a photo of a dog” or “a photo of a cat”). These textual descriptions are embedded by the text encoder, while the input image is embedded by the image encoder. The similarity between the image embedding and each text embedding is then computed, typically using cosine similarity. The resulting similarity values can be treated as logits, where the highest-scoring label determines the predicted class. Importantly, this formulation also makes it straightforward to restrict classification to any subset of labels without training a new classifier, since one can simply retain and compare the logits corresponding to the labels of interest. This procedure enables \model{CLIP}-based models to serve as flexible, task-agnostic classifiers without requiring any additional training, and has proven effective across diverse downstream domains \citep{radford2021CLIP,zhang2024biomedclip,zhai2023siglip,tschannen2025siglip2}.

\paragraph{Templates for Multi-Lingual Datasets.}
Zero-shot classification can be extended to multi-lingual datasets by translating labels into the target language and constructing corresponding templates. For example, in our experiments we used the English template “This is a photo of a [label]” alongside a Swahili template “Hii ni picha ya [label]”, allowing classification in either language. Models such as \model{SigLIP-2} \citep{tschannen2025siglip2}, trained on multilingual data, further improve robustness in this setting.

\paragraph{Model Performance.}
To illustrate the effectiveness of different \model{CLIP}-based models in our setting, we report their zero-shot classification accuracy on CIFAR-10, ImageNet, and DermMNIST (see Table~\ref{tab:clip_performance}). The results show that while standard \model{CLIP} performs strongly on general-purpose datasets, specialized variants such as \model{BiomedCLIP} yield improved performance on domain-specific tasks, and recent multilingual models like \model{SigLIP} and \model{SigLIP-2} further enhance accuracy on broad benchmarks.

\begin{table}[h]
\centering
\caption{Zero-shot classification accuracy (\%) of \model{CLIP}-based models on CIFAR-10 (EN = English, SW = Swahili, FR = French, ZH = Chinese), ImageNet, and DermaMNIST.}
\label{tab:clip_performance}
\begin{tabular}{lcccccc}
\toprule
\textbf{Model} & \multicolumn{4}{c}{\textbf{CIFAR-10}} & \textbf{ImageNet} & \textbf{DermaMNIST} \\
\cmidrule(lr){2-5}
& \textbf{EN} & \textbf{SW} & \textbf{FR} & \textbf{ZH} & & \\
\midrule
CLIP          & 88.97\% & 9.11\% & 85.97\% & 33.73\% & 57.14\% & 24.74\% \\
SigLIP        & 92.17\% & \textbf{15.33\%} & 84.05\% & 91.28\% & \textbf{72.83\%} & 8.28\% \\
SigLIP2      & \textbf{93.91\%} & 10.21\% & \textbf{92.21\%} & \textbf{93.85\%} & 69.87\% & 11.67\% \\
BiomedCLIP    & -- & -- & -- & -- & -- & \textbf{45.89\%} \\
\bottomrule
\end{tabular}
\end{table}

\clearpage
\section{Additional Experimental Results}\label{app:additional-results}

\subsection{Coverage versus Efficiency}\label{app:cov-eff}
In addition to the datasets provided in \cref{sec:coveff}, we present an additional comparison to the baselines in the form of the \dataset{QualityMRI} dataset.
\begin{figure}[h]
    \centering
    \begin{minipage}[c]{0.6\textwidth}
        \includegraphics[width=\textwidth]{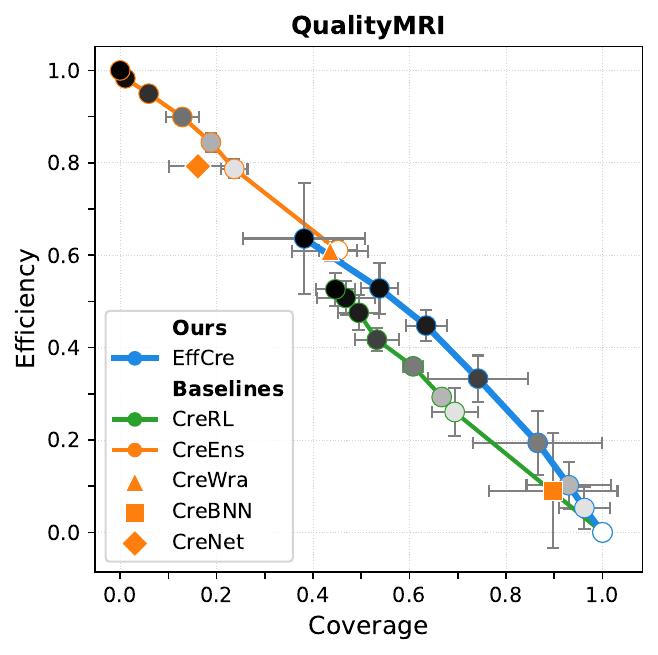}
    \end{minipage}
    \begin{minipage}[c]{0.0782\textwidth}
    \vspace{-1.25em}
        \includegraphics[width=\textwidth]{figures/alpha_bar.pdf}
    \end{minipage}
    \caption{\textbf{Coverage versus Efficiency.} Our method, EffCre, is compared to baselines on \dataset{QualityMRI}.}
    \label{fig:coveff-additional}
\end{figure}
\cref{fig:coveff-additional} shows that our approach Pareto dominates the CreRL method, while having a similar coverage and efficiency to CreBNN for $\alpha=0.4$. However, our method allows a trade-off between coverage and efficiency beyond that, allowing the exploration of regions with a better efficiency or better coverage. Our method is Pareto incomparable to the CreEns, because CreEns does not reach the high coverage region (while having higher efficiency, whereas our method does (while having lower efficiency). It should be noted that reaching the high coverage area, as our method does, is especially important in medical settings, as is the case for the QualityMRI dataset.

\subsection{Out-of-Distribution Detection}\label{app:ood}

\begin{wrapfigure}{r}{0.55\textwidth} 
\centering
\vspace{-13pt}
\captionof{table}{Training and inference time in seconds for models trained on CIFAR10. Mean with standard deviation over three runs. Computed based on ensembles with 10 members.}
\begin{tabular}{lcc}
\toprule
Method & Training time & Inference time  \\ \midrule
EffCre & $2136.33 \pm 1.70$ & $1.50\pm0.02$ \\
CreRL  & $12675.84 \pm 412.68$ & $11.46\pm0.12$ \\
CreWra & $21363.34 \pm 33.99$ &$11.44\pm0.19$\\
CreEns & $21363.34 \pm 33.99$ &$11.44\pm0.23$\\
CreNet & $24996.65 \pm 180.12$ &$11.41\pm0.18$\\
CreBNN & $29796.67 \pm 12.94$ &$12.74\pm1.1$\\ \bottomrule
\end{tabular}
\label{tab:runtime}
\end{wrapfigure}
For this experiment, we trained a \model{ResNet18} on CIFAR-10, which serves as the in-distribution dataset. At evaluation time, we consider both the in-distribution data and five out-of-distribution datasets to compute epistemic uncertainty values with our method. These values are then used to separate ID from OOD samples, with performance measured via AUROC. In addition to the results in \cref{sec:ood}, \cref{tab:ood_alpha} reports AUROC scores across different $\alpha$ values for our method, and \cref{tab:runtime} states the corresponding training and inference times for each method. Furthermore, \cref{app:ablation_ood_members} presents an ablation study on the effect of the ensemble size for OOD detection performance.

\begin{table}[h]
    \centering
    \caption{Out-of-Distribution Detection.}
    \begin{tabularx}{\linewidth}{l>{\centering\arraybackslash}X>{\centering\arraybackslash}X>{\centering\arraybackslash}X>{\centering\arraybackslash}X>{\centering\arraybackslash}X>{\centering\arraybackslash}X}
        \toprule
        Method & SVHN & Places365 & CIFAR-100 & FMNIST & ImageNet \\
        \midrule
        $\text{EffCre}_{0.0}$ & $0.478 \scriptstyle{\pm 0.006}$ & $0.478 \scriptstyle{\pm 0.005}$ & $0.480 \scriptstyle{\pm 0.005}$ & $0.486 \scriptstyle{\pm 0.002}$ & $0.481 \scriptstyle{\pm 0.002}$ \\
        $\text{EffCre}_{0.2}$ & $0.474 \scriptstyle{\pm 0.003}$ & $0.474 \scriptstyle{\pm 0.001}$ & $0.473 \scriptstyle{\pm 0.002}$ & $0.474 \scriptstyle{\pm 0.002}$ & $0.473 \scriptstyle{\pm 0.003}$ \\
        $\text{EffCre}_{0.4}$ & $0.303 \scriptstyle{\pm 0.100}$ & $0.389 \scriptstyle{\pm 0.072}$ & $0.335 \scriptstyle{\pm 0.040}$ & $0.325 \scriptstyle{\pm 0.057}$ & $0.338 \scriptstyle{\pm 0.035}$ \\
        $\text{EffCre}_{0.6}$ & $0.415 \scriptstyle{\pm 0.010}$ & $0.428 \scriptstyle{\pm 0.007}$ & $0.440 \scriptstyle{\pm 0.017}$ & $0.404 \scriptstyle{\pm 0.024}$ & $0.435 \scriptstyle{\pm 0.008}$ \\
        $\text{EffCre}_{0.8}$ & $0.744 \scriptstyle{\pm 0.009}$ & $0.721 \scriptstyle{\pm 0.009}$ & $0.720 \scriptstyle{\pm 0.007}$ & $0.733 \scriptstyle{\pm 0.012}$ & $0.700 \scriptstyle{\pm 0.008}$ \\
        $\text{EffCre}_{0.9}$ & $0.854 \scriptstyle{\pm 0.005}$ & $0.827 \scriptstyle{\pm 0.006}$ & $0.822 \scriptstyle{\pm 0.004}$ & $0.860 \scriptstyle{\pm 0.005}$ & $0.796 \scriptstyle{\pm 0.005}$ \\
        $\text{EffCre}_{0.95}$ & $0.885 \scriptstyle{\pm 0.003}$ & $0.862 \scriptstyle{\pm 0.005}$ & $0.854 \scriptstyle{\pm 0.003}$ & $0.907 \scriptstyle{\pm 0.002}$ & $0.826 \scriptstyle{\pm 0.004}$ \\ 
        $\text{EffCre}_{1.0}$ & $0.894 \scriptstyle{\pm 0.015}$ & $0.886 \scriptstyle{\pm 0.008}$ & $0.868 \scriptstyle{\pm 0.005}$ & $0.933 \scriptstyle{\pm 0.010}$ & $0.844 \scriptstyle{\pm 0.006}$ \\
        \midrule
        $\text{CreRL}_{0.95}$ & $0.917 \scriptstyle{\pm 0.013}$ & $0.910 \scriptstyle{\pm 0.001}$ & $0.901 \scriptstyle{\pm 0.000}$ & $0.945 \scriptstyle{\pm 0.004}$ & $0.878 \scriptstyle{\pm 0.002}$ \\ 
        CreWra & $0.957 \scriptstyle{\pm 0.003}$ & $0.916 \scriptstyle{\pm 0.001}$ & $0.916 \scriptstyle{\pm 0.000}$ & $0.952 \scriptstyle{\pm 0.000}$ & $0.890 \scriptstyle{\pm 0.001}$ \\
        $\text{CreEns}_{0.0}$ & $0.955 \scriptstyle{\pm 0.001}$ & $0.913 \scriptstyle{\pm 0.000}$ & $0.914 \scriptstyle{\pm 0.001}$ & $0.949 \scriptstyle{\pm 0.001}$ & $0.888 \scriptstyle{\pm 0.000}$ \\
        CreBNN & $0.907 \scriptstyle{\pm 0.006}$ & $0.885 \scriptstyle{\pm 0.002}$ & $0.880 \scriptstyle{\pm 0.002}$ & $0.935 \scriptstyle{\pm 0.002}$ & $0.859 \scriptstyle{\pm 0.002}$ \\
        $\text{CreNet}$ & $0.943 \scriptstyle{\pm 0.003}$ & $0.918 \scriptstyle{\pm 0.000}$ & $0.912 \scriptstyle{\pm 0.000}$ & $0.951 \scriptstyle{\pm 0.002}$ & $0.884 \scriptstyle{\pm 0.001}$ \\
        \bottomrule
    \end{tabularx}
    \label{tab:ood_alpha}
\end{table}

In the main paper, we focused exclusively on comparing credal predictors in order to ensure a consistent evaluation of methods within a single framework (that of credal predictors). This allows us to isolate the effect of the credal predictor from the influence of other factors such as the uncertainty measure or the base model. Here, we additionally compare our method to other methods that allow for uncertainty quantification with a single model. In particular, we compare EffCre to evidential deep learning (EDL) \citep{sensoyEvidentialLearning2018} and deep deterministic uncertainty (DDU) quantification \citep{mukhotiDeterministicUncertainty2023}. For evidential deep learning, the epistemic uncertainty quantification is computed by 
\begin{equation*}
    \EU = \frac{K}{S},
\end{equation*}
where $K$ is the number of classes and $S = \sum_{k=1}^K (z_k+1)$ denotes the sum of the predicted parameters of the Dirichlet distribution for an input for $\vx$. Deep deterministic uncertainty quantifies epistemic uncertainty on the basis of the likelihood of the embedding of an input
\begin{equation*}
    \EU = \sum_{k=1}^K q(e \mid k)q(k),
\end{equation*}
where $q$ represents the density function of a normal distribution. The implementation details are described in \cref{app:baselines}. The results are presented in \cref{tab:ood_single}.
\begin{table}[h]
    \centering
    \caption{Out-of-Distribution Detection.}
    \begin{tabularx}{\linewidth}{l>{\centering\arraybackslash}X>{\centering\arraybackslash}X>{\centering\arraybackslash}X>{\centering\arraybackslash}X>{\centering\arraybackslash}X>{\centering\arraybackslash}X}
        \toprule
        Method & SVHN & Places365 & CIFAR-100 & FMNIST & ImageNet \\
        \midrule
        $\text{EDL}$ & $0.938 \scriptstyle{\pm 0.010}$ & $0.889 \scriptstyle{\pm 0.001}$ & $0.887 \scriptstyle{\pm 0.001}$ & $0.940 \scriptstyle{\pm 0.005}$ & $0.866 \scriptstyle{\pm 0.001}$ \\ 
        $\text{DDU}$ & $0.973 \scriptstyle{\pm 0.001}$ & $0.969 \scriptstyle{\pm 0.001}$ & $0.873 \scriptstyle{\pm 0.002}$ & $0.892 \scriptstyle{\pm 0.010}$ & $0.969 \scriptstyle{\pm 0.000}$ \\ 
        \bottomrule
    \end{tabularx}
    \label{tab:ood_single}
\end{table}
When compared to EDL, our method ($\text{EffCre}_{1.0}$) performs on par with EDL on Places365, while being slightly outperformed on the remaining datasets. However, it is important to emphasize that EDL requires re-training the model with a specific activation and loss function, hence it cannot be applied directly in standard settings, specifically if the training data is not available. In contrast, our method can be applied without re-training, making it compatible with a broader range of settings such as the ones using TabPFN and CLIP presented in the manuscript. 
Our method outperforms DDU on FMNIST, while being weaker on other datasets. Moreover, while DDU is also a post-training method, it requires access to the embeddings generated by the model, whereas our method does not. EffCre, by operating on logits, can be applied on top of black-box models, which enables it to be directly applied in more general settings, such as large language models, which form an interesting direction for future work. Overall, there is no clear winner across the evaluated methods, and drawing definitive conclusions remains challenging. Besides the stark differences in the working of the methods, the OOD detection task itself comes with numerous caveats \citep{liOOD2025}, meaning that performance on this task can only be taken as a proxy of the quality of the epistemic uncertainty representation. \\ \\
Additionally, although these methods also rely on a single (re-trained) model, they differ from the other (credal) approaches in that they do not produce credal sets. We consider this distinction to be particularly important, as the credal set predictors quantify a fundamentally different form of epistemic uncertainty than DDU. Indeed, the credal set represents an epistemic uncertainty with respect to the predicted probability distribution, which will directly affect the subsequent decision-making. DDU, however, quantifies a form of epistemic uncertainty about the “familiarity” of an input, derived from its density relative to the training data. It is not immediately clear how this should influence the decision-making process that follows. 

\subsection{In-Context Learning with TabPFN}\label{app:tabpfn}
We compute coverage and efficiency for our method used with \model{TabPFN} with all multi-class \dataset{TabArena} datasets. As discussed in \cref{sec:tabpfn}, the datasets do not come with ground-truth distributions. Therefore, we constuct \emph{semi-synthethic} distributions that serve as the ground-truth. In \cref{app:semi-synthetic}, we give a detailed explanation of our approach. Note that we consider the resulting distributions to only be a proxy of the ``true'' ground-truth distributions.
In addition to computing the coverage and efficiency, we perform active in-context learning. In \cref{sec:tabpfn} and the results that will follow, this is done by first splitting the data, in a stratified manner, to have a 0.3 test split. The remaining 0.7 split is then split into an initial training set and the sampling pool, again stratified, such that the initial training set contains $2K$ instances, where $K$ is the number of classes. At every iteration, the predictor is ``allowed'' to sample $2K$ instances from the pool, based on its epistemic uncertainty, which is a common setup in active learning \citep{nguyenEpistemicUncertaintySampling2019,margrafALPBench2024}. This is done until the pool is exhausted---and, hence, until the performance converges to what would be obtained with a traditional train-test split. The goal is thus, to select at every iteration samples that are most informative, i.e. the samples that will give the greatest performance increase at that iteration. In addition to the results in \cref{sec:tabpfn}, we present active in-context learning results for two additional \dataset{TabArena} datasets: 46963 and 46930. \cref{fig:additional-active-learning} shows the results for our method using epistemic uncertainty sampling based on \cref{eq:eu-entropy,eq:eu-zero-one} compared to the random baseline. Conform the results presented in \cref{sec:tabpfn}, our method applied to \dataset{TabPFN} provides a valuable advantage over the random baseline in terms of accuracy.

\begin{figure}[t]
    \centering
    \begin{subfigure}{.495\textwidth}
        \centering
        \includegraphics[width=\linewidth]{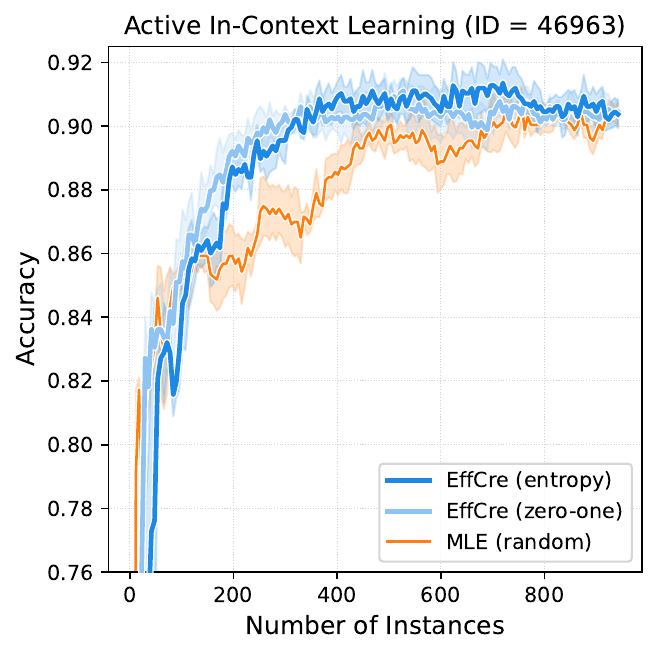}
    \end{subfigure}
    \hfill
    \begin{subfigure}{.495\textwidth}
        \centering
        \includegraphics[width=\linewidth]{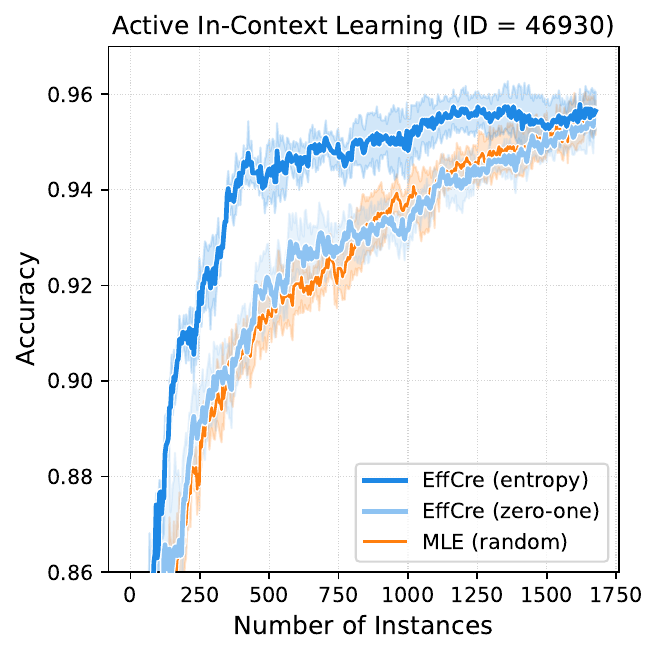}
    \end{subfigure}
    \caption{\textbf{Active In-Context Learning with \model{TabPFN}.} Performance on TabArena datasets 46963, and 46930 versus the random baseline.}
    \label{fig:additional-active-learning}
\end{figure}

\subsection{Zero-Shot Classification with CLIP-Based Models}\label{appx:additional-clip-results}

Extending on the examples shown in \cref{sec:clip_main}, we create additional credal spider plots for \model{CLIP}-based models in \cref{fig:appx_clip_vs_siglip_vs_siglip2,fig:appx_biomedclip_vs_clip,fig:appx_clip_languages}.
Figure~\ref{fig:appx_clip_vs_siglip_vs_siglip2} highlights challenging natural images with high uncertainty in \model{CLIP}, Figure~\ref{fig:appx_biomedclip_vs_clip} examines medical images from \dataset{DermaMNIST}, and Figure~\ref{fig:appx_clip_languages} analyzes cross-lingual predictions on CIFAR-10. 
Together, these visualizations complement the quantitative results reported in Table~\ref{tab:clip_performance} by showcasing how credal spider plots reveal distinct uncertainty patterns that align with the models’ performance across domains.

\begin{figure}
    \centering
    \includegraphics[width=0.3\textwidth]{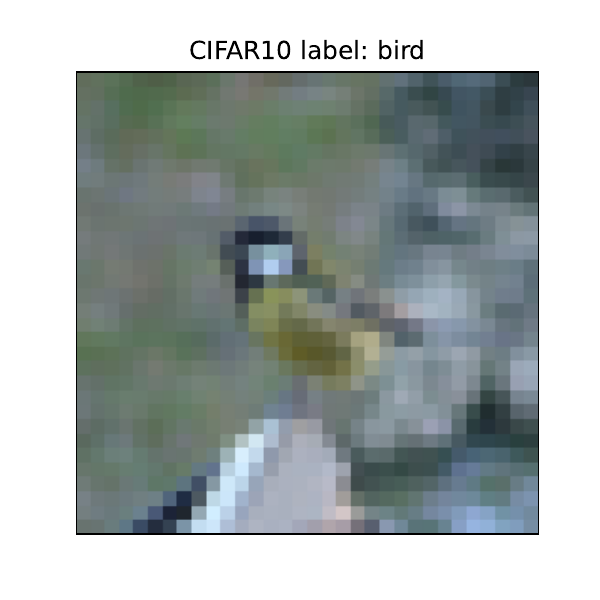}
    \\
    \begin{minipage}[c]{0.10\textwidth}
        \textbf{(English)}
    \end{minipage}
    \begin{minipage}[c]{0.26\textwidth}
        \includegraphics[width=\linewidth]{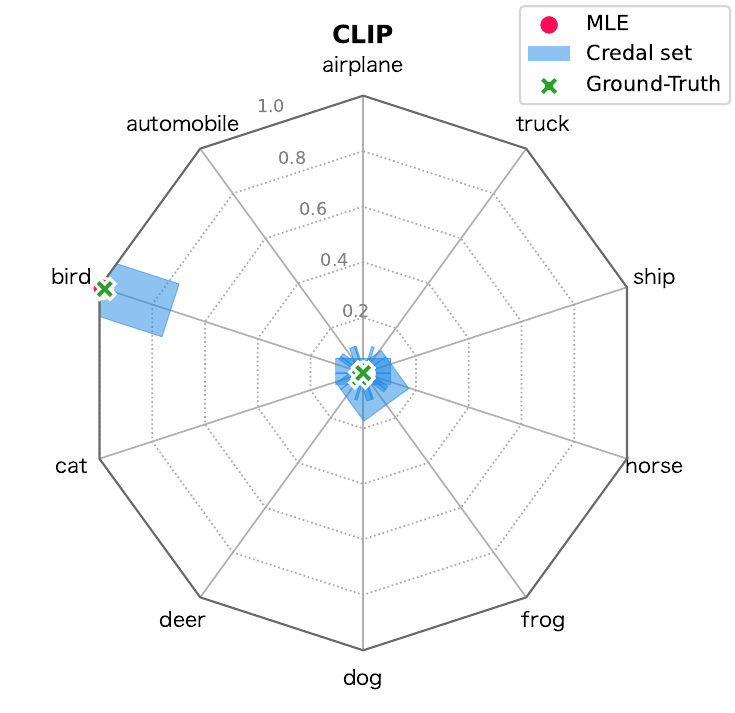}
    \end{minipage}
    \begin{minipage}[c]{0.26\textwidth}
        \includegraphics[width=\linewidth]{figures/appendix/clip_comparison/languages/6252_CIFAR10_CLIP_alpha0.9_spider_plot.pdf}
    \end{minipage}
    \begin{minipage}[c]{0.26\textwidth}
        \includegraphics[width=\linewidth]{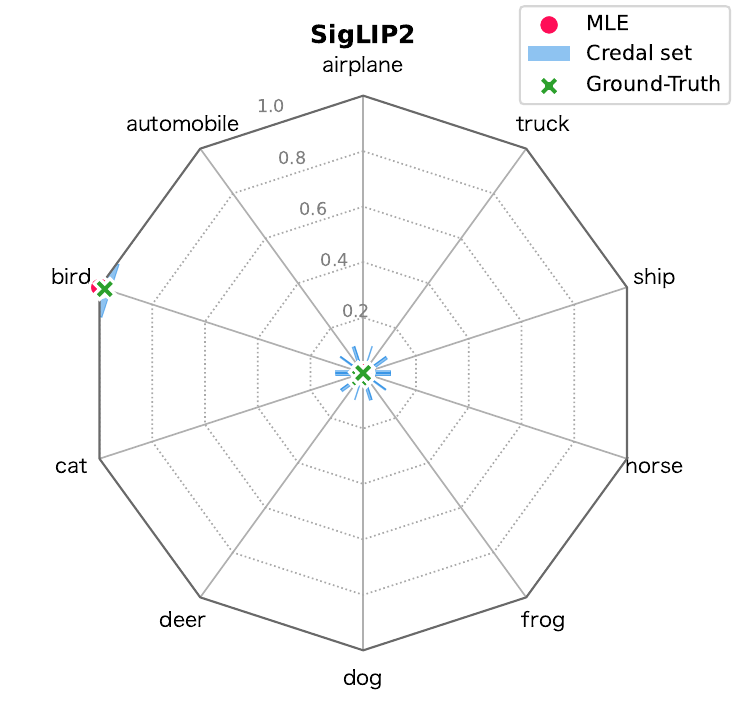}
    \end{minipage}
    \\
    \begin{minipage}[c]{0.10\textwidth}
        \textbf{(French)}
    \end{minipage}
    \begin{minipage}[c]{0.26\textwidth}
        \includegraphics[width=\linewidth]{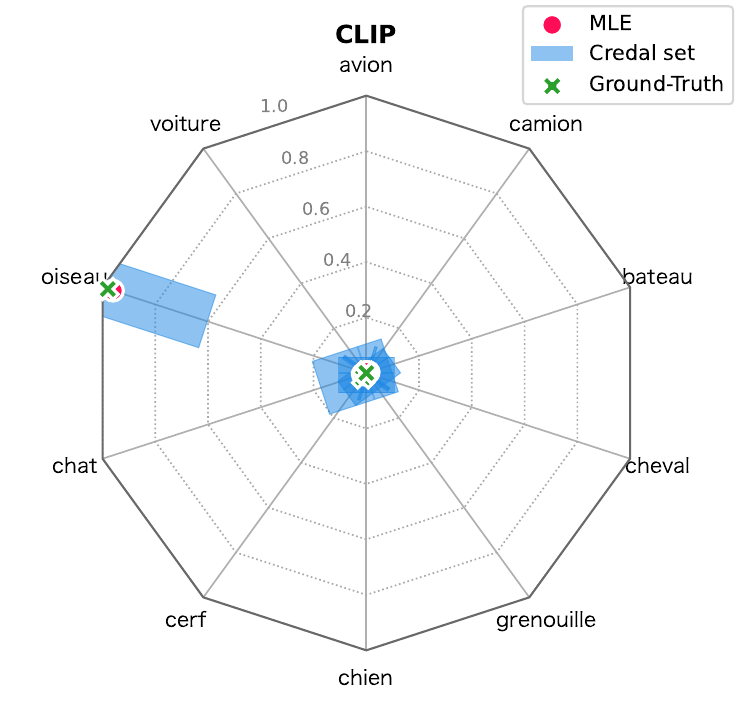}
    \end{minipage}
    \begin{minipage}[c]{0.26\textwidth}
        \includegraphics[width=\linewidth]{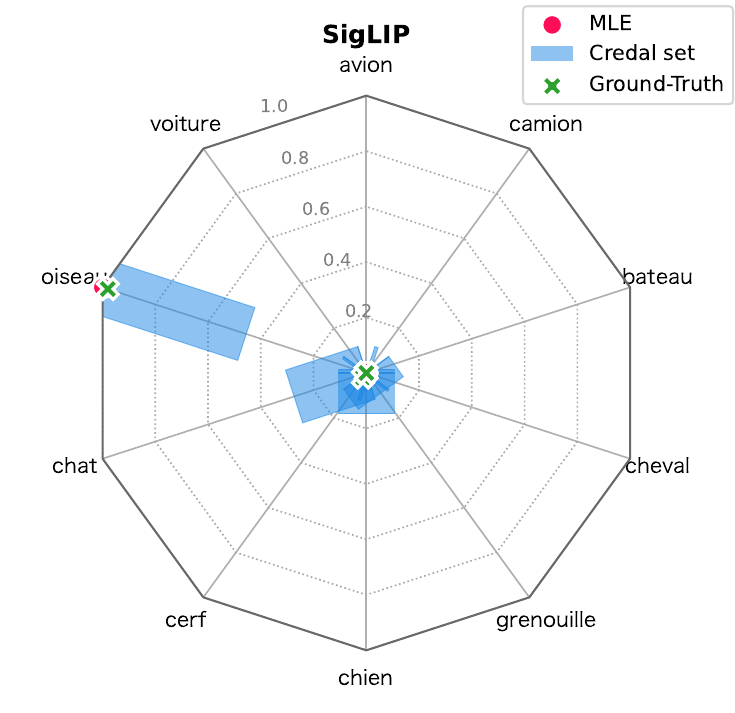}
    \end{minipage}
    \begin{minipage}[c]{0.26\textwidth}
        \includegraphics[width=\linewidth]{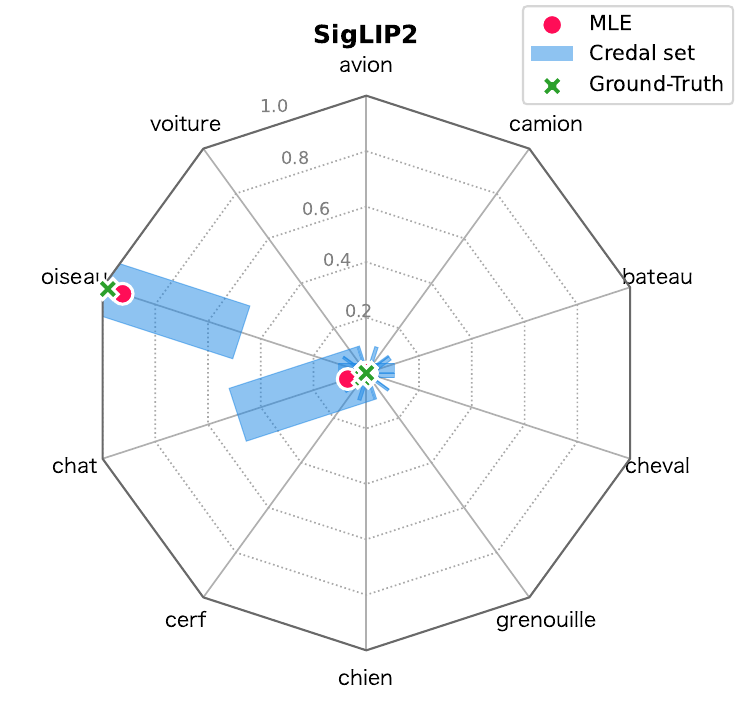}
    \end{minipage}
    \\
    \begin{minipage}[c]{0.10\textwidth}
        \textbf{(Chinese)}
    \end{minipage}
    \begin{minipage}[c]{0.26\textwidth}
        \includegraphics[width=\linewidth]{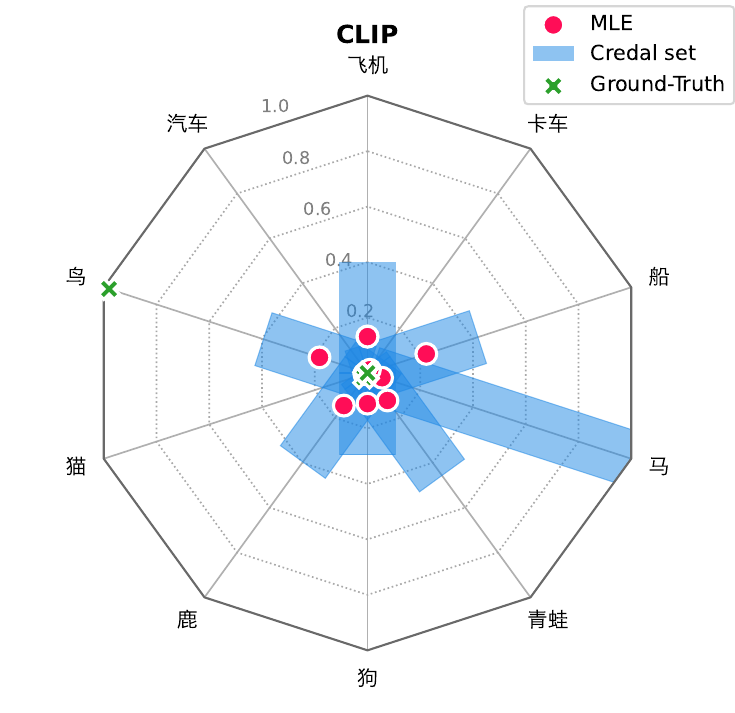}
    \end{minipage}
    \begin{minipage}[c]{0.26\textwidth}
        \includegraphics[width=\linewidth]{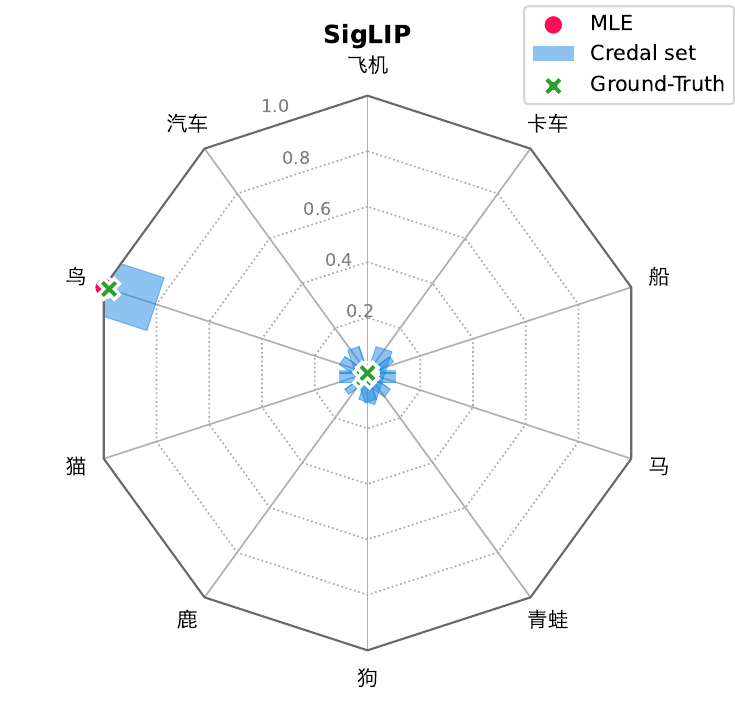}
    \end{minipage}
    \begin{minipage}[c]{0.26\textwidth}
        \includegraphics[width=\linewidth]{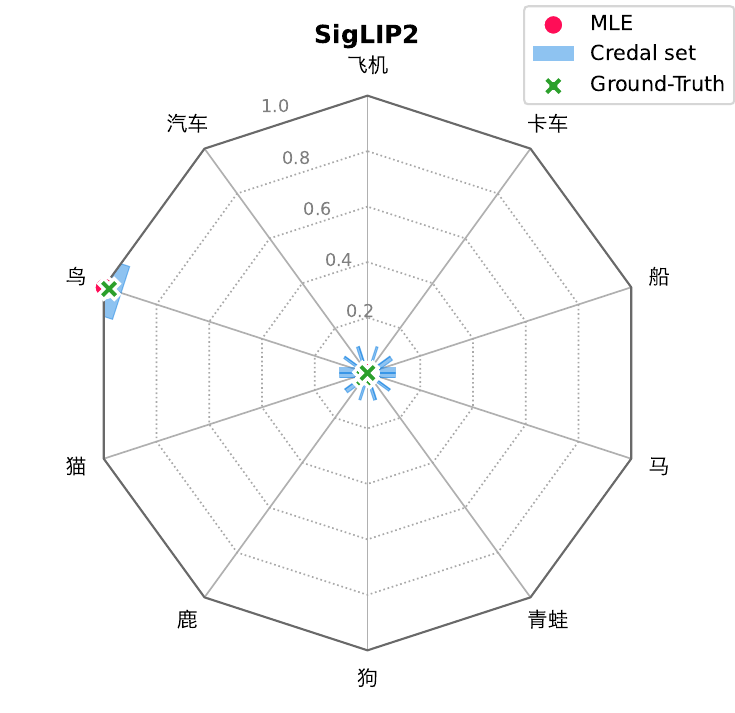}
    \end{minipage}
    \\
    \begin{minipage}[c]{0.10\textwidth}
        \textbf{(Swahili)}
    \end{minipage}
    \begin{minipage}[c]{0.26\textwidth}
        \includegraphics[width=\linewidth]{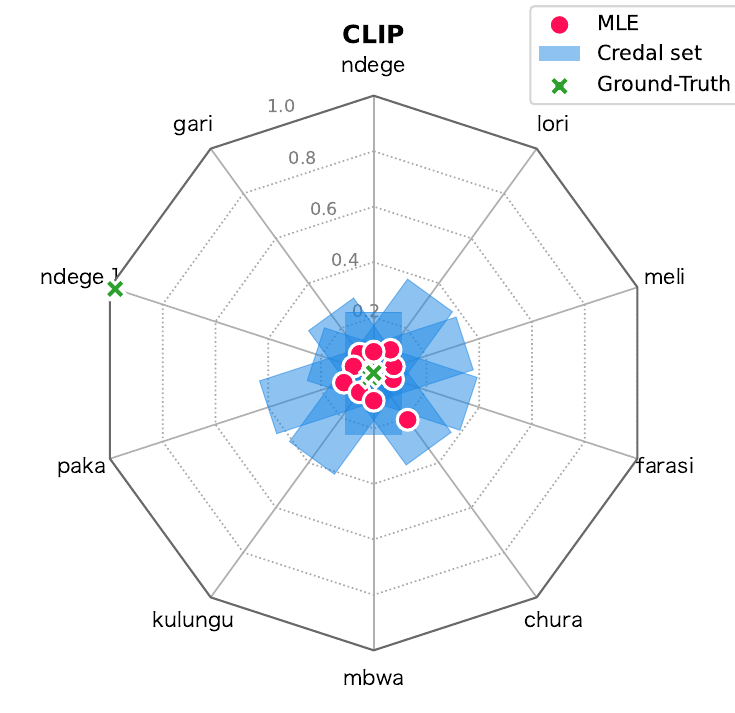}
    \end{minipage}
    \begin{minipage}[c]{0.26\textwidth}
        \includegraphics[width=\linewidth]{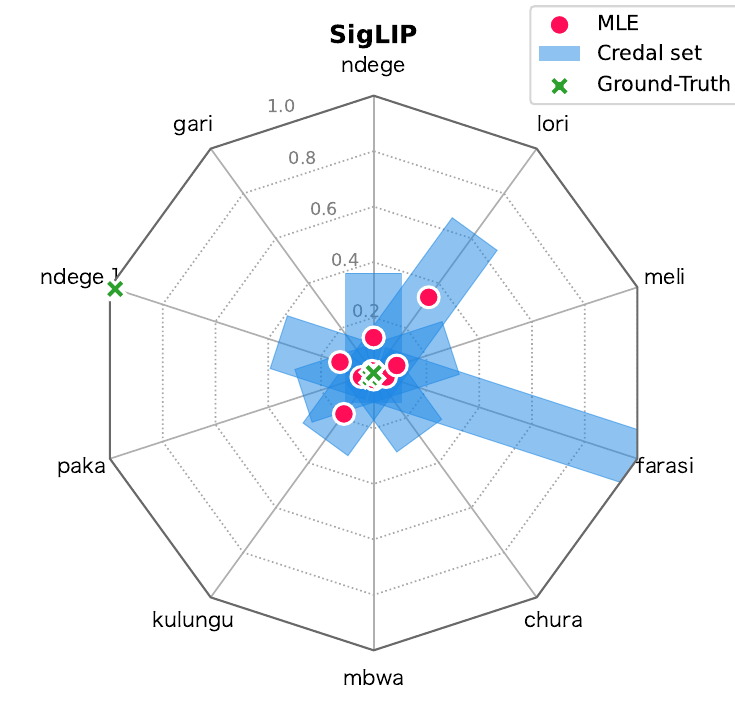}
    \end{minipage}
    \begin{minipage}[c]{0.26\textwidth}
        \includegraphics[width=\linewidth]{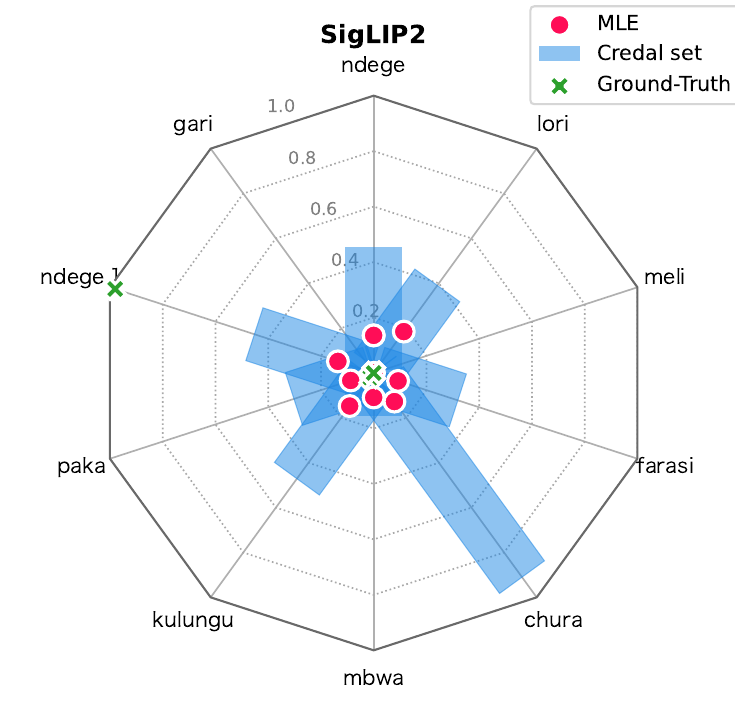}
    \end{minipage}
    \caption{Credal spider plots for an image of a bird with \model{CLIP}, \model{SigLIP}, and \model{SigLIP-2} across different languages. 
    In English, all models confidently predict the image as \texttt{bird}. In French, \model{SigLIP-2} maintains the correct maximum likelihood prediction but shows increased uncertainty toward \texttt{cat}. In Chinese, \model{CLIP} exhibits high uncertainty across all classes, indicating difficulties in this language, whereas \model{SigLIP} and \model{SigLIP-2} remain as confident as in English. In Swahili, all models struggle and display high uncertainty across all classes; notably, \texttt{bird} and \texttt{airplane} share the same word in Swahili, complicating the prediction. These examples align well with models' performances across the different languages (see Table~\ref{tab:clip_performance}).}
    \label{fig:appx_clip_languages}
\end{figure}

\begin{figure}
    \centering
    \begin{minipage}[c]{0.03\textwidth}
        \textbf{(a)}
    \end{minipage}
    \hfill
    \begin{minipage}[c]{0.18\textwidth}
        \includegraphics[width=\linewidth]{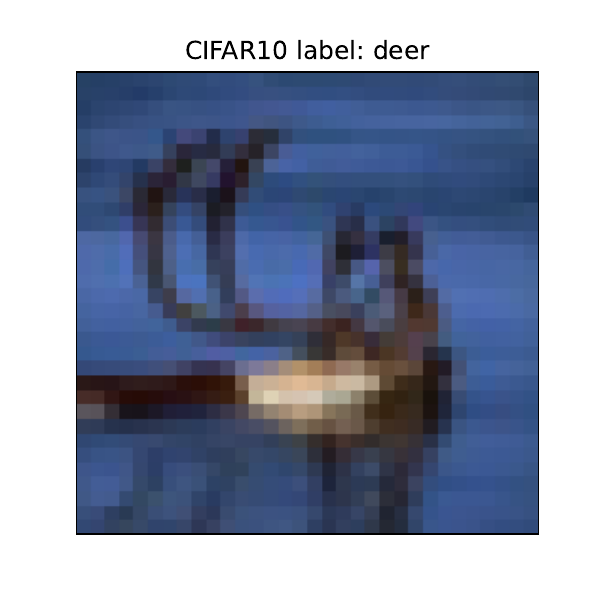}
    \end{minipage}
    \hfill
    \begin{minipage}[c]{0.25\textwidth}
        \includegraphics[width=\linewidth]{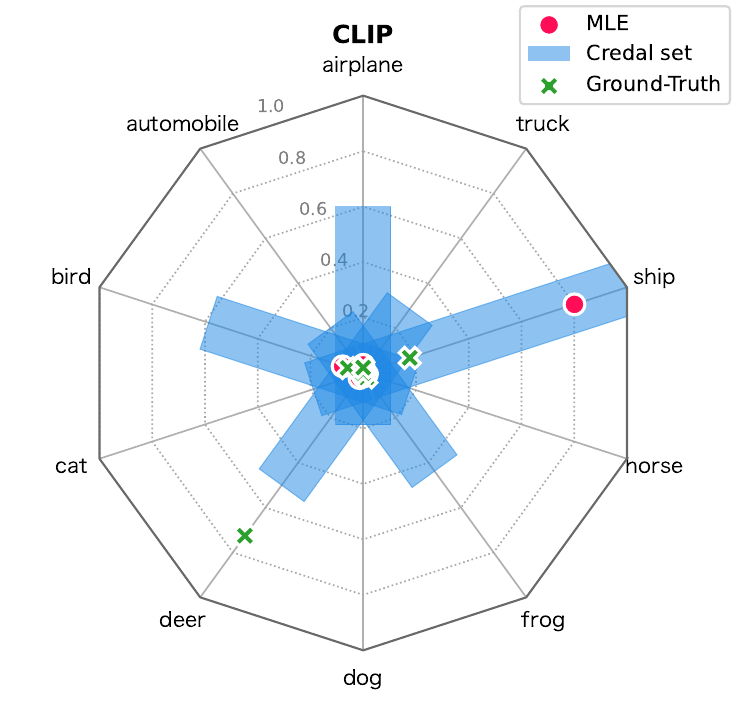}
    \end{minipage}
    \hfill
    \begin{minipage}[c]{0.25\textwidth}
        \includegraphics[width=\linewidth]{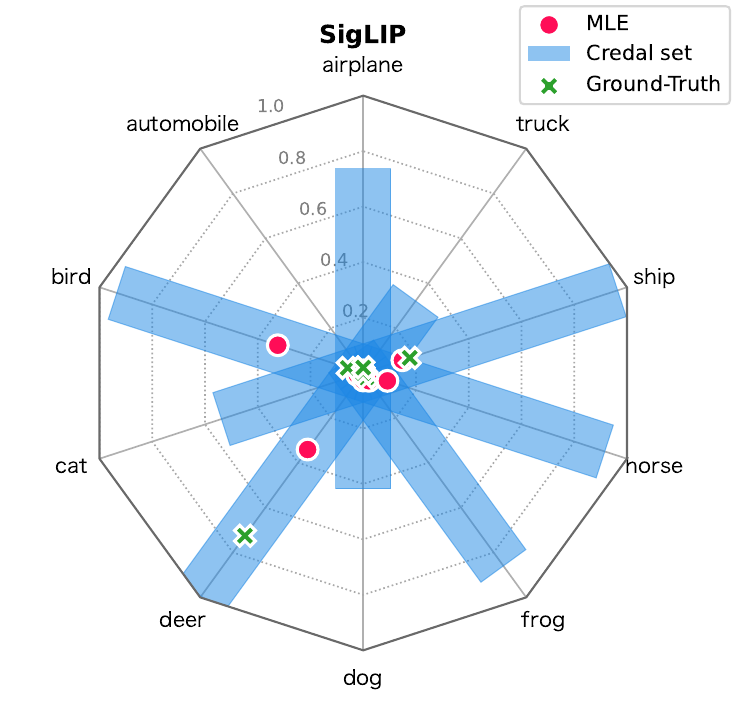}
    \end{minipage}
    \hfill
    \begin{minipage}[c]{0.25\textwidth}
        \includegraphics[width=\linewidth]{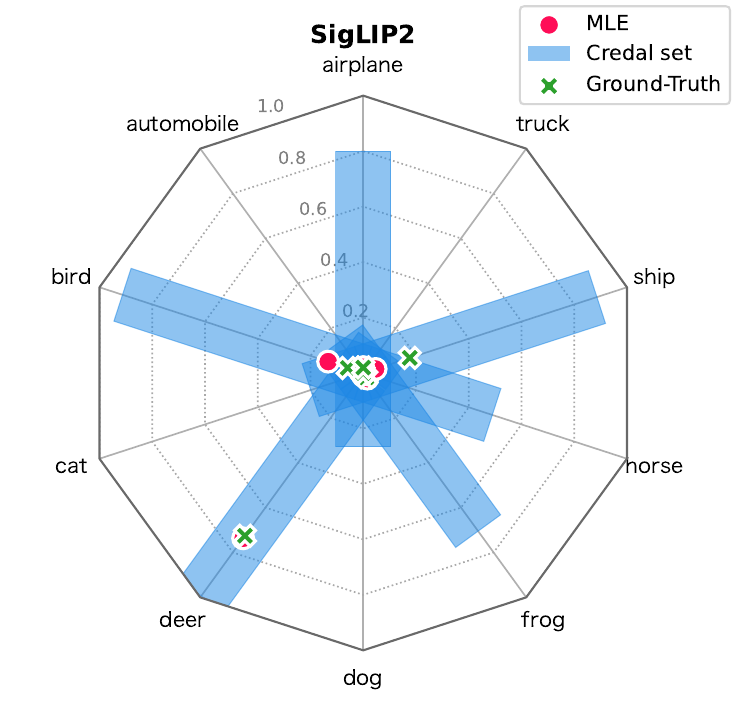}
    \end{minipage}
    \\
    \begin{minipage}[c]{0.03\textwidth}
        \textbf{(b)}
    \end{minipage}
    \hfill
    \begin{minipage}[c]{0.18\textwidth}
        \includegraphics[width=\linewidth]{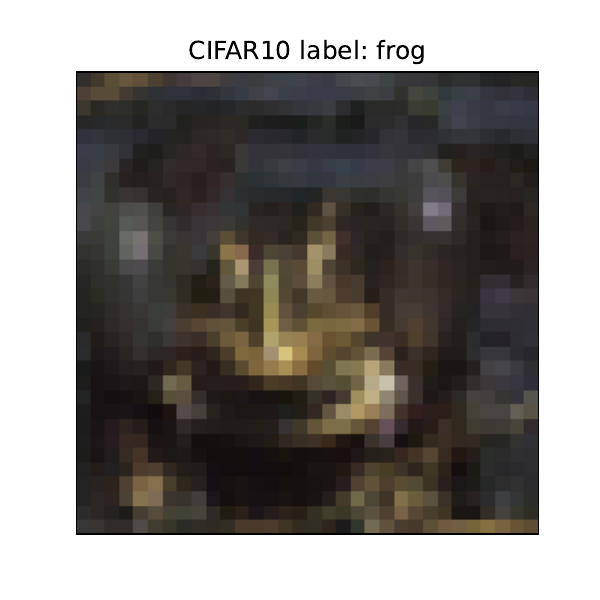}
    \end{minipage}
    \hfill
    \begin{minipage}[c]{0.25\textwidth}
        \includegraphics[width=\linewidth]{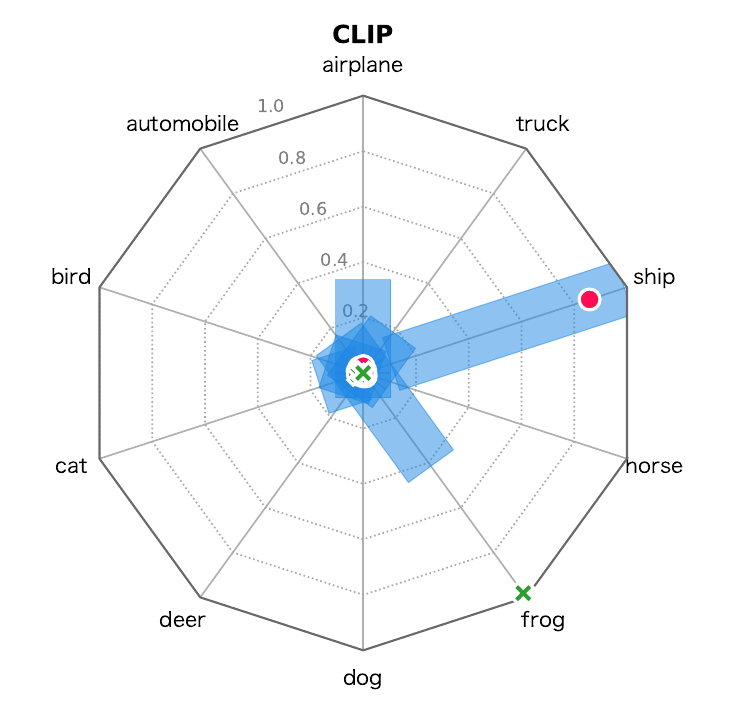}
    \end{minipage}
    \hfill
    \begin{minipage}[c]{0.25\textwidth}
        \includegraphics[width=\linewidth]{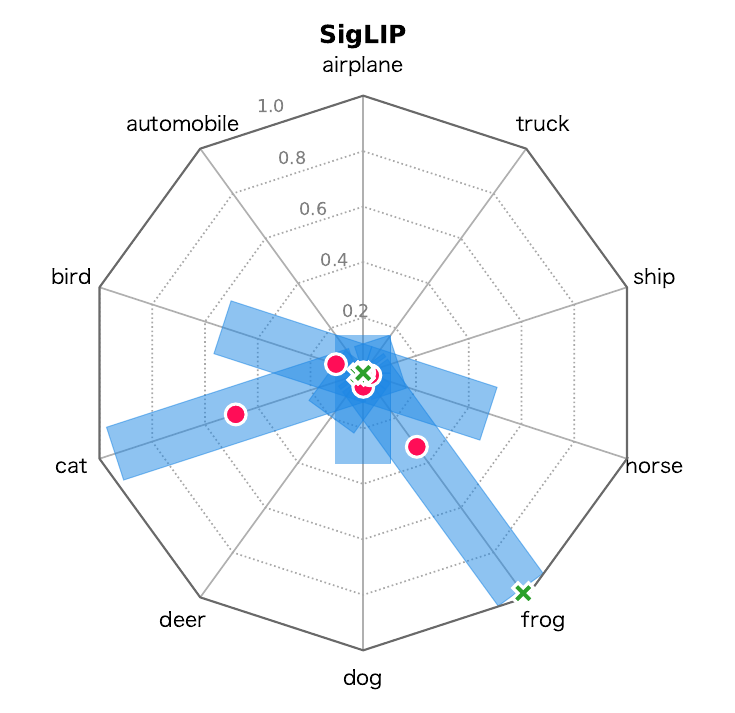}
    \end{minipage}
    \hfill
    \begin{minipage}[c]{0.25\textwidth}
        \includegraphics[width=\linewidth]{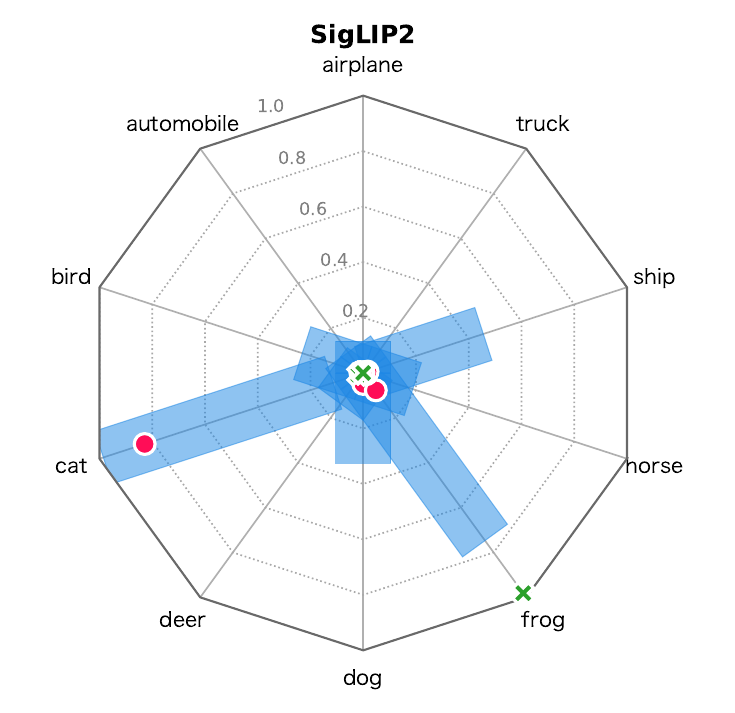}
    \end{minipage}
    \\
    \begin{minipage}[c]{0.03\textwidth}
        \textbf{(c)}
    \end{minipage}
    \hfill
    \begin{minipage}[c]{0.18\textwidth}
        \includegraphics[width=\linewidth]{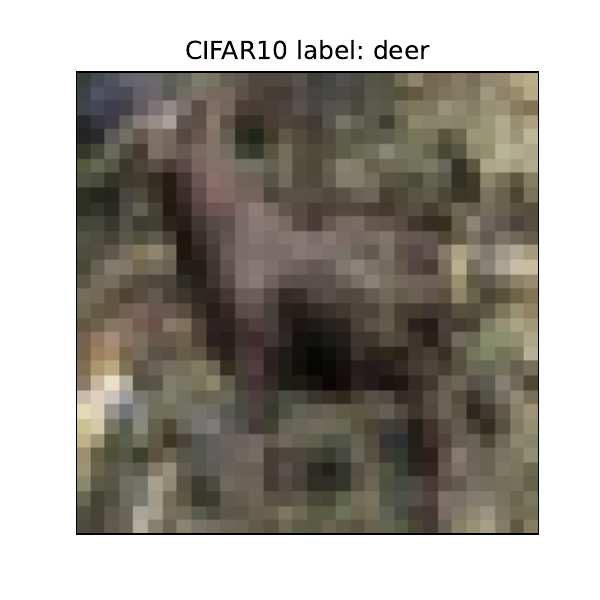}
    \end{minipage}
    \hfill
    \begin{minipage}[c]{0.25\textwidth}
        \includegraphics[width=\linewidth]{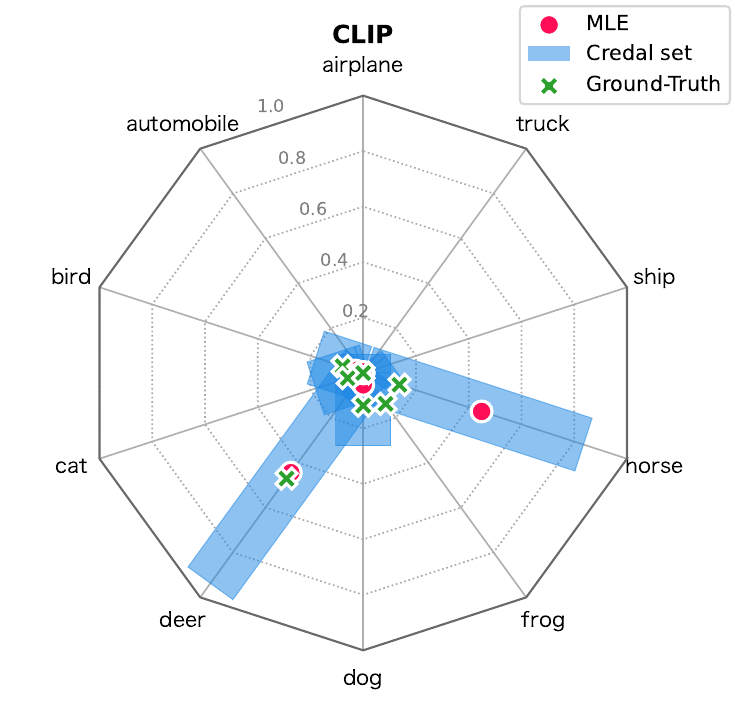}
    \end{minipage}
    \hfill
    \begin{minipage}[c]{0.25\textwidth}
        \includegraphics[width=\linewidth]{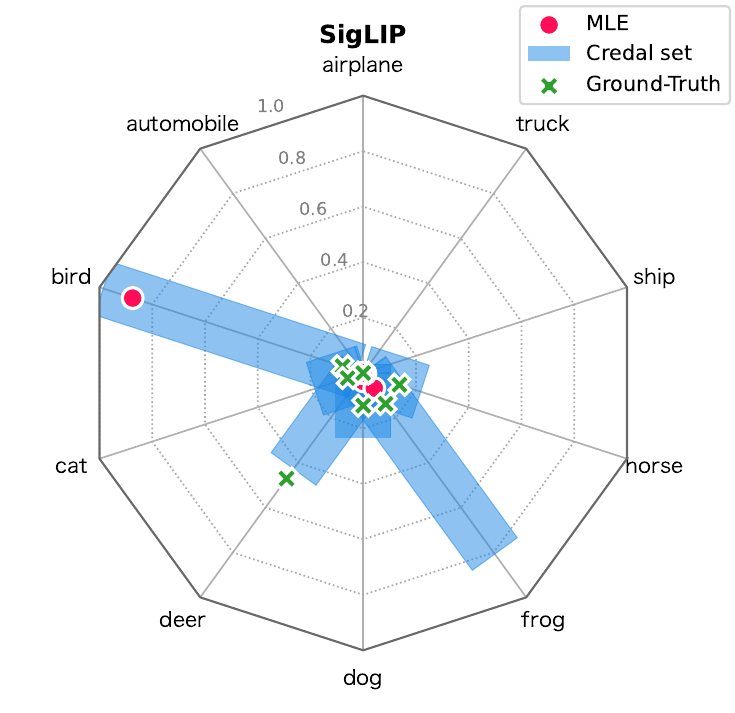}
    \end{minipage}
    \hfill
    \begin{minipage}[c]{0.25\textwidth}
        \includegraphics[width=\linewidth]{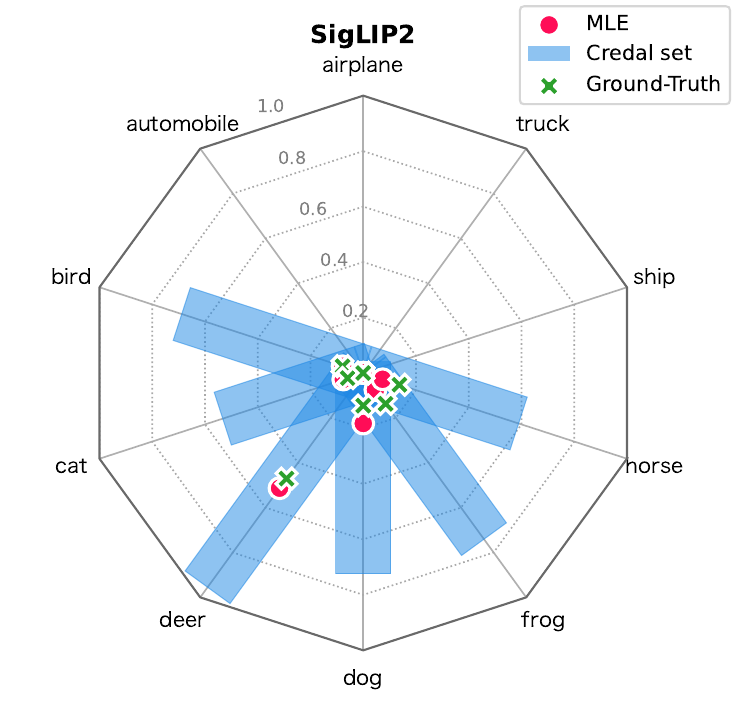}
    \end{minipage}
    \\
    \begin{minipage}[c]{0.03\textwidth}
        \textbf{(d)}
    \end{minipage}
    \hfill
    \begin{minipage}[c]{0.18\textwidth}
        \includegraphics[width=\linewidth]{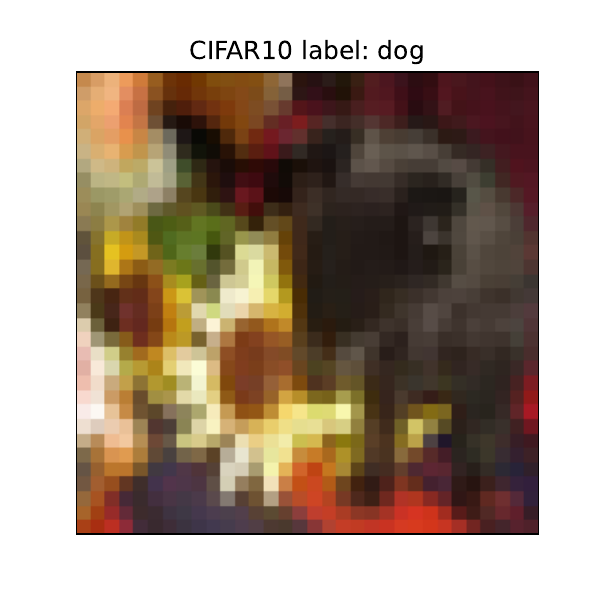}
    \end{minipage}
    \hfill
    \begin{minipage}[c]{0.25\textwidth}
        \includegraphics[width=\linewidth]{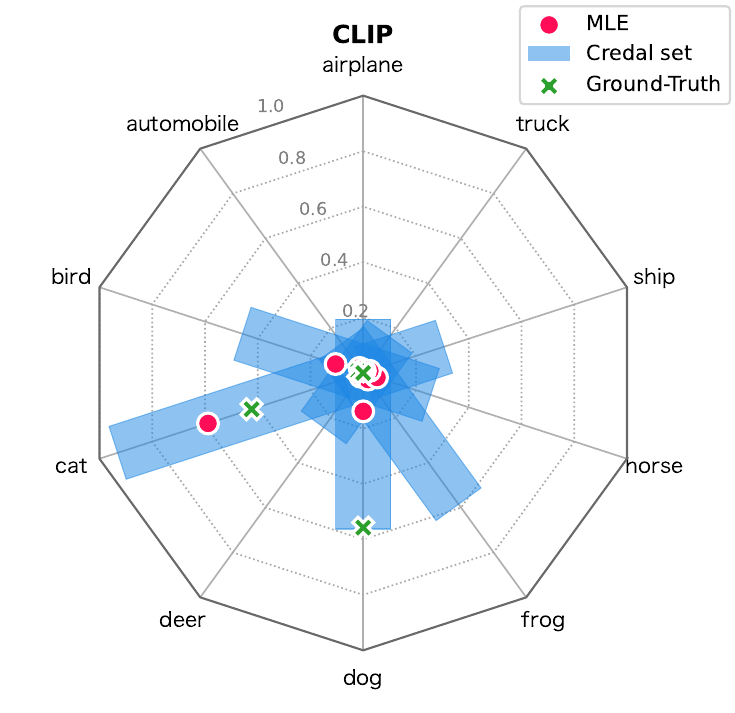}
    \end{minipage}
    \hfill
    \begin{minipage}[c]{0.25\textwidth}
        \includegraphics[width=\linewidth]{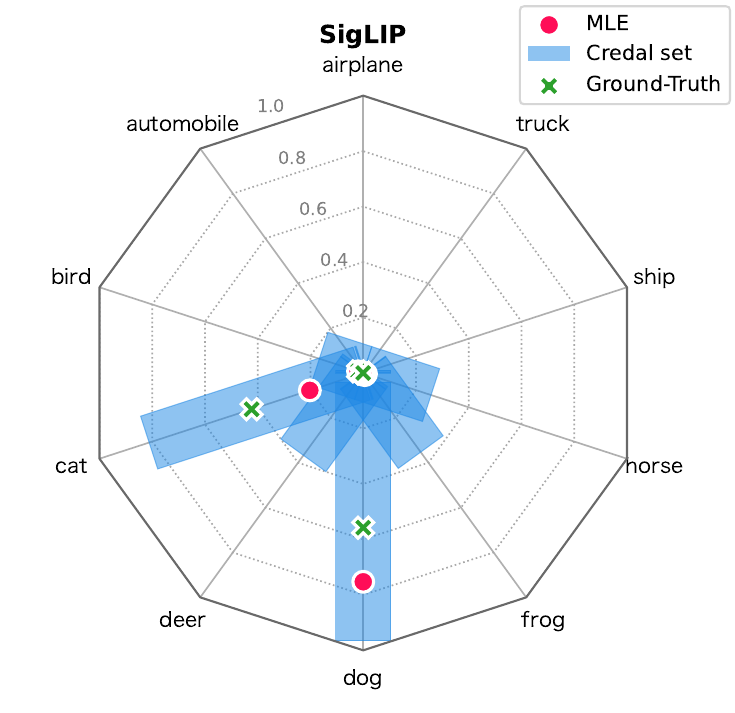}
    \end{minipage}
    \hfill
    \begin{minipage}[c]{0.25\textwidth}
        \includegraphics[width=\linewidth]{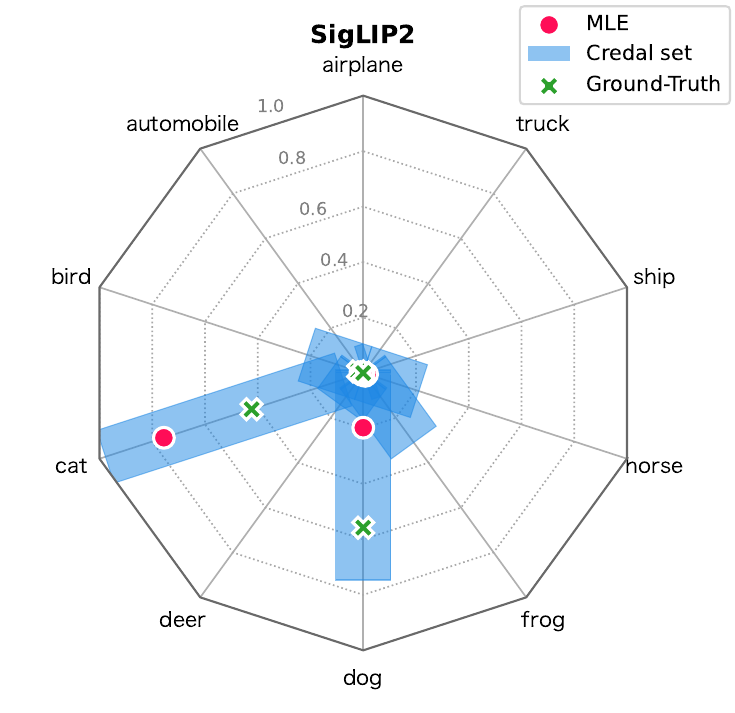}
    \end{minipage}
    \caption{Comparison of credal sets for \model{CLIP}, \model{SigLIP}, and \model{SigLIP-2} on observations with high uncertainty with \model{CLIP}. 
    Observation \textbf{(a)} shows a swimming deer, where the MLE is \texttt{ship}. High uncertainty is spread across \texttt{ship}, two sky-related classes (\texttt{airplane}, \texttt{bird}), the amphibious \texttt{frog}, and the correct class \texttt{deer}. Both \model{SigLIP} models exhibit similar patterns with even greater uncertainty. 
    Observation \textbf{(b)} depicts a dark image of a frog misclassified as \texttt{ship}, with high uncertainty again on that class; both \model{SigLIP} models additionally assign probability to \texttt{cat}. 
    Observation \textbf{(c)} is a challenging deer image, where annotators themselves showed high disagreement. \model{CLIP} is confident it is either \texttt{deer} or \texttt{horse}, while \model{SigLIP} favors \texttt{bird} or \texttt{frog}, and \model{SigLIP-2} remains certain it is an animal but not which. 
    Observation \textbf{(d)} illustrates a case where human annotators are nearly evenly split between \texttt{cat} and \texttt{dog}, and the uncertainties of all three models capture this ambiguity.}
    \label{fig:appx_clip_vs_siglip_vs_siglip2}
\end{figure}

\begin{figure}
    \centering
    \begin{minipage}[c]{0.03\textwidth}
        \textbf{(a)}
    \end{minipage}
    \hfill
    \begin{minipage}[c]{0.22\textwidth}
        \includegraphics[width=\linewidth]{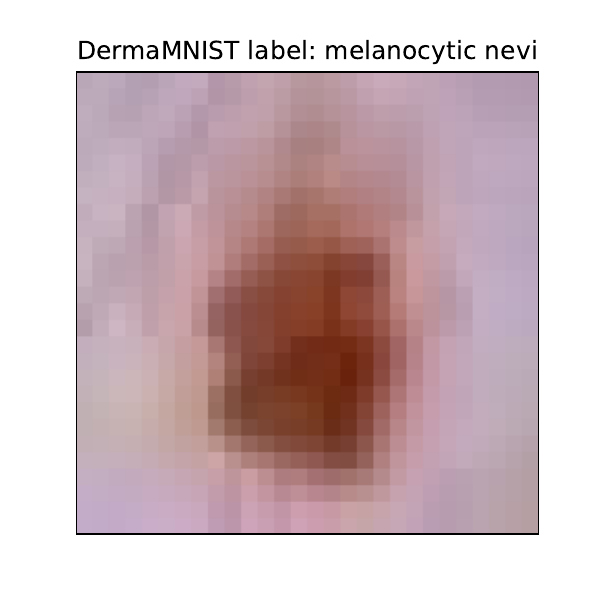}
    \end{minipage}
    \hfill
    \begin{minipage}[c]{0.30\textwidth}
        \includegraphics[width=\linewidth]{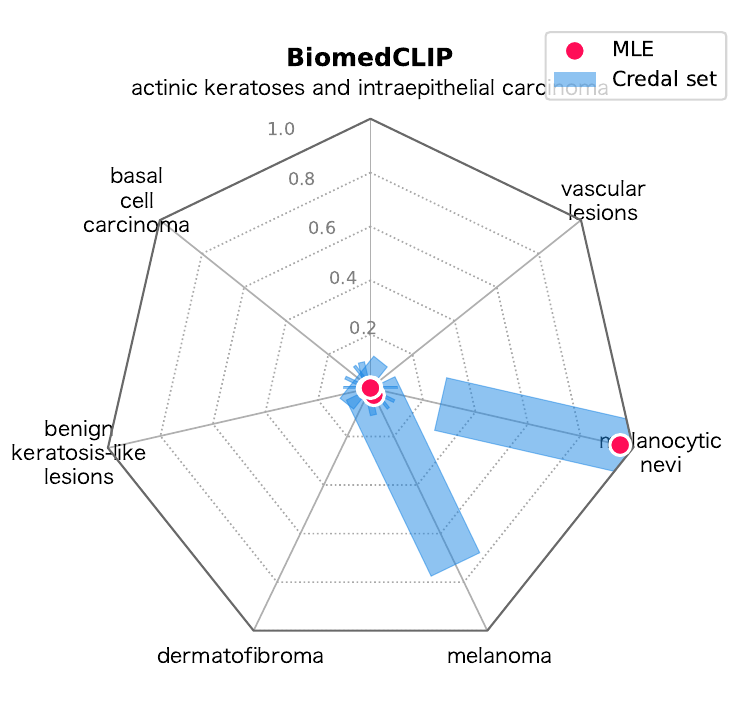}
    \end{minipage}
    \hfill
    \begin{minipage}[c]{0.30\textwidth}
        \includegraphics[width=\linewidth]{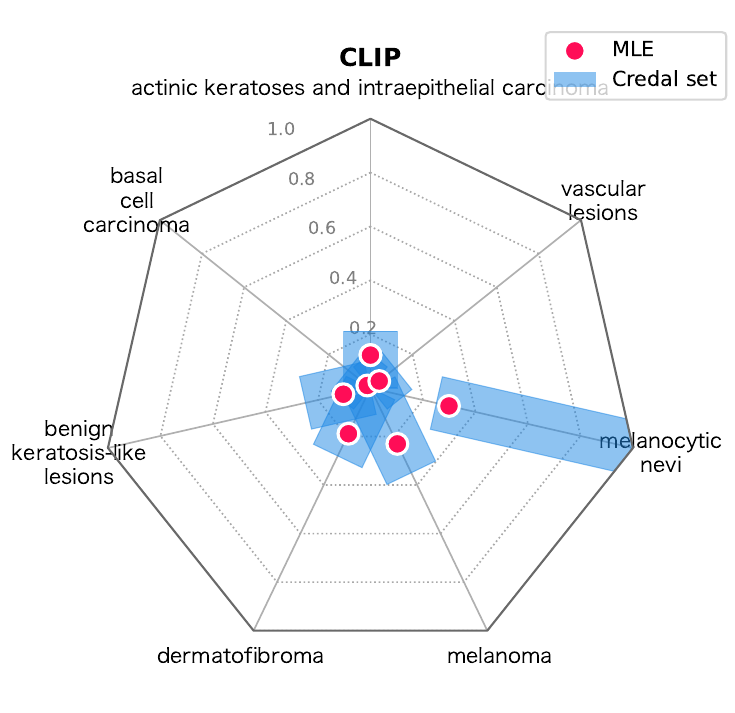}
    \end{minipage}
    \\
    \begin{minipage}[c]{0.03\textwidth}
        \textbf{(b)}
    \end{minipage}
    \hfill
    \begin{minipage}[c]{0.22\textwidth}
        \includegraphics[width=\linewidth]{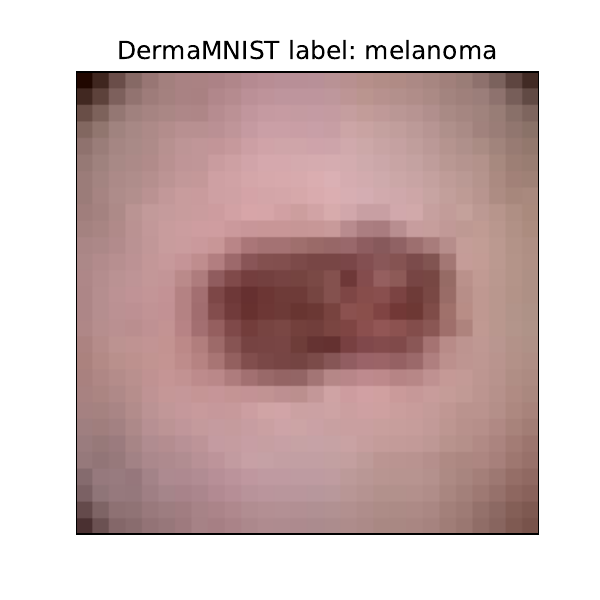}
    \end{minipage}
    \hfill
    \begin{minipage}[c]{0.30\textwidth}
        \includegraphics[width=\linewidth]{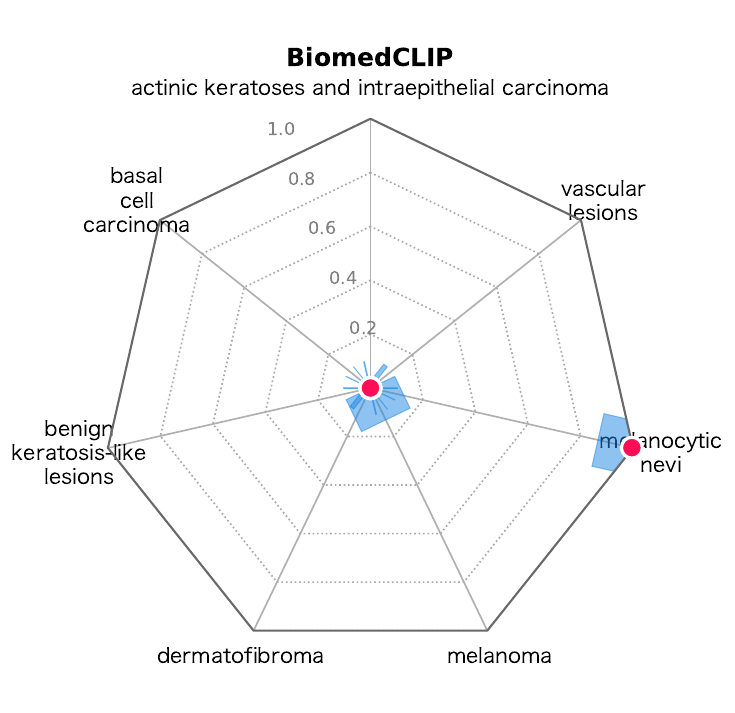}
    \end{minipage}
    \hfill
    \begin{minipage}[c]{0.30\textwidth}
        \includegraphics[width=\linewidth]{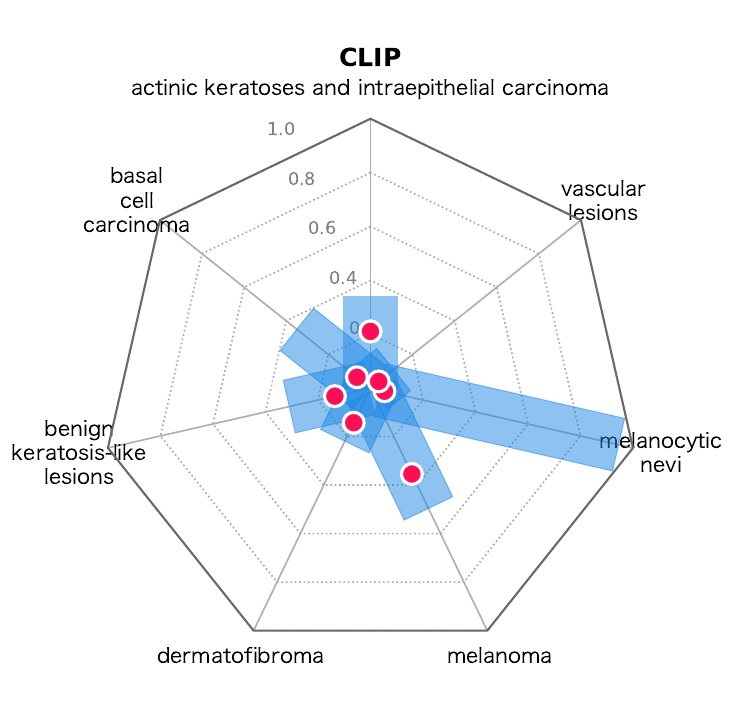}
    \end{minipage}
    \caption{Comparison of credal sets for \model{BiomedCLIP} and \model{CLIP} on \dataset{DermaMNIST} for a melanocytic nevi \textbf{(a)} and a melanoma \textbf{(b)}. 
    While \model{BiomedCLIP} demonstrates higher overall performance than \model{CLIP} (see Table~\ref{tab:clip_performance}), it misclassifies the melanoma with high confidence and low uncertainty, which could be dangerous if applied in medical contexts. Interestingly, \model{CLIP} classifies the melanoma correctly, albeit with greater uncertainty. Both models predict the melanocytic nevi correctly, though \model{BiomedCLIP} shows increased uncertainty toward the related melanoma class, suggesting a more challenging case.}
    \label{fig:appx_biomedclip_vs_clip}
\end{figure}

\clearpage

\section{Ablations}\label{app:ablations}
This section contains additional ablation experiments.

\subsection{Number of Ensemble Members in Out-of-Distribution Detection}
\label{app:ablation_ood_members}

We provide an additional ablation study on the impact of the ensemble size on out-of-distribution performance. \cref{tab:ood_ablation_members} and \cref{fig:ood_ablation_members} demonstrate once more the efficiency of our approach: it requires only a single trained model. In contrast, ensemble-based baselines typically rely on at least five members and benefit from larger ensembles to improve performance.

\begin{table}[h]
    \centering
    \caption{Ablation of different numbers of trained ensemble members for Out-of-Distribution Detection.}
    \begin{tabularx}{\linewidth}{l>{\centering\arraybackslash}X>{\centering\arraybackslash}X>{\centering\arraybackslash}X>{\centering\arraybackslash}X>{\centering\arraybackslash}X>{\centering\arraybackslash}X}
        \toprule
        Method & Members & SVHN & Places365 & CIFAR-100 & FMNIST & ImageNet \\
        \midrule
        $\text{EffCre}_{0.95}$ & 1 & $0.885 \scriptstyle{\pm 0.003}$ & $0.862 \scriptstyle{\pm 0.005}$ & $0.854 \scriptstyle{\pm 0.003}$ & $0.907 \scriptstyle{\pm 0.002}$ & $0.826 \scriptstyle{\pm 0.004}$ \\
        \midrule
        $\text{CreRL}_{0.95}$ & 5 & $0.917 \scriptstyle{\pm 0.012}$ & $0.894 \scriptstyle{\pm 0.002}$ & $0.885 \scriptstyle{\pm 0.002}$ & $0.928 \scriptstyle{\pm 0.004}$ & $0.863 \scriptstyle{\pm 0.002}$ \\ 
        CreWra & 5 & $0.943 \scriptstyle{\pm 0.006}$ & $0.904 \scriptstyle{\pm 0.001}$ & $0.905 \scriptstyle{\pm 0.001}$ & $0.939 \scriptstyle{\pm 0.001}$ & $0.879 \scriptstyle{\pm 0.001}$ \\ 
        $\text{CreEns}_{0.0}$ & 5 & $0.938 \scriptstyle{\pm 0.007}$ & $0.898 \scriptstyle{\pm 0.001}$ & $0.900 \scriptstyle{\pm 0.001}$ & $0.929 \scriptstyle{\pm 0.001}$ & $0.874 \scriptstyle{\pm 0.001}$ \\ 
        CreBNN & 5 & $0.843 \scriptstyle{\pm 0.006}$ & $0.829 \scriptstyle{\pm 0.006}$ & $0.831 \scriptstyle{\pm 0.007}$ & $0.851 \scriptstyle{\pm 0.007}$ & $0.809 \scriptstyle{\pm 0.007}$ \\ 
        $\text{CreNet}$ & 5 & $0.938 \scriptstyle{\pm 0.003}$ & $0.908 \scriptstyle{\pm 0.001}$ & $0.900 \scriptstyle{\pm 0.001}$ & $0.941 \scriptstyle{\pm 0.003}$ & $0.871 \scriptstyle{\pm 0.002}$ \\ 
        \midrule
        $\text{CreRL}_{0.95}$ & 10 & $0.921 \scriptstyle{\pm 0.010}$ & $0.905 \scriptstyle{\pm 0.002}$ & $0.896 \scriptstyle{\pm 0.001}$ & $0.940 \scriptstyle{\pm 0.002}$ & $0.872 \scriptstyle{\pm 0.002}$ \\ 
        CreWra & 10 & $0.953 \scriptstyle{\pm 0.004}$ & $0.911 \scriptstyle{\pm 0.001}$ & $0.912 \scriptstyle{\pm 0.000}$ & $0.948 \scriptstyle{\pm 0.001}$ & $0.886 \scriptstyle{\pm 0.001}$ \\
        $\text{CreEns}_{0.0}$ & 10 & $0.949 \scriptstyle{\pm 0.001}$ & $0.907 \scriptstyle{\pm 0.001}$ & $0.909 \scriptstyle{\pm 0.002}$ & $0.941 \scriptstyle{\pm 0.002}$ & $0.883 \scriptstyle{\pm 0.001}$ \\ 
        CreBNN & 10 & $0.880 \scriptstyle{\pm 0.009}$ & $0.856 \scriptstyle{\pm 0.002}$ & $0.859 \scriptstyle{\pm 0.002}$ & $0.886 \scriptstyle{\pm 0.001}$ & $0.838 \scriptstyle{\pm 0.001}$ \\  
        $\text{CreNet}$ & 10 & $0.944 \scriptstyle{\pm 0.001}$ & $0.915 \scriptstyle{\pm 0.001}$ & $0.908 \scriptstyle{\pm 0.001}$ & $0.949 \scriptstyle{\pm 0.001}$ & $0.881 \scriptstyle{\pm 0.001}$ \\ 
        \midrule
        $\text{CreRL}_{0.95}$ & 20 & $0.917 \scriptstyle{\pm 0.013}$ & $0.910 \scriptstyle{\pm 0.001}$ & $0.901 \scriptstyle{\pm 0.000}$ & $0.945 \scriptstyle{\pm 0.004}$ & $0.878 \scriptstyle{\pm 0.002}$ \\ 
        CreWra & 20 & $0.957 \scriptstyle{\pm 0.003}$ & $0.916 \scriptstyle{\pm 0.001}$ & $0.916 \scriptstyle{\pm 0.000}$ & $0.952 \scriptstyle{\pm 0.000}$ & $0.890 \scriptstyle{\pm 0.001}$ \\
        $\text{CreEns}_{0.0}$ & 20 & $0.955 \scriptstyle{\pm 0.001}$ & $0.913 \scriptstyle{\pm 0.000}$ & $0.914 \scriptstyle{\pm 0.001}$ & $0.949 \scriptstyle{\pm 0.001}$ & $0.888 \scriptstyle{\pm 0.000}$ \\
        CreBNN & 20 & $0.907 \scriptstyle{\pm 0.006}$ & $0.885 \scriptstyle{\pm 0.002}$ & $0.880 \scriptstyle{\pm 0.002}$ & $0.935 \scriptstyle{\pm 0.002}$ & $0.859 \scriptstyle{\pm 0.002}$ \\
        $\text{CreNet}$ & 20 & $0.943 \scriptstyle{\pm 0.003}$ & $0.918 \scriptstyle{\pm 0.000}$ & $0.912 \scriptstyle{\pm 0.000}$ & $0.951 \scriptstyle{\pm 0.002}$ & $0.884 \scriptstyle{\pm 0.001}$ \\
        \bottomrule
    \end{tabularx}
    \label{tab:ood_ablation_members}
\end{table}

\begin{figure}[h]
    \centering
    \begin{subfigure}{0.45\textwidth}
        \centering
        \includegraphics[width=\linewidth]{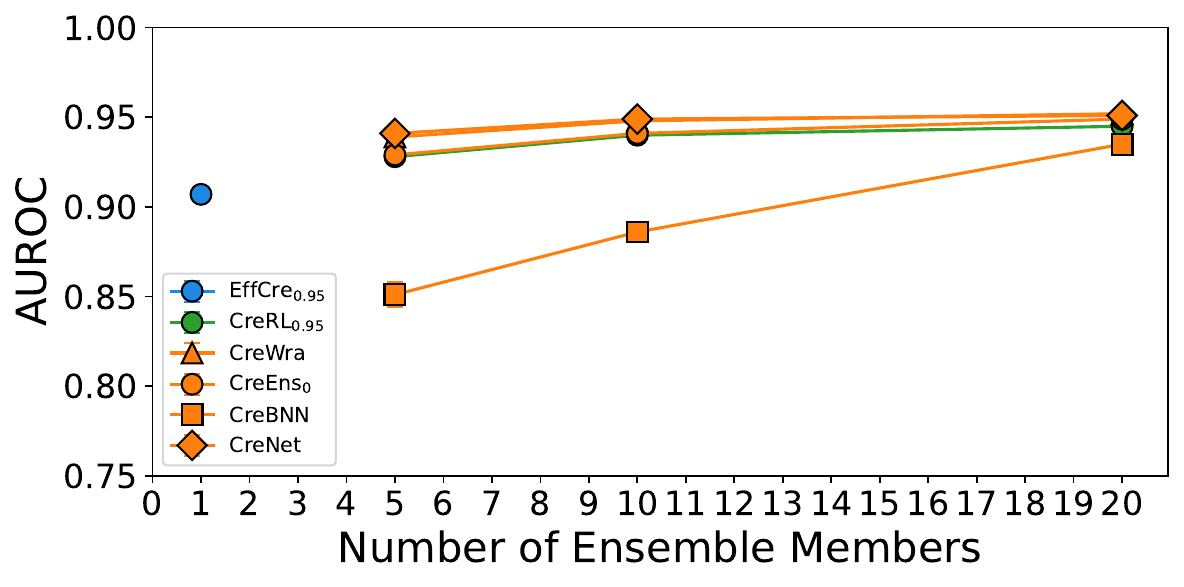}
        \caption{Fashion-MNIST}
    \end{subfigure}
    \hfill
    \begin{subfigure}{0.45\textwidth}
        \centering
        \includegraphics[width=\linewidth]{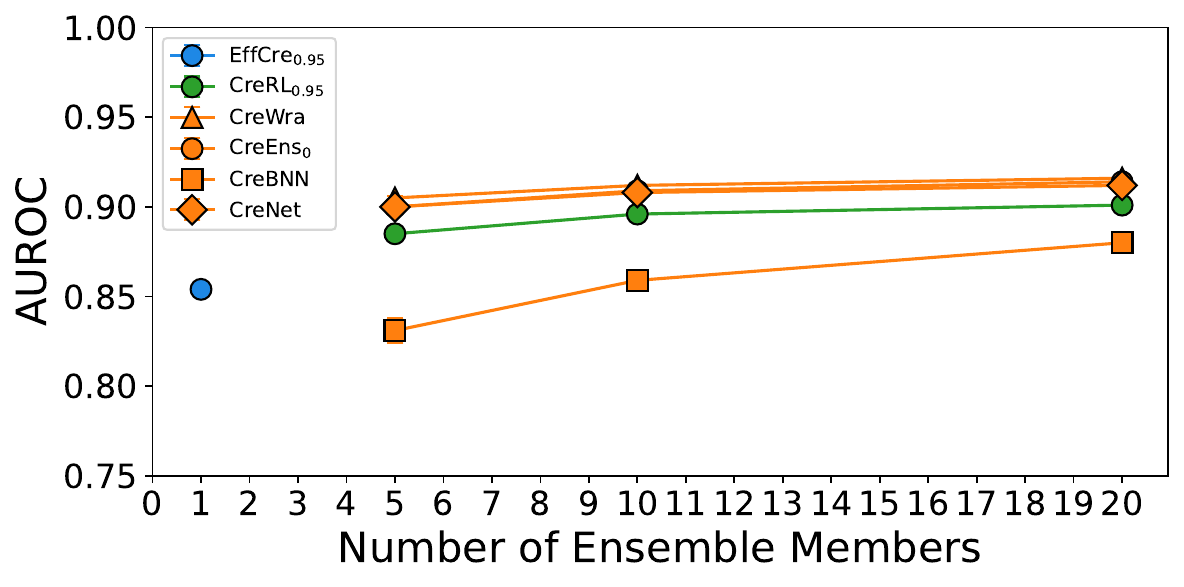}
        \caption{CIFAR-100}
    \end{subfigure}
    
    \vskip\baselineskip
    \begin{subfigure}{0.45\textwidth}
        \centering
        \includegraphics[width=\linewidth]{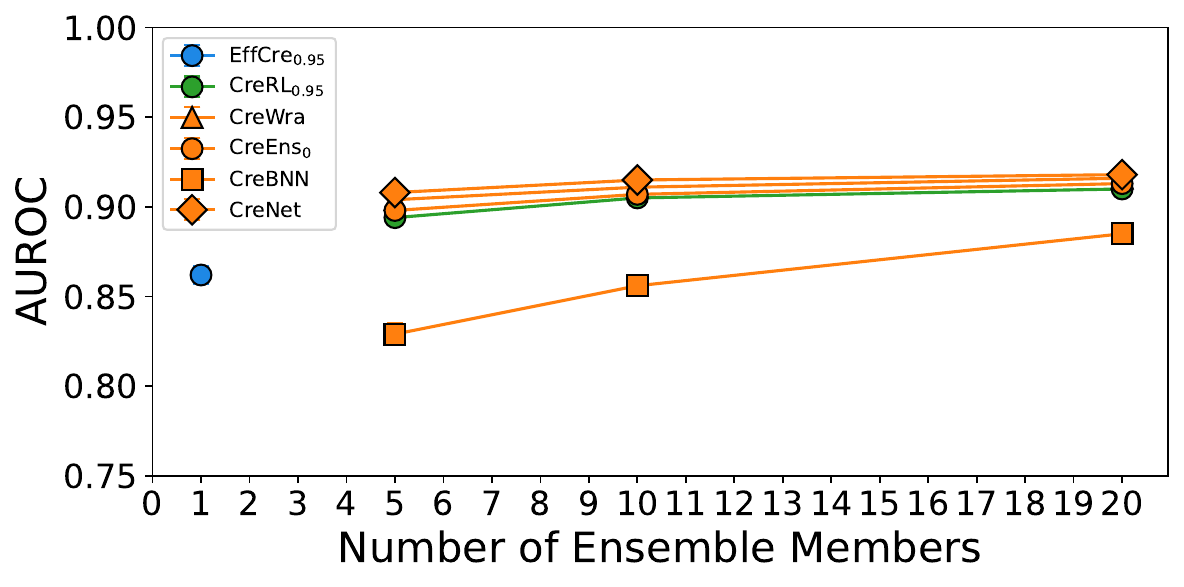}
        \caption{Places365}
    \end{subfigure}
    \hfill
    \begin{subfigure}{0.45\textwidth}
        \centering
        \includegraphics[width=\linewidth]{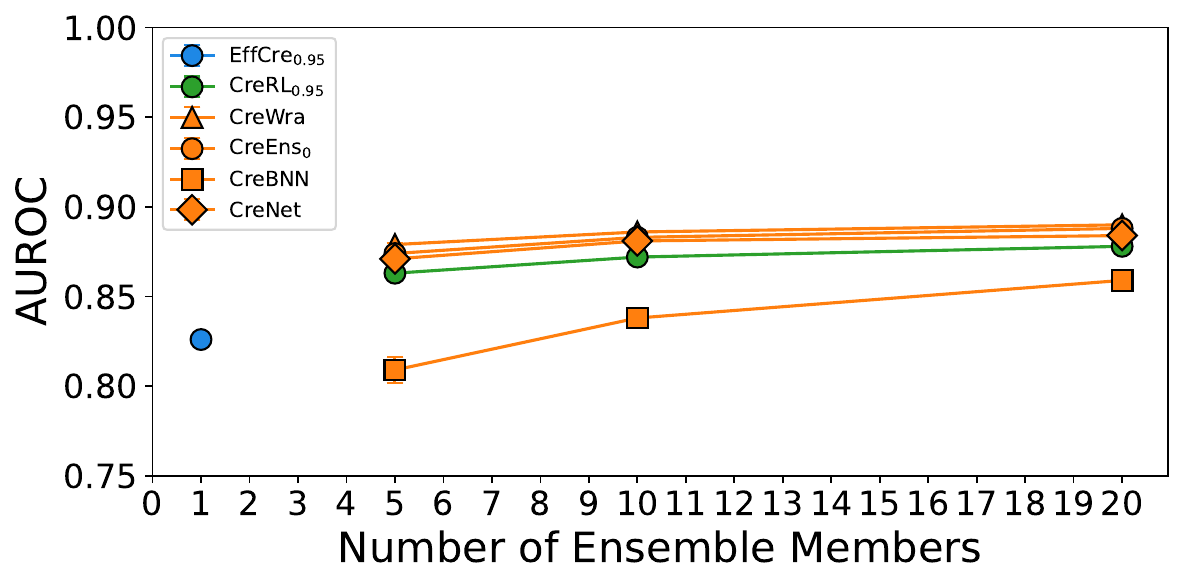}
        \caption{ImageNet}
    \end{subfigure}

    \caption{Out-of-distribution detection performance (based on AUROC score) as a function of ensemble size. CIFAR-10 is the in-distribution data while various datasets are used as OOD data.}
    \label{fig:ood_ablation_members}
\end{figure}

\subsection{$\alpha$-Values for Active In-Context Learning}\label{app:ablation-tabpfn}
We provide an additional ablation on the effect that the $\alpha$-value has on the performance of our method in active in-context learning. We evaluate runs for values $\alpha \in \{0.2,0.4,0.6, 0.8, 0.9, 0.95\}$. In \cref{fig:ablation-active-learning}, we provide the results for the TabArena datasets with OpenML \citep{OpenML2025} id 46941, 46963, and 46930. For the sake of legibility, we only consider the zero-one-loss-based epistemic uncertainty measure \cref{eq:eu-zero-one}.

We observe that higher $\alpha$ values consistently improve performance across all three datasets until the performance converges at $\alpha=0.8$. In particular, lower $\alpha$ values result in larger predicted sets with high epistemic uncertainty, which reduces the meaningful separation between instances. Consequently, the optimal order for selecting instances during active learning is lost when $\alpha$ is small, explaining the drop in performance.
\begin{figure}[t]
    \centering
    \begin{subfigure}{.32\textwidth}
        \centering
        \includegraphics[width=\linewidth]{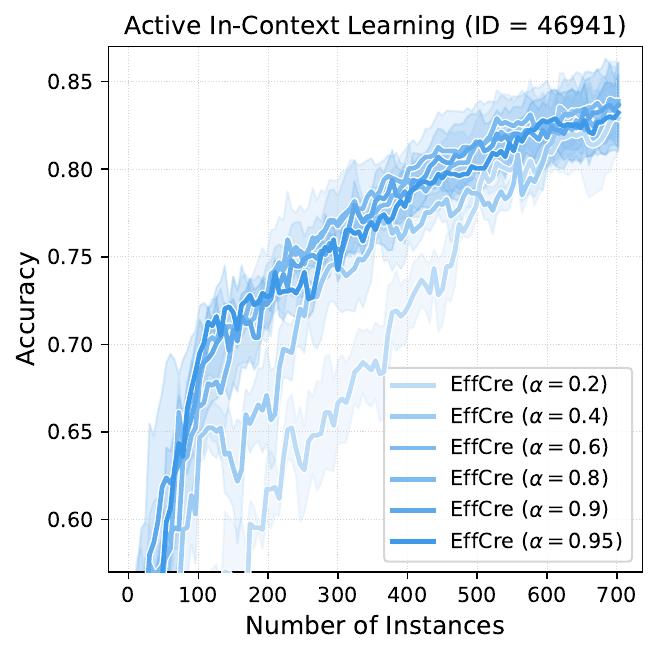}
        \caption{}
    \end{subfigure}
    \begin{subfigure}{.32\textwidth}
        \centering
        \includegraphics[width=\linewidth]{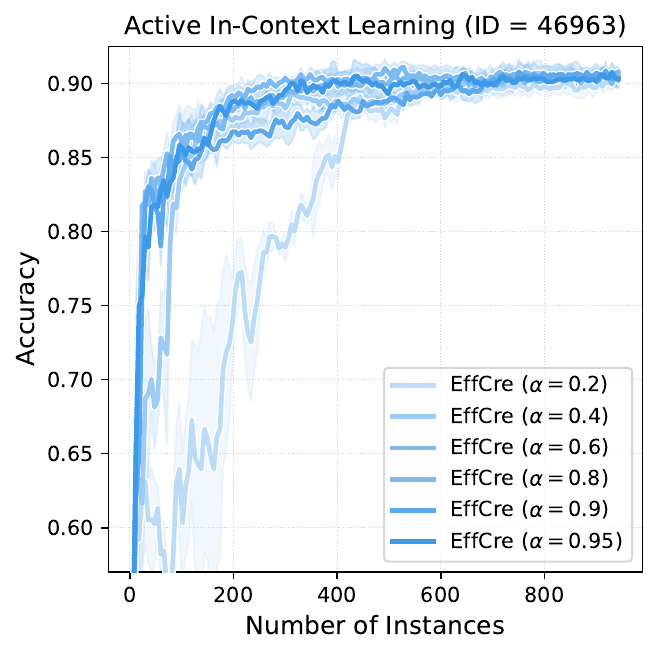}
        \caption{}
    \end{subfigure}
    \begin{subfigure}{.32\textwidth}
        \centering
        \includegraphics[width=\linewidth]{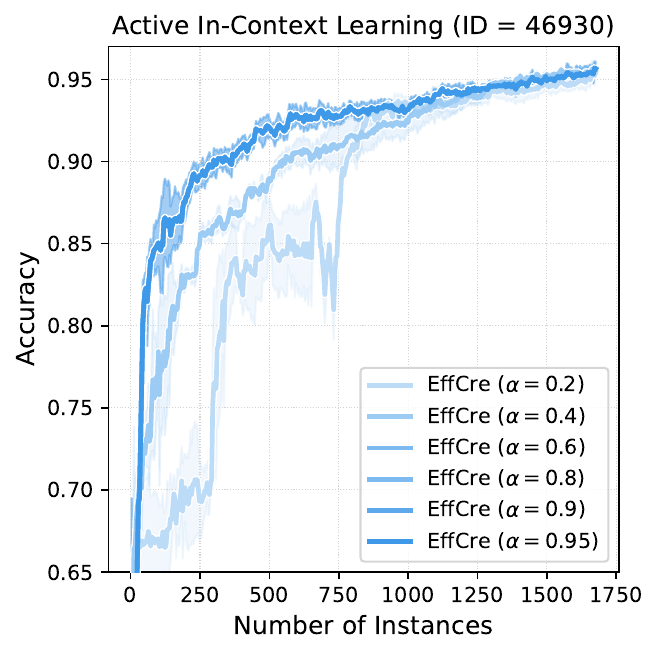}
        \caption{}
    \end{subfigure}
    \caption{\textbf{Active In-Context Learning with \model{TabPFN}.} Performance on TabArena datasets 46941, 46963, 46930 for different values of $\alpha$.}
    \label{fig:ablation-active-learning}
\end{figure}

\subsection{Accuracy and Expected Calibration Score Evaluation for single models}

If we have to commit to a precise probabilistic prediction, a natural choice is to use the maximum likelihood estimate, which is a theoretically well-established approach. If, additionally, a class-wise prediction is sought, the argmax class can be predicted. To provide a sense of the quality of the underlying models trained for our experiments, we report standard supervised-learning metrics, namely accuracy and expected calibration error, for each individual model in \cref{tab:accuracy} based on the original \dataset{CIFAR-10} test set. This serves as a sanity check to ensure a fair comparison with the baselines. For our experiments with \model{TabPFN} and \model{CLIP} models we use the pre-trained models.

\begin{table}[ht]
\centering
\resizebox{\linewidth}{!}{%
\begin{tabular}{l c cc cc cc}
\toprule
\textbf{Method} & \textbf{Model} &
\multicolumn{2}{c|}{\textbf{CIFAR10}} &
\multicolumn{2}{c|}{\textbf{ChaosNLI}} &
\multicolumn{2}{c}{\textbf{QualityMRI}} \\
 & &
ECE & Acc &
ECE & Acc &
ECE & Acc \\
\midrule
EffCre
  & 1  & $0.04\pm0.00$ &$0.93\pm0.00$ &$0.04\pm0.01$ &$0.61\pm0.01$ &$0.30\pm0.05$ &$0.49\pm0.05$ \\
\midrule
\multirow{10}{*}{CreRL$_{0.8}$}
  & 1 & $0.04\pm0.00$ &$0.94\pm0.00$ &$0.09\pm0.03$ &$0.60\pm0.02$ &$0.33\pm0.09$ &$0.48\pm0.04$ \\
  & 2 & $0.06\pm0.01$ &$0.87\pm0.01$ &$0.13\pm0.01$ &$0.57\pm0.01$ &$0.35\pm0.08$ &$0.50\pm0.05$ \\
  & 3 & $0.06\pm0.00$ &$0.87\pm0.00$ &$0.10\pm0.01$ &$0.56\pm0.01$ &$0.37\pm0.10$ &$0.51\pm0.02$ \\
  & 4 & $0.06\pm0.00$ &$0.88\pm0.00$ &$0.09\pm0.01$ &$0.58\pm0.01$ &$0.30\pm0.09$ &$0.49\pm0.03$ \\
  & 5 & $0.06\pm0.00$ &$0.89\pm0.00$ &$0.12\pm0.01$ &$0.60\pm0.00$ &$0.38\pm0.09$ &$0.48\pm0.04$ \\
  & 6 & $0.06\pm0.00$ &$0.89\pm0.00$ &$0.10\pm0.01$ &$0.57\pm0.01$ &$0.30\pm0.08$ &$0.49\pm0.07$ \\
  & 7 & $0.06\pm0.01$ &$0.88\pm0.01$ &$0.09\pm0.01$ &$0.59\pm0.02$ &$0.34\pm0.04$ &$0.51\pm0.04$ \\
  & 8 & $0.06\pm0.01$ &$0.90\pm0.00$ &$0.11\pm0.01$ &$0.60\pm0.01$ &$0.33\pm0.09$ &$0.49\pm0.04$ \\
  & 9 & $0.06\pm0.00$ &$0.90\pm0.01$ &$0.09\pm0.01$ &$0.59\pm0.00$ &$0.35\pm0.10$ &$0.47\pm0.03$ \\
  & 10 & $0.06\pm0.00$ &$0.91\pm0.00$ &$0.09\pm0.01$ &$0.39\pm0.01$ &$0.38\pm0.09$ &$0.49\pm0.04$ \\
\midrule
\multirow{10}{*}{CreWra}
  & 1 & $0.03\pm0.00$ &$0.94\pm0.00$ &$0.12\pm0.02$ &$0.60\pm0.01$ &$0.39\pm0.02$ &$0.48\pm0.03$ \\
  & 2 & $0.03\pm0.00$ &$0.95\pm0.00$ &$0.11\pm0.01$ &$0.60\pm0.01$ &$0.39\pm0.02$ &$0.51\pm0.04$ \\
  & 3 & $0.03\pm0.00$ &$0.94\pm0.00$ &$0.12\pm0.02$ &$0.59\pm0.01$ &$0.38\pm0.02$ &$0.48\pm0.03$ \\
  & 4 & $0.03\pm0.00$ &$0.94\pm0.00$ &$0.15\pm0.01$ &$0.57\pm0.02$ &$0.34\pm0.01$ &$0.49\pm0.02$ \\
  & 5 & $0.03\pm0.00$ &$0.94\pm0.00$ &$0.13\pm0.01$ &$0.55\pm0.01$ &$0.37\pm0.03$ &$0.46\pm0.01$ \\
  & 6 & $0.03\pm0.00$ &$0.94\pm0.00$ &$0.10\pm0.01$ &$0.59\pm0.02$ &$0.38\pm0.04$ &$0.48\pm0.04$ \\
  & 7 & $0.03\pm0.00$ &$0.94\pm0.00$ &$0.12\pm0.01$ &$0.59\pm0.01$ &$0.39\pm0.02$ &$0.46\pm0.05$ \\
  & 8 & $0.03\pm0.00$ &$0.94\pm0.00$ &$0.13\pm0.01$ &$0.59\pm0.01$ &$0.34\pm0.01$ &$0.51\pm0.06$ \\
  & 9 & $0.03\pm0.00$ &$0.94\pm0.00$ &$0.12\pm0.02$ &$0.58\pm0.01$ &$0.33\pm0.02$ &$0.48\pm0.04$ \\
  & 10 & $0.03\pm0.00$ &$0.94\pm0.00$ &$0.10\pm0.01$ &$0.58\pm0.02$ &$0.34\pm0.04$ &$0.47\pm0.04$ \\
\midrule
\multirow{10}{*}{CreEns}
  & 1 & $0.03\pm0.00$ &$0.94\pm0.00$ &$0.12\pm0.02$ &$0.60\pm0.02$ &$0.34\pm0.04$ &$0.47\pm0.04$ \\
  & 2 & $0.03\pm0.00$ &$0.94\pm0.00$ &$0.10\pm0.02$ &$0.60\pm0.01$ &$0.32\pm0.01$ &$0.48\pm0.02$ \\
  & 3 & $0.03\pm0.00$ &$0.94\pm0.00$ &$0.11\pm0.02$ &$0.60\pm0.00$ &$0.34\pm0.04$ &$0.47\pm0.04$ \\
  & 4 & $0.03\pm0.00$ &$0.94\pm0.00$ &$0.14\pm0.01$ &$0.57\pm0.02$ &$0.38\pm0.04$ &$0.48\pm0.04$ \\
  & 5 & $0.03\pm0.00$ &$0.94\pm0.00$ &$0.07\pm0.03$ &$0.59\pm0.08$ &$0.31\pm0.02$ &$0.46\pm0.03$ \\
  & 6 & $0.03\pm0.00$ &$0.94\pm0.00$ &$0.12\pm0.03$ &$0.58\pm0.01$ &$0.33\pm0.02$ &$0.48\pm0.04$ \\
  & 7 & $0.03\pm0.00$ &$0.94\pm0.00$ &$0.09\pm0.03$ &$0.59\pm0.01$ &$0.34\pm0.01$ &$0.49\pm0.02$ \\
  & 8 & $0.03\pm0.00$ &$0.94\pm0.00$ &$0.12\pm0.03$ &$0.59\pm0.01$ &$0.39\pm0.02$ &$0.51\pm0.04$ \\
  & 9 & $0.03\pm0.00$ &$0.94\pm0.00$ &$0.10\pm0.01$ &$0.59\pm0.01$ &$0.37\pm0.02$ &$0.47\pm0.03$ \\
  & 10 & $0.03\pm0.00$ &$0.94\pm0.00$ &$0.11\pm0.04$ &$0.59\pm0.01$ &$0.38\pm0.04$ &$0.48\pm0.04$ \\
\midrule
\multirow{10}{*}{CreNet}
  & 1 & $0.04\pm0.01$ &$0.93\pm0.00$ &$0.11\pm0.02$ &$0.59\pm0.01$ &$0.41\pm0.02$ &$0.47\pm0.02$ \\
  & 2 & $0.03\pm0.00$ &$0.95\pm0.00$ &$0.11\pm0.02$ &$0.59\pm0.01$ &$0.39\pm0.02$ &$0.50\pm0.04$ \\
  & 3 & $0.02\pm0.00$ &$0.95\pm0.01$ &$0.11\pm0.02$ &$0.59\pm0.01$ &$0.38\pm0.03$ &$0.48\pm0.03$ \\
  & 4 & $0.03\pm0.00$ &$0.94\pm0.00$ &$0.15\pm0.01$ &$0.57\pm0.02$ &$0.34\pm0.01$ &$0.49\pm0.02$ \\
  & 5 & $0.03\pm0.00$ &$0.93\pm0.00$ &$0.13\pm0.01$ &$0.55\pm0.01$ &$0.37\pm0.03$ &$0.46\pm0.01$ \\
  & 6 & $0.03\pm0.00$ &$0.94\pm0.00$ &$0.10\pm0.01$ &$0.57\pm0.02$ &$0.38\pm0.04$ &$0.48\pm0.05$ \\
  & 7 & $0.03\pm0.00$ &$0.92\pm0.01$ &$0.12\pm0.01$ &$0.59\pm0.01$ &$0.39\pm0.02$ &$0.47\pm0.03$ \\
  & 8 & $0.02\pm0.01$ &$0.94\pm0.00$ &$0.13\pm0.01$ &$0.59\pm0.00$ &$0.34\pm0.01$ &$0.51\pm0.06$ \\
  & 9 & $0.03\pm0.00$ &$0.94\pm0.02$ &$0.12\pm0.02$ &$0.58\pm0.01$ &$0.33\pm0.02$ &$0.48\pm0.04$ \\
  & 10 & $0.03\pm0.01$ &$0.94\pm0.00$ &$0.10\pm0.01$ &$0.58\pm0.02$ &$0.34\pm0.04$ &$0.48\pm0.03$ \\
\midrule
\multirow{10}{*}{CreBNN}
  & 1 & $0.64\pm0.00$ &$0.87\pm0.00$ &$0.10\pm0.04$ &$0.49\pm0.07$ &$0.15\pm0.14$ &$0.54\pm0.11$ \\
  & 2 & $0.64\pm0.01$ &$0.87\pm0.02$ &$0.09\pm0.03$ &$0.49\pm0.07$ &$0.15\pm0.14$ &$0.54\pm0.11$ \\
  & 3 & $0.65\pm0.00$ &$0.88\pm0.01$ &$0.10\pm0.05$ &$0.49\pm0.08$ &$0.15\pm0.04$ &$0.54\pm0.11$ \\
  & 4 & $0.65\pm0.01$ &$0.88\pm0.01$ &$0.07\pm0.00$ &$0.44\pm0.00$ &$0.17\pm0.12$ &$0.54\pm0.11$ \\
  & 5 & $0.65\pm0.00$ &$0.88\pm0.00$ &$0.07\pm0.00$ &$0.44\pm0.00$ &$0.24\pm0.13$ &$0.54\pm0.12$ \\
  & 6 & $0.65\pm0.01$ &$0.88\pm0.01$ &$0.13\pm0.05$ &$0.55\pm0.08$ &$0.28\pm0.14$ &$0.46\pm0.11$ \\
  & 7 & $0.64\pm0.00$ &$0.87\pm0.00$ &$0.11\pm0.03$ &$0.53\pm0.06$ &$0.14\pm0.15$ &$0.44\pm0.09$ \\
  & 8 & $0.63\pm0.01$ &$0.86\pm0.02$ &$0.13\pm0.04$ &$0.55\pm0.07$ &$0.19\pm0.07$ &$0.53\pm0.13$ \\
  & 9 & $0.63\pm0.01$ &$0.85\pm0.04$ &$0.12\pm0.05$ &$0.53\pm0.07$ &$0.19\pm0.07$ &$0.51\pm0.11$ \\
  & 10 & $0.63\pm0.00$ &$0.86\pm0.01$ &$0.12\pm0.04$ &$0.54\pm0.07$ &$0.17\pm0.12$ &$0.52\pm0.12$ \\
\bottomrule
\end{tabular}
}
\caption{Comparison of ECE and accuracy of single models per method across datasets.}
\label{tab:accuracy}
\end{table}

\end{document}